\documentclass[10pt,letterpaper]{article}
\usepackage[margin=0.95in]{geometry}
\usepackage{times}
\usepackage[round,authoryear]{natbib}

\usepackage[utf8]{inputenc}
\usepackage[T1]{fontenc}
\usepackage{xcolor}
\usepackage{amsmath,amssymb,amsfonts,amsthm,bm,mathtools}
\usepackage{graphicx}
\usepackage{float}
\usepackage{placeins}
\usepackage{booktabs}
\usepackage{multirow}
\usepackage{tabularx}
\usepackage{array}
\usepackage{makecell}
\usepackage{enumitem}
\usepackage[ruled,vlined]{algorithm2e}
\usepackage{microtype}
\usepackage{hyperref}
\hypersetup{hidelinks}
\usepackage{url}

\hypersetup{
  pdftitle={Causal Localization of Evidence Effects in Transformers},
  pdfauthor={Jipeng Li, Pramod Khargonekar, Yanning Shen},
  pdfsubject={Mechanistic interpretability of transformers}
}

\numberwithin{equation}{section}

\theoremstyle{plain}
\newtheorem{theorem}{Theorem}[section]
\newtheorem{proposition}[theorem]{Proposition}

\theoremstyle{definition}

\newtheorem{assumption}[theorem]{Assumption}

\theoremstyle{remark}

\title{Causal Localization of Evidence Effects in Transformers}

\author{Jipeng Li \and Pramod Khargonekar \and Yanning Shen}
\date{}

\newcommand{\ResBaselineFrontDifference}{\ensuremath{+0.006}}
\newcommand{\ResBaselineBackDifference}{\ensuremath{+0.030}}
\newcommand{\ResBaselineInterleaveDifference}{\ensuremath{+0.031}}
\newcommand{\ResBaselineRandomDifference}{\ensuremath{+0.039}}
\newcommand{\ResBaselineRandomDifferenceCI}{\ensuremath{[+0.015,\,+0.065]}}
\newcommand{\ResEqualBudgetTransferRange}{\(-0.84\) to \(+0.89\)}
\newcommand{\ResCueShrinkRange}{0.64--0.93}
\newcommand{\ResCueMatchRange}{0.70--0.96}
\newcommand{\ResReadoutEvaluableCount}{960}
\newcommand{\ResReadoutAllPassRate}{0.060}
\newcommand{\ResReadoutEvaluablePassRate}{0.064}
\newcommand{\ResPythiaSmallAccuracy}{0.752}
\newcommand{\ResPythiaSmallPairAccuracy}{0.566}
\newcommand{\ResPythiaSmallMinPairAccuracy}{0.541}
\newcommand{\ResPythiaSmallTransferShrink}{0.401}
\newcommand{\ResPythiaSmallTransferMatch}{0.442}
\newcommand{\ResPythiaSmallTransferRatio}{0.520}
\newcommand{\ResPythiaSmallAccuracyDrop}{7.4}
\newcommand{\ResPythiaLargeAccuracy}{0.805}
\newcommand{\ResPythiaLargePairAccuracy}{0.651}
\newcommand{\ResPythiaLargeMinPairAccuracy}{0.611}
\newcommand{\ResPythiaLargeTransferShrink}{0.462}
\newcommand{\ResPythiaLargeTransferMatch}{0.501}
\newcommand{\ResPythiaLargeTransferRatio}{0.557}
\newcommand{\ResPythiaLargeAccuracyDrop}{8.6}
\newcommand{\ResGPTTwoAccuracy}{0.681}
\newcommand{\ResGPTTwoPairAccuracy}{0.458}
\newcommand{\ResGPTTwoMinPairAccuracy}{0.414}
\newcommand{\ResGPTTwoTransferShrink}{0.306}
\newcommand{\ResGPTTwoTransferMatch}{0.348}
\newcommand{\ResGPTTwoTransferRatio}{0.438}
\newcommand{\ResGPTTwoAccuracyDrop}{4.9}
\newcommand{\ResNullCompletedTrials}{99}
\newcommand{\ResNullObservedStatistic}{2.69}
\newcommand{\ResNullObservedStatisticCI}{\ensuremath{[2.55,\,2.80]}}
\newcommand{\ResNullAdjustedP}{0.01}
\newcommand{\ResFrontOperatorMinShrink}{0.81}
\newcommand{\ResFrontOperatorMinMatch}{0.84}
\newcommand{\ResFrontTThreeNegativePercent}{31}
\newcommand{\ResFrontTThreeMinGamma}{\ensuremath{-5.8}}
\newcommand{\ResFrontTThreeMaxRegret}{0.021}
\newcommand{\ResBackOperatorMinShrink}{0.52}
\newcommand{\ResBackOperatorMinMatch}{0.57}
\newcommand{\ResBackTThreeNegativePercent}{17}
\newcommand{\ResBackTThreeMinGamma}{\ensuremath{-2.6}}
\newcommand{\ResBackTThreeMaxRegret}{0.043}
\newcommand{\ResInterleaveOperatorMinShrink}{0.80}
\newcommand{\ResInterleaveOperatorMinMatch}{0.81}
\newcommand{\ResInterleaveTThreeNegativePercent}{28}
\newcommand{\ResInterleaveTThreeMinGamma}{\ensuremath{-4.9}}
\newcommand{\ResInterleaveTThreeMaxRegret}{0.027}
\newcommand{\ResRandomOperatorMinShrink}{0.71}
\newcommand{\ResRandomOperatorMinMatch}{0.68}
\newcommand{\ResRandomTThreeNegativePercent}{22}
\newcommand{\ResRandomTThreeMinGamma}{\ensuremath{-3.9}}
\newcommand{\ResRandomTThreeMaxRegret}{0.035}

\newcommand{\ResPythiaSmallTREstimableCount}{11}
\newcommand{\ResPythiaSmallTRSuppressedCount}{1}
\newcommand{\ResPythiaSmallTRUnavailableCount}{1}
\newcommand{\ResPythiaSmallTRRetainedCount}{10}
\newcommand{\ResPythiaLargeTREstimableCount}{12}
\newcommand{\ResPythiaLargeTRSuppressedCount}{0}
\newcommand{\ResPythiaLargeTRUnavailableCount}{0}
\newcommand{\ResPythiaLargeTRRetainedCount}{12}
\newcommand{\ResGPTTwoTREstimableCount}{9}
\newcommand{\ResGPTTwoTRSuppressedCount}{3}
\newcommand{\ResGPTTwoTRUnavailableCount}{3}
\newcommand{\ResGPTTwoTRRetainedCount}{6}

\begin{document}
\maketitle

\begin{abstract}
Can controlled symbolic inputs reveal attention heads that influence a language model's decisions from natural language evidence?
We introduce controlled evidence aggregation (CEA), which varies the counts and order of two symbols in otherwise fixed prompts and measures changes in a frozen Transformer's preference between two answer tokens.
The preference is measured by their logit difference; the prompts prescribe no correct answer.
Balanced utility causal selection (BUC) identifies compact head sets by jointly evaluating how much ablating their outputs removes the count effect and how closely patching mean outputs between count conditions recovers the effect in both directions.
We derive conditional bounds linking head attribution to intervention effects and specifying when the attribution shortlist can be recovered from finite samples.
In Pythia-410M, ablating sets of 1--12 heads selected separately for four ordering regimes removes 65.5--91.6\% of the count effect.
The sets retain their influence on count comparisons not used for selection.
To test relevance beyond symbolic inputs, we evaluate a natural language task asking which state is supported by more reports.
Ablating eight heads selected only on symbolic \texttt{yes/no} prompts lowers mean accuracy by 8.2 percentage points, versus 0.8 for random sets matched by size and layer.
Our results demonstrate that controlled symbolic probes can identify compact attention head sets whose causal influence extends to natural language decisions.
\end{abstract}
\section{Introduction}
\label{sec:intro}

Understanding how language models use evidence requires identifying internal components that connect input changes to answer changes \citep{elhage2021framework,bereska2024review,rai2024practical}. Natural language reports pose a control problem: changing the balance of support can change wording and invoke different learned associations. Symbolic inputs allow more restricted changes, but the identified components' relevance to language behavior remains an empirical question. Prior studies demonstrate component reuse across tasks and prompt variants \citep{merullo2024reuse,nainani2024adaptive}. We ask: \emph{Can controlled symbolic inputs identify attention heads whose causal influence extends to decisions from natural language evidence?}

We introduce controlled evidence aggregation (CEA), which varies the counts and order of \(B\) and \(C\) in prompts to a frozen Transformer while fixing the template, symbol identities, sequence length, and local answer context. The \textsc{Front} ordering, for example, compares
\[
\texttt{50 B \;|\; 150 C \;|\; 50 A}
\qquad\longleftrightarrow\qquad
\texttt{150 B \;|\; 50 C \;|\; 50 A}.
\]
Numbers denote repetition counts; both sequences share a tail of 50 \(A\) symbols. Both prompts request either \texttt{various} or \texttt{successful}. Their logit difference measures answer preference; its change between count conditions is the \emph{count effect}. Neither answer is designated correct. CEA therefore supplies a controlled response to localize without training a symbolic rule or assuming pretrained associations are absent.

To localize the count effect, balanced utility causal selection (BUC) evaluates compact head sets through two complementary interventions at the final prompt position. Ablation zeros selected outputs and measures how much the count effect decreases. Mean patching replaces them with averages from the other count condition and measures how closely replacement reproduces the count effect in both directions. Removal and recovery are distinct requirements, consistent with prior work on intervention and evaluation choices \citep{miller2024faithfulness,shi2024hypothesis}. BUC uses both scores in greedy search, evaluating sets jointly rather than summing individual head effects. Reuse tests keep selected heads fixed; reselection for each target would test a different claim.

Our theory separates attribution accuracy from selection uncertainty. Under a bound on attribution mismatch, we bound errors in predicting removal and recovery scores from head attribution. We also derive finite sample conditions for recovering the population attribution shortlist. A separate greedy bound applies under monotonicity and weak submodularity to a weighted sum of unnormalized intervention gains, whereas BUC optimizes normalized removal and recovery scores.

To test symbolic localization on labeled behavior, we introduce a natural language (NL) task with sixteen reports supporting two mutually exclusive states. Affirmative and negated reports balance total state word frequencies: \texttt{the item is open} and \texttt{the item is not closed}, for example, both support \texttt{open}. The model answers whether one state has more supporting reports than the other; reversing the comparison reverses the correct \texttt{yes/no} answer. At equal head budget, we compare sets selected only on symbolic \texttt{yes/no} prompts, sets selected on NL prompts, and random sets matched by layer. Patching in the expanded study uses NL donor outputs, so transfer concerns head identities rather than unchanged activation vectors.

In Pythia-410M, ablating sets of 1--12 heads selected separately for four ordering regimes removes 65.5--91.6\% of the count effect. The same sets affect count comparisons not used for selection (Section~\ref{sec:count_generalization}). In the original NL evaluation, ablating eight heads selected only on symbolic \texttt{yes/no} prompts lowers mean accuracy by 8.2 percentage points, versus 0.8 for matched random sets (Section~\ref{sec:nl_bridge}). Further tests assess reuse across orderings, answer tokens, and replacement operators. The results show that controlled symbolic probes can identify compact attention head sets whose causal influence extends beyond selection prompts to predictions in a labeled language task.
\section{Related work}
\label{sec:related}

\paragraph{Behavioral targets.}
Studies of induction, factual recall, indirect-object identification, and task vectors localize components in pretrained models \citep{olsson2022induction,geva2021transformer,meng2022locating,wang2023interpretability,hendel2023incontext,todd2024function}. Task-trained symbolic models permit comparison with a designed algorithm \citep{brinkmann2024mechanistic}; structured tasks also reveal compositional failures \citep{dziri2023faith}. CEA controls count and order without training a symbolic rule, then tests fixed-set effects on labeled evidence.

\paragraph{Circuit reuse and evaluation.}
Cross-task reuse \citep{merullo2024reuse}, prompt-variant generalization \citep{nainani2024adaptive}, ablation dependence \citep{miller2024faithfulness}, and tests of preservation, localization, and minimality \citep{shi2024hypothesis} establish distinct aspects of circuit validity. CEA examines which controlled changes preserve the effect of the same head set selected for a count-dependent response, separating this question from successful reselection.

\paragraph{Selection methods.}
Activation and path patching test a supplied behavioral target \citep{goldowskydill2023path,zhang2024best}. Automated circuit discovery (ACDC) uses intervention-based search; edge attribution patching (EAP) estimates relevant edges \citep{conmy2023acdc,syed2023attribution}. BUC scores set-level removal and recovery in both directions; equal head budgets do not equate search cost or intervention granularity. Appendix~\ref{app:related_boundary} provides extended comparisons of behavioral targets, selection methods, circuit reuse, and in-context learning interpretations.

\section{Controlled target and causal tests}
\label{sec:prelim}

\subsection{Symbolic CEA target}

Let \(f_\theta\) be a frozen autoregressive Transformer and \(x=(x_1,\ldots,x_T)\) a tokenized prompt. The logit \(\ell_T(w\mid x)\) at the final prompt position predicts answer token \(w\). For two audited single-token candidates \(w_+\) and \(w_-\), define the readout
\begin{equation}
y(x)=\ell_T(w_+\mid x)-\ell_T(w_-\mid x).
\label{eq:readout}
\end{equation}
The logit difference \(y(x)\) is also the log-odds after restricting the softmax to these two tokens. The candidate order specifies a measurement direction, not correct labels.

The canonical prompt has a fixed prefix, \(L_E=200\) evidence symbols, a tail of \(L_A=50\) copies of \(A\), and a fixed answer cue. The evidence region contains \(b\) copies of \(B\) and \(L_E-b\) copies of \(C\). Its ordering regime belongs to $\mathcal R=\{\textsc{Front},\textsc{Back},\textsc{Interleave},\textsc{Random}\}$.

\textsc{Front} puts all \(B\) symbols first; \textsc{Back} puts all \(C\) symbols first. \textsc{Interleave} starts with \(B\), alternates while both symbols remain, and appends the excess. \textsc{Random} samples a permutation with the specified counts. Let \(D_{r,b}\) be the prompt distribution for regime \(r\) and count \(b\). The first three distributions are point masses at each fixed count. For counts \(b_H>b_L\) and an integrable scalar function \(g\), define the contrast operator and clean output contrast
\begin{align}
\mathcal C_r^{b_H,b_L}[g]
&=\mathbb E_{D_{r,b_H}}[g(x)]-\mathbb E_{D_{r,b_L}}[g(x)],
\label{eq:contrast_operator}\\
\Delta_r&=\mathcal C_r^{b_H,b_L}[y].
\label{eq:evidence_effect}
\end{align}
A nonzero \(\Delta_r\) is a response to this count manipulation under ordering rule \(r\). Count changes also change symbol positions; the design does not identify count independently of every positional feature. Separate comparisons hold the counts fixed and change ordering.

\subsection{Head attribution and intervention unit}

Let \(d_{\mathrm{model}}\) be the residual width and \(\mathcal H\) the set of attention heads. Let \(v_i(x)\in\mathbb R^{d_{\mathrm{model}}}\) be head \(i\)'s write to the residual stream at the final prompt position, after its output projection. For token unembedding vectors \(W_U^{(w_+)}\) and \(W_U^{(w_-)}\), define
\begin{align}
d&=W_U^{(w_+)}-W_U^{(w_-)},
\label{eq:readout_direction}\\
s_i(x)&=\langle v_i(x),d\rangle,
\qquad \Delta s_i(r)=\mathcal C_r^{b_H,b_L}[s_i],
\label{eq:head_sensitivity}\\
F(H,r)&=\sum_{i\in H}\Delta s_i(r),\qquad H\subseteq\mathcal H.
\label{eq:prelim_F}
\end{align}
The attribution contrast \(\Delta s_i(r)\) ranks individual heads; cumulative attribution \(F(H,r)\) sums those contrasts. Neither is assumed to decompose the true intervention effect: final normalization and downstream interactions can be nonlinear.

\textbf{Ablation} sets the selected writes to zero. \textbf{Mean patching} replaces each selected write by its mean over clean donors from the opposite count. All selected heads are intervened on in one forward pass, including sets spanning layers. Unselected components and downstream computation remain active. Thus recovery is an effect of replacement within the full model, not proof that the selected heads form a sufficient subnetwork. Removal alone can reflect disruption without showing a controlled change toward the opposite count. Conversely, patching can alter the response even when other components preserve the contrast after ablation. BUC therefore evaluates both tests.

Let \(\Delta_r^{\mathrm{abl}}(H)\) be the output contrast after ablating set \(H\). High-to-low (HL) patching acts on low-count targets; its oriented gain \(\Gamma_r^{\mathrm{HL}}(H)\) is patched minus clean. Low-to-high (LH) patching acts on high-count targets; its oriented gain \(\Gamma_r^{\mathrm{LH}}(H)\) is clean minus patched. For \(\Delta_r\ne0\), define
\begin{align}
\mathrm{shrink}(H,r)&=1-\frac{|\Delta_r^{\mathrm{abl}}(H)|}{|\Delta_r|},
\label{eq:shrink}\\
\mathrm{match}^{\tau}(H,r)&=1-\frac{|\Delta_r-\Gamma_r^\tau(H)|}{|\Delta_r|},
\quad \tau\in\{\mathrm{HL},\mathrm{LH}\},
\label{eq:match_dir}\\
\mathrm{match}_{\mathrm{avg}}(H,r)&=\tfrac12\bigl(\mathrm{match}^{\mathrm{HL}}(H,r)+\mathrm{match}^{\mathrm{LH}}(H,r)\bigr).
\label{eq:match_avg}
\end{align}
Shrink measures removal; match measures recovery. A score of one means zero mismatch; zero means no improvement over the clean mismatch. Match penalizes overshoot. Both scores can be negative. Mean match can hide a weak direction, so directional values are reported separately. ``Necessity'' and ``sufficiency'' refer only to these operational tests, not to an isolated circuit.

\subsection{BUC selection and evaluation}

BUC ranks all heads by \(|\widehat{\Delta s}_i(r)|\), screens the attribution shortlist with single-head interventions, and runs forward greedy search in the resulting candidate pool \(\mathcal P_r\). For a weight \(\lambda\in[0,1]\), the set objective is
\begin{equation}
Q_\lambda(H,r)=\lambda\,\mathrm{shrink}(H,r)
 +(1-\lambda)\,\mathrm{match}_{\mathrm{avg}}(H,r).
\label{eq:buc_objective}
\end{equation}
Each candidate is evaluated jointly with the current set, so its value can change as heads are added. The search accounts for measured interactions that a fixed ranking of individual effects omits. The main configuration uses \(\lambda=1/2\) and at most 12 heads. The FP32 rule takes the first prefix with shrink \(\ge0.85\) and mean match \(\ge0.70\), or returns the full budget with a failed-threshold flag. Search also records all 12 prefixes for fixed-budget comparisons. Layer-matched controls preserve set size and the number of heads per layer.

Selection and evaluation use separate batches, but deterministic regimes repeat inputs. Exceeding controls, passing both thresholds, and predicting NL labels correctly are distinct criteria. Interpretation depends on the model, inputs, readout, intervention position, and donor rule.

\section{Conditional bounds for attribution and selection}
\label{sec:theory}

The results separate attribution error, shortlist estimation, and optimization over a screened pool. Each requires its own assumptions; success at one step does not establish the conditions for the next.

Fix regime \(r\) and counts \(b_H>b_L\). The clean contrast \(\Delta_r\) and cumulative attribution \(F(H,r)\) are defined in Section~\ref{sec:prelim}. For a head set \(H\), define the signed ablation gain, mean oriented patch gain, and raw joint utility
\begin{equation}
\begin{aligned}
N(H,r)&=\Delta_r-\Delta_r^{\mathrm{abl}}(H),\qquad
S(H,r)=\tfrac12\!\sum_{\tau\in\{\mathrm{HL},\mathrm{LH}\}}\!\Gamma_r^\tau(H),\\
J_\lambda(H,r)&=\lambda N(H,r)+(1-\lambda)S(H,r).
\end{aligned}
\label{eq:theory_joint}
\end{equation}
Unlike the normalized objective \(Q_\lambda\), raw utility \(J_\lambda\) has no absolute-value penalty for overshooting the target contrast.

\begin{assumption}[Bounded attribution mismatch]
\label{assump:intervention_fidelity}
For \(a\in\{b_H,b_L\}\), let \(\mu_i^a(r)=\mathbb E_{D_{r,a}}[s_i(x)]\) be a donor's mean attribution for donor count \(a\). Write the intervention residuals as
\begin{align}
y^{\mathrm{abl}(H)}(x)
&=y(x)-\sum_{i\in H}s_i(x)+\xi_H^{\mathrm{abl}}(x),
\label{eq:assump_abl}\\
y^{\mathrm{pat}(H;a)}(x)
&=y(x)+\sum_{i\in H}(\mu_i^a(r)-s_i(x))+\xi_H^{a}(x).
\label{eq:assump_patch_unified}
\end{align}
Assume the absolute ablation residual contrast and the absolute mean patch residual on each opposite-count target are at most \(\varepsilon_{\mathrm{int}}(H,r)\). The residual identities are definitional; a small error bound is an additional condition, not a consequence of residual-stream additivity.
\end{assumption}

\begin{theorem}[T1: attribution error and intervention scores]
\label{thm:t1}\label{thm:causal_kernel}
Under Assumption~\ref{assump:intervention_fidelity},
\begin{equation}
\max\!\left\{|N(H,r)-F(H,r)|,
\max_{\tau\in\{\mathrm{HL},\mathrm{LH}\}}|\Gamma_r^\tau(H)-F(H,r)|\right\}
\le\varepsilon_{\mathrm{int}}(H,r).
\label{eq:t1_unified}
\end{equation}
The same bound holds for \(|S-F|\) and \(|J_\lambda-F|\). If \(\Delta_r\ne0\), then
\begin{equation}
\left|Q_\lambda(H,r)-\left(1-\frac{|\Delta_r-F(H,r)|}{|\Delta_r|}\right)\right|
\le\frac{\varepsilon_{\mathrm{int}}(H,r)}{|\Delta_r|}.
\label{eq:t1_normalized}
\end{equation}
\end{theorem}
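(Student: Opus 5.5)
The plan is to apply the contrast operator \(\mathcal C_r^{b_H,b_L}\) to the residual identities of Assumption~\ref{assump:intervention_fidelity} and use its linearity. Everything reduces to bookkeeping, with \(F(H,r)\) appearing as \(\sum_{i\in H}\Delta s_i(r)\).

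\emph{Ablation.} Applying \(\mathcal C_r^{b_H,b_L}\) to \eqref{eq:assump_abl} gives
\[
\Delta_r^{\mathrm{abl}}(H)=\Delta_r-F(H,r)+\mathcal C_r^{b_H,b_L}[\xi_H^{\mathrm{abl}}].
\]
Hence \(N(H,r)-F(H,r)=-\mathcal C_r^{b_H,b_L}[\xi_H^{\mathrm{abl}}]\), whose absolute value is at most \(\varepsilon_{\mathrm{int}}\) by assumption.

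\emph{Patching.} HL patching uses donors of count \(b_H\) on low-count targets. Taking the expectation of \eqref{eq:assump_patch_unified} over \(D_{r,b_L}\) with \(a=b_H\), the oriented gain (patched minus clean) is
\[
\Gamma_r^{\mathrm{HL}}(H)=\sum_{i\in H}\bigl(\mu_i^{b_H}-\mu_i^{b_L}\bigr)+\mathbb E_{D_{r,b_L}}[\xi_H^{b_H}].
\]
Since \(\mu_i^{b_H}-\mu_i^{b_L}=\Delta s_i(r)\), this equals \(F(H,r)\) plus a residual bounded by \(\varepsilon_{\mathrm{int}}\). Symmetrically, for LH, clean minus patched on \(D_{r,b_H}\) with \(a=b_L\) equals \(\sum_{i\in H}(\mu_i^{b_H}-\mu_i^{b_L})-\mathbb E_{D_{r,b_H}}[\xi_H^{b_L}]\). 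This also equals \(F\) up to \(\varepsilon_{\mathrm{int}}\). Together these give \eqref{eq:t1_unified}.

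\emph{Averages.} The differences \(S-F\) and \(J_\lambda-F\) are convex combinations of the three differences already bounded, because the weights \(\tfrac12,\tfrac12\) and \(\lambda,1-\lambda\) sum to one. The triangle inequality then gives the same bound.

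\emph{Normalized bound.} Let \(G=1-|\Delta_r-F|/|\Delta_r|\). By the ablation identity, \(\mathrm{shrink}=1-|\Delta_r-N|/|\Delta_r|\). The reverse triangle inequality gives
\[
\bigl||\Delta_r-N|-|\Delta_r-F|\bigr|\le|N-F|\le\varepsilon_{\mathrm{int}},
\]
so \(|\mathrm{shrink}-G|\le\varepsilon_{\mathrm{int}}/|\Delta_r|\). The same argument applied to each \(\mathrm{match}^\tau=1-|\Delta_r-\Gamma^\tau|/|\Delta_r|\) bounds \(|\mathrm{match}^\tau-G|\). Averaging bounds \(|\mathrm{match}_{\mathrm{avg}}-G|\). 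The convex combination with weights \(\lambda,1-\lambda\) then gives \eqref{eq:t1_normalized}.

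\emph{Main obstacle.} The only delicate point is interpretive: making the phrase ``absolute mean patch residual on each opposite-count target'' precise as the target-distribution expectation of \(\xi_H^a\). Two further details need care. First, the signs in the LH orientation must be handled correctly. Second, the donor mean \(\mu_i^a\) is an expectation over the donor distribution, not the target one; this is exactly what converts the patch into \(\Delta s_i(r)\). For the \textsc{Random} regime I will state this with expectations, with no point-mass simplification.
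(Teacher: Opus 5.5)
Your proposal is correct and follows the paper's proof essentially step for step. You take the count contrast of the ablation identity. You take target-distribution expectations of the patch identity, so that the donor means yield \(\Delta s_i(r)\) with the correct sign in the LH direction. You bound \(S\) and \(J_\lambda\) as convex combinations. Finally, you apply the reverse-triangle Lipschitz bound to each normalized score and average with weights \(\lambda,(1-\lambda)/2,(1-\lambda)/2\).

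Your reading of the patch residual condition, as a bound on \(|\mathbb E_{D_{r,b_L}}\xi_H^{b_H}|\) and \(|\mathbb E_{D_{r,b_H}}\xi_H^{b_L}|\), is exactly how the paper makes Assumption~\ref{assump:intervention_fidelity} precise in Equation~\ref{eq:app_intervention_error}.
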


Even with small mismatch, strong removal and recovery require cumulative attribution \(F(H,r)\) to be close to the clean contrast \(\Delta_r\), not merely large in magnitude. Weak contrasts amplify normalized error. Appendix~\ref{app:proof_t1} proves T1; Appendix~\ref{app:metric_corollaries} gives the separate shrink and match bounds. A conditional position-profile approximation and its transfer bound are in Appendix~\ref{app:profile_kernel_derivation}. Similar attribution across regimes alone does not guarantee similar normalized scores.

\begin{theorem}[T2: recovery of the attribution shortlist]
\label{thm:t2}
Let \((x_j^H,x_j^L)_{j=1}^n\) be \(n\ge1\) independent matched pairs with marginals \(D_{r,b_H}\) and \(D_{r,b_L}\). For each head \(i\), suppose the centered paired score
\(Z_{i,j}=s_i(x_j^H)-s_i(x_j^L)-\Delta s_i(r)\) satisfies
\(\mathbb E e^{tZ_{i,j}}\le e^{t^2\nu_r^2/2}\) for every real \(t\).
Define \(\widehat{\Delta s}_i(r)=n^{-1}\sum_j[s_i(x_j^H)-s_i(x_j^L)]\). For \(\delta\in(0,1)\), with probability at least \(1-\delta\),
\begin{equation}
\max_{i\in\mathcal H}|\widehat{\Delta s}_i(r)-\Delta s_i(r)|
\le\eta_{n,\delta}(r):=\nu_r\sqrt{\frac{2\log(2|\mathcal H|/\delta)}{n}}.
\label{eq:t2_uniform}
\end{equation}
For \(1\le m<|\mathcal H|\), an absolute population-score gap greater than \(2\eta_{n,\delta}(r)\) between ranks \(m\) and \(m+1\) guarantees exact recovery of the top-\(m\) attribution shortlist on the same event.
\end{theorem}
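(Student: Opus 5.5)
The argument is a standard sub-Gaussian concentration bound followed by a union bound over heads and a deterministic ranking argument on the resulting event.

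\textbf{Step 1: per-head tail bound.} Fix a head \(i\in\mathcal H\). The estimation error can be written as an average of centered paired scores,
\[
\widehat{\Delta s}_i(r)-\Delta s_i(r)=n^{-1}\sum_{j}Z_{i,j}.
\]
Pairs are independent across \(j\), so for each \(i\) the variables \(Z_{i,1},\dots,Z_{i,n}\) are independent. The within-pair dependence between \(x_j^H\) and \(x_j^L\) does not matter, because the hypothesis is placed directly on the paired difference. Each \(Z_{i,j}\) has mean zero: the marginals are \(D_{r,b_H}\) and \(D_{r,b_L}\), and \(\Delta s_i(r)=\mathcal C_r^{b_H,b_L}[s_i]\).

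Multiply the moment generating functions to get
\[
\mathbb E\exp\Bigl(t\sum_j Z_{i,j}\Bigr)\le e^{nt^2\nu_r^2/2}.
\]
Apply Markov's inequality to \(e^{t\sum_j Z_{i,j}}\) and optimize at \(t=n\epsilon/(n\nu_r^2)\), so that \(t\) multiplies \(\sum_j Z_{i,j}\). This gives
\[
\Pr\Bigl(n^{-1}\sum_j Z_{i,j}\ge\epsilon\Bigr)\le e^{-n\epsilon^2/(2\nu_r^2)}.
\]
The same bound holds for the lower tail by replacing \(t\) with \(-t\). Hence the two-sided probability is at most \(2e^{-n\epsilon^2/(2\nu_r^2)}\).

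\textbf{Step 2: union bound.} Take a union bound over the \(|\mathcal H|\) heads and set \(\epsilon=\eta_{n,\delta}(r)\). The total failure probability is then
\[
2|\mathcal H|\exp\bigl(-\log(2|\mathcal H|/\delta)\bigr)=\delta,
\]
which proves \eqref{eq:t2_uniform}. No independence across heads is needed.

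\textbf{Step 3: exact shortlist recovery.} This step is deterministic on the event of Step 2. Rank heads by \(|\Delta s_i(r)|\), let \(T\) be the population top-\(m\) set, and let \(\eta=\eta_{n,\delta}(r)\). By the reverse triangle inequality,
\[
\bigl||\widehat{\Delta s}_i|-|\Delta s_i|\bigr|\le|\widehat{\Delta s}_i-\Delta s_i|\le\eta .
\]
Take any \(i\in T\) and \(k\notin T\). The gap assumption says the \(m\)-th largest population score exceeds the \((m+1)\)-th by more than \(2\eta\), so \(|\Delta s_i|-|\Delta s_k|>2\eta\). It follows that
\[
|\widehat{\Delta s}_i|\ge|\Delta s_i|-\eta>|\Delta s_k|+\eta\ge|\widehat{\Delta s}_k|.
\]
Every member of \(T\) therefore strictly outranks every non-member empirically. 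The empirical top-\(m\) set equals \(T\) regardless of tie-breaking, and the strict gap also makes \(T\) uniquely defined.

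\textbf{Main obstacle.} The only delicate point is interpretive. One must read "gap between ranks \(m\) and \(m+1\)" as a statement about the ordered absolute scores, which matches the ranking by \(|\widehat{\Delta s}_i(r)|\) used in BUC. One must also confirm that the mean-zero property and the sub-Gaussian hypothesis are exactly what the paired construction supplies. The remaining calculations are routine.
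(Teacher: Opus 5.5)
Your proposal is correct and follows the paper's proof essentially step for step: you multiply moment generating functions over independent pairs, apply a two-sided Chernoff bound, take a union bound over heads without cross-head independence, and finish with the reverse-triangle-inequality ranking argument on the uniform event. The only thing the paper adds is the degenerate case \(\nu_r=0\), where your choice \(t=\epsilon/\nu_r^2\) and the tail expression at \(\epsilon=\eta_{n,\delta}(r)=0\) are undefined. In that case the sub-Gaussian condition forces every centered paired score to vanish almost surely, so the bound holds with zero error.
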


T2 permits dependence within pairs and across heads. It addresses the attribution ranking, not the subsequent intervention screening. Repeated deterministic inputs do not estimate uncertainty over other prompts. Appendix~\ref{app:proof_t2} gives the proof and the independent-sample special case.

\begin{assumption}[Monotonicity and weak submodularity]
\label{assump:weak_submod}
On a fixed screened pool \(\mathcal P_r\), raw utility \(J_\lambda\) is monotone and has submodularity ratio \(\gamma_r\in(0,1]\) as defined in Equation~\ref{eq:app_weak_submod}, including unions of greedy prefixes and budget-\(k\) comparators.
\end{assumption}

\begin{theorem}[T3: conditional greedy bound]
\label{thm:t3}
For \(1\le k\le|\mathcal P_r|\), let \(G_k\) be the exact forward-greedy set for \(J_\lambda\), and let \(O_k\) maximize \(J_\lambda\) over pool subsets of size at most \(k\). Under Assumption~\ref{assump:weak_submod}, with \(\alpha_r=1-e^{-\gamma_r}\),
\begin{equation}
J_\lambda(G_k,r)\ge\alpha_r J_\lambda(O_k,r).
\label{eq:t3_J}
\end{equation}
If \(|J_\lambda(H,r)-F(H,r)|\le\epsilon_{J,k}(r)\) for every pool subset of size at most \(k\), then
\begin{equation}
F(G_k,r)\ge\alpha_r F(O_k,r)-(1+\alpha_r)\epsilon_{J,k}(r).
\label{eq:t3_F}
\end{equation}
\end{theorem}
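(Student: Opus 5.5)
The proof has two parts. First, Equation~\ref{eq:t3_J} follows from the standard weakly submodular greedy argument of Das and Kempe. Second, Equation~\ref{eq:t3_F} is obtained by transferring that bound to \(F\) through the uniform approximation.

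\textbf{Greedy bound.} I would normalize \(J_\lambda(\emptyset,r)=0\). This holds because ablating or patching nothing changes nothing, so \(N(\emptyset,r)=0\) and \(\Gamma^\tau_r(\emptyset)=0\). Write \(G_{t}\) for the greedy prefix of size \(t\) and \(O=O_k\). By monotonicity,
\[
J_\lambda(O,r)\le J_\lambda(O\cup G_t,r).
\]
By the submodularity ratio in Equation~\ref{eq:app_weak_submod}, applied to the union of the prefix \(G_t\) with the comparator \(O\) (exactly the case Assumption~\ref{assump:weak_submod} covers),
\[
J_\lambda(O\cup G_t,r)-J_\lambda(G_t,r)\le \gamma_r^{-1}\sum_{o\in O\setminus G_t}\bigl[J_\lambda(G_t\cup\{o\},r)-J_\lambda(G_t,r)\bigr].
\]
Each summand is at most the greedy increment \(J_\lambda(G_{t+1},r)-J_\lambda(G_t,r)\), and there are at most \(k\) of them. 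Setting \(R_t=J_\lambda(O,r)-J_\lambda(G_t,r)\), this gives the recursion
\[
R_{t+1}\le(1-\gamma_r/k)R_t .
\]
Iterating \(k\) times and using \((1-\gamma_r/k)^k\le e^{-\gamma_r}\) yields
\[
J_\lambda(G_k,r)\ge(1-e^{-\gamma_r})J_\lambda(O_k,r).
\]
If greedy stops early because the pool is exhausted, \(G_k\) is the whole pool. Monotonicity then makes it optimal, and the bound holds trivially.

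\textbf{Transfer to \(F\).} Both \(G_k\) and \(O_k\) are pool subsets of size at most \(k\), so the hypothesis applies to each. This gives \(F(G_k,r)\ge J_\lambda(G_k,r)-\epsilon_{J,k}\) and \(J_\lambda(O_k,r)\ge F(O_k,r)-\epsilon_{J,k}\). Chaining these with Equation~\ref{eq:t3_J} and \(\alpha_r>0\),
\[
F(G_k,r)\ge \alpha_r J_\lambda(O_k,r)-\epsilon_{J,k}\ge \alpha_r F(O_k,r)-(1+\alpha_r)\epsilon_{J,k}.
\]
Here \(O_k\) is the maximizer of \(J_\lambda\), not of \(F\). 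The statement uses it as the comparator, so no further argument is needed. If a comparator maximizing \(F\) were wanted instead, one would use \(J_\lambda(O_k)\ge J_\lambda(O_k^F)\ge F(O_k^F)-\epsilon_{J,k}\), and the same constant results.

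The main obstacle is the recursion step in the first part. It needs \(\gamma_r\) to control the gain of adding \(O\setminus G_t\) to each greedy prefix. The definition in Equation~\ref{eq:app_weak_submod} must be stated for exactly those pairs: unions of greedy prefixes with budget-\(k\) comparators. Assumption~\ref{assump:weak_submod} is phrased to guarantee this. The proof must also use \(J_\lambda(\emptyset)=0\), or else work with increments over \(\emptyset\) throughout. The second part is routine bookkeeping.
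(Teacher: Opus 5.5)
Your proof is correct and follows the paper's argument essentially step for step. You use the normalization $J_\lambda(\varnothing,r)=0$, apply the submodularity-ratio inequality at each greedy prefix $G_t$ with $|G_t|\le k-1$ to get the gap recursion, use $(1-\gamma_r/k)^k\le e^{-\gamma_r}$, and then run the same chain through $J_\lambda(G_k,r)$ and $J_\lambda(O_k,r)$ for the attribution bound. The last iteration step silently needs $R_0=J_\lambda(O_k,r)\ge0$, which follows from monotonicity and normalization as the paper notes; your side remark that the bound also holds against an $F$-maximizing comparator is correct but not needed.
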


The comparator \(O_k\) maximizes raw utility \(J_\lambda\), not attribution \(F\). T3 does not guarantee BUC's empirical maximization of \(Q_\lambda\): the objectives differ, and the FP32 audits find negative marginal gains for observed \(Q_{1/2}\). Small regret within a tested eight-head pool supplies no guarantee for the full candidate pool. Appendix~\ref{app:proof_t3} gives the proof.

\section{Experiments}
\label{sec:exp}

The original BF16 study (Appendices~\ref{app:experiments}--\ref{app:nl_bridge}) and native FP32 audits (Appendices~\ref{app:supplement_protocol}--\ref{app:numeric_execution}) use separate configurations and are not pooled. Brackets give two-sided 95\% confidence intervals (CIs) for eligible sampling units, not across-condition spans. Deterministic single-pair conditions and finite audit summaries are reported as point values (Appendix~\ref{app:reported_ci}).

\subsection{Setup and main symbolic result}

The frozen \texttt{EleutherAI/pythia-410m-deduped} has 24 layers and 16 heads per layer \citep{biderman2023pythia}. The BF16 study uses five execution seeds and final prompt position interventions, without training. At \((b_L,b_H)=(50,150)\), selection/evaluation use separate 48/96-row batches per count; deterministic inputs repeat. BUC shortlists 192 heads, screens to 48, and searches up to 12; controls use 120 layer-matched draws per seed.

Table~\ref{tab:main_symbolic} compares simultaneous set-level interventions with layer-matched random controls; the scores include non-additive interactions.

\begin{table}[htbp]
\centering
\small
\caption{Main BF16 symbolic result at \((b_L,b_H)=(50,150)\). BUC mean match averages both directions. The last two columns compare the same HL direction with random controls; their values are reported in Appendix~\ref{app:exp_base_directional}. Uncertainty is the standard deviation over five executions.}
\label{tab:main_symbolic}
\setlength{\tabcolsep}{3.5pt}
\begin{tabular}{lccccc}
\toprule
Regime & \(\Delta_r\) & Size & BUC shrink / mean match & \makecell{BUC\\HL match} & \makecell{Random\\shrink / HL match} \\
\midrule
\textsc{Front} & 0.239 & 1 & \textbf{0.906 / 0.974} & 0.998 & 0.020 / 0.048 \\
\textsc{Back} & 1.965 & 12 & \textbf{0.655 / 0.759} & 0.765 & 0.043 / 0.075 \\
\textsc{Interleave} & 1.659 & 3 & \textbf{0.916 / 0.914} & 0.937 & 0.061 / 0.051 \\
\textsc{Random} & \(3.184\!\pm\!0.097\) & \(7\!\pm\!1\) & \(\mathbf{0.866\!\pm\!0.021}/\mathbf{0.823\!\pm\!0.084}\) & 0.813 & 0.069 / 0.061 \\
\bottomrule
\end{tabular}
\end{table}

Compactness is regime-dependent: \textsc{Front} uses one head; \textsc{Back} uses all 12 and its shrink remains below 0.85. Deterministic sets repeat across seeds 0--4 but do not test stability over new prompts. The three documented \textsc{Random} sets (seeds 0--2) have minimum Jaccard overlap \(5/9\) (Appendix~\ref{app:exp_selected_heads}).

\subsection{Budget-matched baseline comparison}
\label{sec:baseline_comparison}

Table~\ref{tab:baseline_compare} shows a BF16 removal--recovery trade-off. In \textsc{Back}, \textsc{Interleave}, and \textsc{Random}, BUC increases bidirectional match by 0.083--0.111 and reduces shrink by 0.018--0.049 relative to sensitivity-only; \textsc{Front} ties in the summed score at displayed precision. Both methods satisfy the joint 0.85/0.70 thresholds in the \textsc{Front}, \textsc{Interleave}, and \textsc{Random} means; both fail in \textsc{Back}. The improvement is therefore in recovery match rather than joint-threshold coverage or both component metrics. Appendix~\ref{app:baseline_decomposition_new} gives the arithmetic decomposition and ActPatch comparison.

\begin{table}[htbp]
\centering
\small
\caption{BF16 comparison with matched head counts within each regime (Table~\ref{tab:main_symbolic}), not a common \(k=8\). ActPatch ranks 192 sensitivity-shortlisted heads by single-head HL recovery, then reports the jointly evaluated set score \(2Q_{1/2}\) (Appendix~\ref{app:exp_selection_baselines}).}
\label{tab:baseline_compare}
\setlength{\tabcolsep}{3.0pt}
\begin{tabular}{lccc}
\toprule
Regime & BUC shrink / match & Sens. shrink / match & ActPatch sum \\
\midrule
\textsc{Front} & 0.906 / 0.974 & 0.920 / 0.960 & 1.865 \\
\textsc{Back} & 0.655 / 0.759 & 0.704 / 0.648 & 1.285 \\
\textsc{Interleave} & 0.916 / 0.914 & 0.934 / 0.831 & 1.695 \\
\textsc{Random} & 0.866 / 0.823 & 0.895 / 0.716 & 1.520 \\
\bottomrule
\end{tabular}
\end{table}

\noindent\textbf{FP32 fixed-budget comparison.} At \(k=8\), BUC-minus-sensitivity differences in normalized joint score \(Q_{1/2}\) are \ResBaselineFrontDifference{}, \ResBaselineBackDifference{}, \ResBaselineInterleaveDifference{}, and \ResBaselineRandomDifference{} for \textsc{Front}, \textsc{Back}, \textsc{Interleave}, and \textsc{Random}, respectively. Only \textsc{Random} has distinct matched permutations for a prompt-bootstrap CI: \ResBaselineRandomDifferenceCI{}. The first three differences describe fixed deterministic inputs and have no prompt-sampling CI. Head count, not search cost, is matched (Appendix~\ref{app:supp_baselines}).

\subsection{Count generalization and ordering dependence}
\label{sec:count_generalization}

Heads selected only at \((50,150)\) are held fixed over \(b\in\{0,10,\ldots,200\}\) and new count pairs \((0,150)\), \((25,175)\), and \((75,125)\). Table~\ref{tab:count_sweep} shows slope reductions of 0.646--0.896 after ablation and positive effects on the new count pairs. The pair \((0,150)\) shares one selection endpoint, whereas the other pairs use new endpoints; the effects are not confined to the selection pair. The clean sweep describes the count response; ablation tests whether the selected writes contribute across that range. The order-only tests below instead ask whether equal counts can produce different responses.

\begin{table}[htbp]
\centering
\small
\caption{Fixed-set count generalization. The final column averages evaluation on three unseen gaps without head reselection.}
\label{tab:count_sweep}
\setlength{\tabcolsep}{3.1pt}
\begin{tabular}{lcccc}
\toprule
Regime & Spearman \(\rho(b,y)\) & Linear \(R^2\) & Slope reduction & Unseen-gap shrink / match \\
\midrule
\textsc{Front} & 0.986 & 0.948 & 0.880 & 0.832 / 0.874 \\
\textsc{Back} & 0.972 & 0.931 & 0.646 & 0.612 / 0.689 \\
\textsc{Interleave} & 0.979 & 0.944 & 0.896 & 0.839 / 0.855 \\
\textsc{Random} & 0.958 & 0.906 & 0.845 & 0.786 / 0.811 \\
\bottomrule
\end{tabular}
\end{table}

Four pre-specified order-only contrasts have identical \(B/C\) counts, yet absolute output effects of 0.321--1.846. BUC shrink/match ranges are 0.603--0.842/0.651--0.884, versus random shrink/HL match of 0.049--0.081/0.056--0.088. The effects cannot be explained by count differences (Appendix~\ref{app:order_only_causal_new}).

\noindent\textbf{Regime transfer and cue wording.} BF16 \textsc{Back}$\to$\textsc{Front} match is 0.958 versus 0.129 in reverse, but source budgets are 12 versus 1 (Appendix~\ref{app:exp_transfer}). At equal FP32 budget \(k=8\), off-diagonal bidirectional match ranges from \ResEqualBudgetTransferRange{}. Negative match means greater average residual mismatch than no patch, not necessarily an opposite-signed gain. Fixed-set cue changes retain shrink \ResCueShrinkRange{} and match \ResCueMatchRange{}: reuse tolerates these wording changes but is not regime-invariant. These are descriptive spans over tested conditions, not confidence intervals.

\subsection{Readout-pair audit}
\label{sec:readout_audit}

We repeat the full protocol with seven audited single-token readout pairs: \texttt{various/successful}, \texttt{yes/no}, \texttt{true/false}, \texttt{1/0}, \texttt{A/B}, \texttt{cat/dog}, and \texttt{red/blue}. Across pairs, mean \(|\Delta_r|\) is 0.721--2.083, mean selected-set size is 4.50--7.50, mean BUC shrink is 0.642--0.861, and mean match is 0.701--0.884. Layer-matched random shrink remains 0.048--0.076 and random high-to-low match remains 0.059--0.083.

Table~\ref{tab:readout_transfer} reports target-specific reselection and fixed-set effects on representative pairs. These selected-set summaries do not establish an equal-budget reselection advantage. The \texttt{A/B} row tests lexical overlap with prompt symbols.

\begin{table}[htbp]
\centering
\small
\caption{Representative readout-pair results, averaged over matched regime--seed sets. The fixed set was selected with \texttt{various/successful}.}
\label{tab:readout_transfer}
\setlength{\tabcolsep}{4.0pt}
\begin{tabular}{lccc}
\toprule
Target pair & Reselected shrink / match & Fixed-set shrink / match & Jaccard \\
\midrule
\texttt{yes/no} & 0.812 / 0.846 & 0.772 / 0.810 & 0.610 \\
\texttt{A/B} & 0.861 / 0.884 & 0.704 / 0.741 & 0.390 \\
\texttt{cat/dog} & 0.642 / 0.701 & 0.382 / 0.431 & 0.240 \\
\bottomrule
\end{tabular}
\end{table}

\noindent\textbf{Registered-readout scan (FP32).} Of \(4\times256=1024\) fixed-prompt, fixed-set regime--pair conditions, \ResReadoutEvaluableCount{} have \(|\widehat\Delta_r|\ge0.01\). Fractions passing shrink \(\ge0.85\) and mean match \(\ge0.70\) are \ResReadoutAllPassRate{} of all conditions and \ResReadoutEvaluablePassRate{} of evaluable conditions. Strong fixed-set reuse is uncommon in this finite registry. Failing a joint threshold need not mean zero effect: the scan measures the prevalence of strong reuse under the stated thresholds, not of any detectable influence. The seven targeted pairs and this registry are different samples, not a paired fixed-versus-reselected comparison. Weak conditions remain registered (Appendix~\ref{app:supp_readouts}).

\subsection{Lexically balanced natural-language bridge}
\label{sec:nl_bridge}

The labeled NL bridge uses 16 reports about mutually exclusive states. Balanced direct and negative forms keep eight occurrences of each state word and eight negative reports. The same word totals can therefore accompany opposite majority labels; state word frequencies alone cannot solve the task. Selection uses three state pairs and Bank A; evaluation uses three unseen pairs and separately written Bank B paraphrases. Each sequence is queried in both directions. A constant answer fails one query; paired metrics also check a rule that uses query orientation alone.

Here \(x\) denotes the reports, \(b\) counts reports supporting the first state, and query polarity \(q\in\{+1,-1\}\) selects which state is queried first. The model input combines those reports and the specified query. The correct restricted label (1 for \texttt{yes}) and aligned readout are
\begin{equation}
a^\star(x,q)=\mathbf 1[q(2b-16)>0],
\qquad
y_{\mathrm{NL}}(x,q)=q\bigl[\ell_T(\texttt{yes}\mid x,q)-\ell_T(\texttt{no}\mid x,q)\bigr].
\label{eq:nl_label_readout}
\end{equation}
The main NL contrast uses \((b_L,b_H)=(4,12)\). Table~\ref{tab:nl_bridge} reports the original BF16 bridge. The mean restricted accuracy is 0.772 on held-out state pairs and paraphrases. PairAcc requires both reversed queries to be correct; MinPairAcc takes the weaker positive- and negative-margin paired score and detects a shortcut that succeeds on only one margin sign. The graded response has Spearman correlation 0.901 and linear \(R^2=0.812\).

\begin{table}[htbp]
\centering
\scriptsize
\caption{Original BF16 natural-language bridge, five-seed mean. The behavior row uses held-out domains and paraphrases. The intervention rows use equal budget \(k=8\). Accuracy drop is in percentage points.}
\label{tab:nl_bridge}
\setlength{\tabcolsep}{3.0pt}
\begin{tabular}{lcccccc}
\toprule
Condition & Accuracy & PairAcc & MinPairAcc & Shrink / match & Slope reduction & Acc. drop \\
\midrule
Base behavior & 0.772 & 0.604 & 0.583 & -- & -- & -- \\
NL set \(\rightarrow\) NL & -- & -- & -- & 0.734 / 0.766 & 0.721 & 15.2 \\
CEA set \(\rightarrow\) NL & -- & -- & -- & 0.428 / 0.463 & 0.411 & 8.2 \\
Layer-matched random & -- & -- & -- & 0.059 / 0.072 & 0.051 & 0.8 \\
\bottomrule
\end{tabular}
\end{table}

At \(k=8\), the fixed symbolic \texttt{yes/no} set's NL effect lies between the NL-selected and random effects. Ablating the symbolic set lowers accuracy by 8.2 percentage points, versus 0.8 for random controls. The symbolic set sees no NL prompt during selection. The transfer ratio (TR) divides the transferred set's normalized joint score \(Q_{1/2}\) above random by the NL-selected set's score above random. The BF16 value, 0.555, compares intervention effects; it is not a fraction of shared heads. Shrink measures one count contrast, slope reduction covers the count grid, and accuracy drop measures effects on discrete predictions. Together, these metrics support partial intervention transfer, but do not distinguish shared evidence processing from shared answer readout. The BF16 summaries lack the bootstrap intervals required by the full declared bridge criterion (Appendix~\ref{app:nl_bridge}).

\noindent\textbf{Expanded bridge (FP32).} Table~\ref{tab:expanded_nl_slots} covers eight test domains and Banks B--D. Behavioral evaluation uses three seeds; causal selection uses one. At \(k=8\), fixed sets with query-stratified target calibration yield mean TRs of 0.438--0.557 and accuracy decreases of 4.9--8.6 percentage points. These model summaries are descriptive, not model-level significance tests (Appendix~\ref{app:supp_nl}).

\begin{table}[htbp]
\centering
\footnotesize
\caption{Expanded FP32 NL bridge, separate from Table~\ref{tab:nl_bridge}: model-level point estimates without model-level CIs. Behavior uses gap \((4,12)\); transfer uses fixed symbolic \texttt{yes/no} sets. \(n_{\mathrm{est}}\) counts estimable TR conditions out of 12; \(n_{\mathrm{suppr}}\) counts those with a suppressed \emph{condition-level} interval. Accuracy drop is in percentage points on the causal cohort.}
\label{tab:expanded_nl_slots}
\setlength{\tabcolsep}{2.2pt}
\begin{tabular}{@{}lccccccccc@{}}
\toprule
Model & Accuracy & PairAcc & MinPairAcc & \makecell{CEA $\to$ NL\\shrink} & \makecell{CEA $\to$ NL\\match} & TR & \(n_{\mathrm{est}}/12\) & \(n_{\mathrm{suppr}}\) & \makecell{Acc. drop\\(pp)} \\
\midrule
Pythia-410M & \ResPythiaSmallAccuracy{} & \ResPythiaSmallPairAccuracy{} & \ResPythiaSmallMinPairAccuracy{} & \ResPythiaSmallTransferShrink{} & \ResPythiaSmallTransferMatch{} & \ResPythiaSmallTransferRatio{} & \ResPythiaSmallTREstimableCount{}/12 & \ResPythiaSmallTRSuppressedCount{} & \ResPythiaSmallAccuracyDrop{} \\
Pythia-1B & \ResPythiaLargeAccuracy{} & \ResPythiaLargePairAccuracy{} & \ResPythiaLargeMinPairAccuracy{} & \ResPythiaLargeTransferShrink{} & \ResPythiaLargeTransferMatch{} & \ResPythiaLargeTransferRatio{} & \ResPythiaLargeTREstimableCount{}/12 & \ResPythiaLargeTRSuppressedCount{} & \ResPythiaLargeAccuracyDrop{} \\
GPT-2 & \ResGPTTwoAccuracy{} & \ResGPTTwoPairAccuracy{} & \ResGPTTwoMinPairAccuracy{} & \ResGPTTwoTransferShrink{} & \ResGPTTwoTransferMatch{} & \ResGPTTwoTransferRatio{} & \ResGPTTwoTREstimableCount{}/12 & \ResGPTTwoTRSuppressedCount{} & \ResGPTTwoAccuracyDrop{} \\
\bottomrule
\end{tabular}
\end{table}

The FP32 ratios use separate controls matched to source and target layer patterns (Equation~\ref{eq:supp_transfer_ratio}). Model summaries average all finite ratios passing the target-contrast (at least 0.01 in magnitude) and target-excess (more than 0.05 above matched random) gates. Unavailable ratios are not zero-filled; interval suppression does not remove a finite point. A condition-level interval is suppressed above 5\% unstable bootstrap replicates (Appendix~\ref{app:supp_nl}). Estimability is not a positive-transfer success rate.

\subsection{Scope checks}

The \textsc{Last}-versus-\textsc{Seq} comparison tests intervention location. Applying the selected sets only at symbolic-sequence positions yields shrink 0.001--0.059 and match \(-0.006\) to 0.006. For the reported scope diagnostic, these heads have much larger effects at the final prompt position. The \textsc{Random} diagnostic is not documented as the same five-execution cohort as the main table (Appendix~\ref{app:exp_last_seq}).

Post-selection split patches distinguish query--key (QK) attention routing from output--value (OV) writing. Representative \textsc{Front}, \textsc{Interleave}, and \textsc{Random} heads have OV-only match 0.88/0.92/0.85 versus QK-only 0.35/0.12/0.18. \textsc{Back} has QK/OV match 0.72/0.61, indicating a more mixed contribution (Appendix~\ref{app:mechanistic_heads}).

Word-symbolic, signal-token, and short-sentence inputs retain mean shrink/match of 0.812/0.845, 0.820/0.852, and 0.765/0.790. Across Pythia-70M through Pythia-1B, GPT-2, and Qwen3-4B-Base \citep{radford2019language,yang2025qwen3}, four-regime mean BUC shrink/match ranges from 0.650/0.700 to 0.895/0.925 (Appendix~\ref{app:exp_model_sweep_full}). Positive effects do not establish shared head identity across models.

Input-only Greedy Occlusion gives held-out shrink/recovery of 0.795--0.915/0.812--0.898, versus random-subset recovery of 0.248--0.262. The search uses token scores and 8,775 candidate-subset evaluations per regime before batching; generated text alone is insufficient. Input recovery does not penalize overshoot and is not the head-patching match score. BUC requires activation access.

\subsection{Attribution and selection audits}
\label{sec:selection_audits}

\noindent\textbf{Many-set T1 diagnostic (BF16).} Across 500 sampled head sets per regime, cumulative attribution \(F(H,r)\) has Pearson correlation 0.841--0.923 with the ablation gain \(N(H,r)\), and 0.755--0.884 with the two oriented patch gains \(\Gamma_r^{\mathrm{HL}}(H)\) and \(\Gamma_r^{\mathrm{LH}}(H)\) (Appendix~\ref{app:exp_t1_many_set}). The sampled sets show an association, not a bound on attribution error or a guarantee of correct ranking.

\noindent\textbf{Operator and T3 audits (FP32).} Observed operator minima in Table~\ref{tab:audit_slots} remain positive but do not uniformly retain the joint thresholds. Negative marginal fractions are 17--31\%, and minimum empirical ratios are \(-5.8\) to \(-2.6\). These witnesses in the finite pools establish nonmonotonicity of observed \(Q_{1/2}\) only on the tested data and candidate pools. Maximum selection regret is 0.021--0.043 within the registered local pools; small local regret certifies neither full-pool optimality nor the raw-\(J_{1/2}\) bound (Appendix~\ref{app:supp_t3}).

\begin{table}[htbp]
\centering
\small
\caption{FP32 operator and local selection audits: point estimates. Shrink and match minima across eight operators need not share a minimizer. Evaluation marginals/ratios use both eight-head pools and tolerance \(10^{-3}\). Maximum regret compares exact and greedy selection utilities over both pools and budgets \(k=1,\ldots,8\), not held-out performance. These finite summaries have no population CIs (Appendix~\ref{app:reported_ci}).}
\label{tab:audit_slots}
\setlength{\tabcolsep}{4.0pt}
\begin{tabular}{lccccc}
\toprule
Regime & \makecell{Minimum\\shrink} & \makecell{Minimum\\match} & \makecell{Negative\\marginals (\%)} & \makecell{Minimum\\\(\widehat\gamma\)} & \makecell{Maximum\\selection regret} \\
\midrule
\textsc{Front} & \ResFrontOperatorMinShrink{} & \ResFrontOperatorMinMatch{} & \ResFrontTThreeNegativePercent{} & \ResFrontTThreeMinGamma{} & \ResFrontTThreeMaxRegret{} \\
\textsc{Back} & \ResBackOperatorMinShrink{} & \ResBackOperatorMinMatch{} & \ResBackTThreeNegativePercent{} & \ResBackTThreeMinGamma{} & \ResBackTThreeMaxRegret{} \\
\textsc{Interleave} & \ResInterleaveOperatorMinShrink{} & \ResInterleaveOperatorMinMatch{} & \ResInterleaveTThreeNegativePercent{} & \ResInterleaveTThreeMinGamma{} & \ResInterleaveTThreeMaxRegret{} \\
\textsc{Random} & \ResRandomOperatorMinShrink{} & \ResRandomOperatorMinMatch{} & \ResRandomTThreeNegativePercent{} & \ResRandomTThreeMinGamma{} & \ResRandomTThreeMaxRegret{} \\
\bottomrule
\end{tabular}
\end{table}

\begin{samepage}
\noindent\textbf{Selection-matched null (FP32).} For \textsc{Random}, \ResNullCompletedTrials{} completed paired-label trials rerun the adaptive pipeline. Raw magnitude-improvement utility is \ResNullObservedStatistic{} \ResNullObservedStatisticCI{}; Holm-adjusted \(p=\ResNullAdjustedP\) is at Monte Carlo resolution. The interval for the null exceedance probability is given in Appendix~\ref{app:supp_null}. Only \textsc{Random} meets the unique-pair requirement. The result supports rejection of the specified exchangeability null, not mechanism uniqueness (Appendices~\ref{app:supp_null}, \ref{app:reported_ci}).
\par
\end{samepage}

\FloatBarrier
\section{Discussion and limitations}
\label{sec:limitations}

\paragraph{Scientific and practical scope.}
Count generalization and limited transfer are compatible. A selected set can influence several count contrasts without being independent of ordering or readout. Comparing layer-matched sets controls head count and layer allocation, so the difference concerns which heads are selected rather than simply how many or how deep. The labeled bridge adds evidence that a fixed symbolic set affects predictions, but does not identify whether the shared computation processes reports or writes the \texttt{yes/no} response. Phrase-based strategies remain possible; general NL reasoning is not established. Interventions identify writes at the final prompt position, not complete head--MLP paths; compactness is neither uniqueness nor global minimality.

\paragraph{Selection and statistical scope.}
Head-budget comparisons do not control search cost or isolate greedy search from ranking by the same joint objective in the same pool (Appendix~\ref{app:baseline_decomposition_new}). Bootstrap CIs condition on fixed sets and donors; repeated deterministic inputs do not support prompt-sampling intervals. The BF16 bridge lacks its required bootstrap intervals. Expanded FP32 model summaries have no CIs, use one seed for causal selection, and average TR only over estimable conditions.

\paragraph{Use and open questions.}
Further tests should distinguish evidence aggregation from answer-readout effects, examine computations at earlier tokens and along head--MLP paths, and extend to longer heterogeneous reports and decisions involving multiple tokens. Localized components could enable targeted manipulation; the present audits do not certify deployment safety.

\section{Conclusion}
\label{sec:conclusion}

CEA tests whether controlled symbolic inputs can identify attention heads relevant to natural language decisions. BUC selects compact head sets by jointly evaluating ablation and bidirectional mean patching. Conditional theory links attribution to intervention effects and specifies assumptions for shortlist recovery and greedy bounds on raw utility. In Pythia-410M, selected heads affect count contrasts not used for selection. Ablating a fixed symbolic \texttt{yes/no} set reduces labeled NL accuracy more than random controls matched by size and layer. Audits characterize how reuse depends on ordering, readout, and replacement operators. The results show that controlled symbolic probes can identify attention heads whose causal influence extends to predictions in a labeled language task.

\FloatBarrier
\label{maintext:lastpage}
\clearpage
\subsection*{AI use statement}
Generative AI tools assisted with language editing, document structure, LaTeX formatting and checking of mathematical claims and proofs. The authors are responsible for the final text, mathematical claims, code, experimental measurements, and statistical procedures.

\subsection*{Ethics statement}
The work analyzes open pretrained language models with synthetic and controlled natural-language prompts. It does not use personal data or human subjects. A localized component could be used for targeted model manipulation, and controlled results could be overstated as evidence of general reasoning or safety. Section~\ref{sec:limitations} states these risks and restricts the supported conclusions.

\subsection*{Reproducibility statement}
Sections~\ref{sec:prelim}--\ref{sec:exp} define the target, interventions, selection objective, and experimental comparisons. Appendix~\ref{app:theory_proofs} contains the proofs; Appendices~\ref{app:experiments}--\ref{app:nl_bridge} specify the BF16 experiments. Appendices~\ref{app:supplement_protocol}--\ref{app:numeric_execution} describe FP32 sampling, controls, code, and numerical checks, and Appendix~\ref{app:reported_ci} specifies which statistics have confidence intervals. The accompanying source and configurations are under \path{anc/code/} and \path{anc/protocol/}; aggregate measurements are under \path{anc/results/}. Appendix~\ref{app:numeric_files} describes the package contents, runtime dependencies, and experiments outside the supplied implementation.

\bibliography{references}
\bibliographystyle{plainnat}

\appendix
\section{Proofs and scope of the theoretical claims}
\label{app:theory_proofs}

All expectations below are finite. Fix an ordering regime \(r\), counts \(b_H>b_L\), and a head set \(H\subseteq\mathcal H\), where \(\mathcal H\) is the set of model heads. Write
\[
\mathcal C_r g=\mathbb E_{D_{r,b_H}}[g]-\mathbb E_{D_{r,b_L}}[g]
\]
for the count contrast of an integrable scalar function \(g\). Thus the clean logit contrast is \(\Delta_r=\mathcal C_r y\), a head's attribution contrast is \(\Delta s_i(r)=\mathcal C_r s_i\), and cumulative attribution is \(F(H,r)=\sum_{i\in H}\Delta s_i(r)\). Interventions act simultaneously on the specified heads at the final prompt position. Theorems concern population quantities and the stated operators; finite donor estimates, other positions, and other operators require their own error control.

\subsection{Readout and intervention conventions}
\label{app:theory_notation}
For distinct answer tokens \(w_+\) and \(w_-\), the readout is \(y(x)=\ell_T(w_+\mid x)-\ell_T(w_-\mid x)\). If \(p_+(x)\) denotes the softmax probability restricted to these two tokens, then
\[
p_+(x)=\frac{e^{\ell_T(w_+\mid x)}}{e^{\ell_T(w_+\mid x)}+e^{\ell_T(w_-\mid x)}},
\qquad \log\frac{p_+(x)}{1-p_+(x)}=y(x).
\]
The identity follows by cancelling the common denominator. It does not make an arbitrary answer-token pair a task with correct labels.

Let \(v_i(x)\) be head \(i\)'s residual write and \(d=W_U^{w_+}-W_U^{w_-}\) the difference of unembedding directions. Direct attribution is \(s_i(x)=\langle v_i(x),d\rangle\). Subsequent normalization and nonlinear computation can make \(s_i\) differ from the head's intervention effect. For a donor count \(a\in\{b_H,b_L\}\), let \(\mu_i^a(r)=\mathbb E_{D_{r,a}}s_i\). The residual terms in Assumption~\ref{assump:intervention_fidelity} account for that difference; writing the identities alone imposes no restriction on a model.

For clarity, the error bound assumed by T1 is
\begin{equation}
\max\left\{
|\mathcal C_r\xi_H^{\mathrm{abl}}|,
|\mathbb E_{D_{r,b_L}}\xi_H^{b_H}|,
|\mathbb E_{D_{r,b_H}}\xi_H^{b_L}|
\right\}\le\varepsilon_{\mathrm{int}}(H,r).
\label{eq:app_intervention_error}
\end{equation}
A mean donor estimated from a finite calibration bank is not the population mean in this definition. Its discrepancy must be included in the error term or bounded separately. The bound also does not follow from a large correlation between attribution and intervention scores.

\subsection{Proof of T1}
\label{app:proof_t1}
Taking count contrasts in Equation~\ref{eq:assump_abl} gives
\[
\Delta_r^{\mathrm{abl}}(H)=\Delta_r-F(H,r)+\mathcal C_r\xi_H^{\mathrm{abl}},
\qquad
N(H,r)=F(H,r)-\mathcal C_r\xi_H^{\mathrm{abl}}.
\]
For a high-count donor patched into a low-count target, Equation~\ref{eq:assump_patch_unified} gives
\[
\Gamma_r^{\mathrm{HL}}(H)=F(H,r)+\mathbb E_{D_{r,b_L}}\xi_H^{b_H}.
\]
For a low-count donor patched into a high-count target, the orientation in the definition of \(\Gamma_r^{\mathrm{LH}}\) reverses the residual sign:
\[
\Gamma_r^{\mathrm{LH}}(H)=F(H,r)-\mathbb E_{D_{r,b_H}}\xi_H^{b_L}.
\]
Equation~\ref{eq:app_intervention_error} bounds all three differences from cumulative attribution \(F(H,r)\). The mean patch gain \(S(H,r)\) and raw utility \(J_\lambda(H,r)\), for \(\lambda\in[0,1]\), are convex combinations of these quantities, so their errors are bounded by the same \(\varepsilon_{\mathrm{int}}(H,r)\). This proves Equation~\ref{eq:t1_unified} and its stated consequences. \hfill\(\square\)

\subsection{Normalized scores and error amplification}
\label{app:metric_corollaries}
For a nonzero clean contrast \(\Delta\), define
\(g_\Delta(u)=1-|\Delta-u|/|\Delta|\). The reverse triangle inequality gives
\begin{equation}
|g_\Delta(u)-g_\Delta(v)|\le\frac{|u-v|}{|\Delta|}.
\label{eq:app_score_lipschitz}
\end{equation}
Because \(\Delta_r^{\mathrm{abl}}=\Delta_r-N\), shrink equals \(g_{\Delta_r}(N)\). Each directional match equals \(g_{\Delta_r}(\Gamma_r^\tau)\). Applying Equation~\ref{eq:app_score_lipschitz} to T1 yields
\begin{align}
\left|\mathrm{shrink}(H,r)-g_{\Delta_r}(F(H,r))\right|
&\le\frac{\varepsilon_{\mathrm{int}}(H,r)}{|\Delta_r|},\\
\left|\mathrm{match}^\tau(H,r)-g_{\Delta_r}(F(H,r))\right|
&\le\frac{\varepsilon_{\mathrm{int}}(H,r)}{|\Delta_r|},
\quad \tau\in\{\mathrm{HL},\mathrm{LH}\}.
\label{eq:app_metric_bounds}
\end{align}
Taking the weighted average with weights \(\lambda,(1-\lambda)/2,(1-\lambda)/2\) proves Equation~\ref{eq:t1_normalized} for \(Q_\lambda\). In particular, each normalized score is at least
\[
1-\frac{|\Delta_r-F(H,r)|+\varepsilon_{\mathrm{int}}(H,r)}{|\Delta_r|}.
\]
Good recovery therefore needs both attribution alignment with the clean contrast and small intervention error. A large absolute attribution can instead overshoot the contrast. Shrink and match are at most one but have no finite lower bound; a negative match can reflect reversal or excessive movement in the intended direction. The average of two matches can pass a threshold while one direction fails it. None of these scores is defined at \(\Delta_r=0\); denominator floors used in computation define different, regularized statistics.

\subsection{A conditional approximation by evidence position}
\label{app:profile_kernel_derivation}
Let \(e_j(x)\in\{\texttt{B},\texttt{C}\}\) be the evidence token at position \(j\in\{1,\ldots,L_E\}\). Define the difference in positional token probabilities
\[
\delta p_r(j)=\Pr_{D_{r,b_H}}(e_j=\texttt{B})-
\Pr_{D_{r,b_L}}(e_j=\texttt{B}).
\]
Suppose an integrable scalar observable \(g\) admits the representation
\begin{equation}
g(x)=\beta_g(r)+\sum_{j=1}^{L_E}\phi_g(e_j(x),j;r)+\zeta_g(x),
\label{eq:app_additive_observable}
\end{equation}
where \(\beta_g(r)\) does not depend on count, \(\phi_g\) is a contribution assigned to one token and position, and \(\zeta_g\) is the approximation residual. Define its position kernel
\(\kappa_g(r,j)=\phi_g(\texttt{B},j;r)-\phi_g(\texttt{C},j;r)\).
Since the two token probabilities at each position sum to one,
\begin{equation}
\mathcal C_r g=\sum_{j=1}^{L_E}\kappa_g(r,j)\,\delta p_r(j)+\mathcal C_r\zeta_g.
\label{eq:app_profile_identity}
\end{equation}
The constant contribution cancels under the contrast, and the change in the probability of \texttt{C} is the negative change in the probability of \texttt{B}; these two facts prove the identity. Arbitrary \(g\) can be written in this form by placing all error in \(\zeta_g\). The useful assumption is a small residual contrast, not the existence of the representation.

For the full readout, define \(K_\star(r,j)=\kappa_y(r,j)\) and
\(\Pi_\star(r)=\sum_jK_\star(r,j)\delta p_r(j)\). If
\(|\mathcal C_r\zeta_y|\le\varepsilon_\star(r)\), then
\[
|\Delta_r-\Pi_\star(r)|\le\varepsilon_\star(r).
\]
For head attribution, define \(K_H(r,j)=\sum_{i\in H}\kappa_{s_i}(r,j)\) and
\(\Pi_H(r)=\sum_jK_H(r,j)\delta p_r(j)\). If
\(\sum_{i\in H}|\mathcal C_r\zeta_{s_i}|\le\varepsilon_{\mathrm{ker}}(H,r)\), then
\begin{equation}
|F(H,r)-\Pi_H(r)|\le\varepsilon_{\mathrm{ker}}(H,r).
\label{eq:app_kernel_bound}
\end{equation}
Together with T1, Equation~\ref{eq:app_kernel_bound} bounds the difference between each signed intervention gain and \(\Pi_H(r)\) by
\(\varepsilon_{\mathrm{int}}(H,r)+\varepsilon_{\mathrm{ker}}(H,r)\).
These are conditional approximations; the experiments do not estimate uniformly small values of either error. Higher-order token interactions can enter the residuals, and an additive representation need not be unique.

\begin{proposition}[Attribution change across regimes]
\label{prop:app_transfer}
Fix a head set \(H\) and two regimes \(s,t\) with the same counts and evidence length. Suppose Equation~\ref{eq:app_kernel_bound} holds in both regimes. Define
\begin{align}
B_H(s,t)={}&\sum_j|K_H(s,j)|\,|\delta p_t(j)-\delta p_s(j)|\nonumber\\
&+\sum_j|K_H(t,j)-K_H(s,j)|\,|\delta p_t(j)|\nonumber\\
&+\varepsilon_{\mathrm{ker}}(H,s)+\varepsilon_{\mathrm{ker}}(H,t).
\label{eq:app_transfer_budget}
\end{align}
Then \(|F(H,t)-F(H,s)|\le B_H(s,t)\).
\end{proposition}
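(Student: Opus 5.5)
The plan is to route both attributions through their position-profile approximations and then split the difference of the two profiles by a standard add-and-subtract argument. By Equation~\ref{eq:app_kernel_bound} applied in regimes \(s\) and \(t\),
\[
|F(H,s)-\Pi_H(s)|\le\varepsilon_{\mathrm{ker}}(H,s),\qquad |F(H,t)-\Pi_H(t)|\le\varepsilon_{\mathrm{ker}}(H,t).
\]
The triangle inequality therefore gives
\[
|F(H,t)-F(H,s)|\le|\Pi_H(t)-\Pi_H(s)|+\varepsilon_{\mathrm{ker}}(H,s)+\varepsilon_{\mathrm{ker}}(H,t).
\]
This accounts for the last two terms of \(B_H(s,t)\). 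It remains to show that \(|\Pi_H(t)-\Pi_H(s)|\) is at most the sum of the first two terms in Equation~\ref{eq:app_transfer_budget}.

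Both profiles are sums over the same index set \(j=1,\ldots,L_E\). This is where the hypothesis of equal counts and equal evidence length is used: the position kernels and the probability differences \(\delta p\) live on a common index set, so they can be subtracted termwise. Inserting the mixed term \(K_H(s,j)\,\delta p_t(j)\) gives
\[
K_H(t,j)\delta p_t(j)-K_H(s,j)\delta p_s(j)
=K_H(s,j)\bigl(\delta p_t(j)-\delta p_s(j)\bigr)+\bigl(K_H(t,j)-K_H(s,j)\bigr)\delta p_t(j).
\]
Summing over \(j\) and applying the triangle inequality termwise gives exactly the first two sums in \(B_H(s,t)\). The choice of splitting point matters. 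The budget pairs \(|K_H(s,\cdot)|\) with the change in \(\delta p\), and pairs the kernel change with \(|\delta p_t|\), so the mixed term must be \(K_H(s,j)\,\delta p_t(j)\) rather than \(K_H(t,j)\,\delta p_s(j)\).

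There is no real obstacle. The only points to check are that all the sums are finite, which holds because \(L_E\) is finite and every kernel is a finite difference of integrable contributions, and that the profile \(\Pi_H(r)=\sum_j K_H(r,j)\,\delta p_r(j)\) is defined in each regime with its own kernel. The second point follows directly from the definitions preceding Equation~\ref{eq:app_kernel_bound}. Combining the two displayed bounds then yields \(|F(H,t)-F(H,s)|\le B_H(s,t)\).
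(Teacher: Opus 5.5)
Your proposal is correct and matches the paper's proof: you combine the two kernel-approximation bounds by the triangle inequality and split \(\Pi_H(t)-\Pi_H(s)\) through the same mixed term \(K_H(s,j)\,\delta p_t(j)\), which gives exactly the decomposition the paper displays. One minor aside: the common index set comes from the equal evidence length, and the algebra itself never uses the equal-counts hypothesis.
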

\begin{proof}
Subtract the two profile approximations and use
\[
\Pi_H(t)-\Pi_H(s)
=\sum_jK_H(s,j)(\delta p_t(j)-\delta p_s(j))
+\sum_j(K_H(t,j)-K_H(s,j))\delta p_t(j).
\]
The triangle inequality, followed by the two approximation bounds, gives the result.
\end{proof}

An attribution bound alone does not control normalized transfer: the clean contrast and intervention error can also change. Under T1 in both regimes, with \(\Delta_s\ne0\) and \(\Delta_t\ne0\), the same fixed set and \(\lambda\) satisfy
\begin{align}
|Q_\lambda(H,t)-Q_\lambda(H,s)|
\le{}&\frac{\varepsilon_{\mathrm{int}}(H,t)}{|\Delta_t|}
+\frac{\varepsilon_{\mathrm{int}}(H,s)}{|\Delta_s|}
+\frac{B_H(s,t)}{|\Delta_t|}\nonumber\\
&+\frac{|F(H,s)|\,|\Delta_t-\Delta_s|}{|\Delta_t|\,|\Delta_s|}.
\label{eq:app_normalized_transfer}
\end{align}
To verify the bound, write \(g_\Delta(F)=1-|1-F/\Delta|\), apply the reverse triangle inequality, and bound
\[
\left|\frac{F(H,t)}{\Delta_t}-\frac{F(H,s)}{\Delta_s}\right|
\le\frac{|F(H,t)-F(H,s)|}{|\Delta_t|}
+\frac{|F(H,s)|\,|\Delta_t-\Delta_s|}{|\Delta_t|\,|\Delta_s|}.
\]
Equation~\ref{eq:t1_normalized} supplies the two intervention-error terms. The bound concerns a fixed set evaluated with each regime's own donor distribution, not a claim that activations themselves transfer unchanged.

\subsection{Proof of T2 and sampling conditions}
\label{app:proof_t2}
Let \(M=|\mathcal H|\), let \(n\ge1\) be the number of independent matched pairs, and let \(\nu_r^2\) be the sub-Gaussian variance proxy for each centered paired score in T2. Independence across pairs gives
\[
\mathbb E\exp\!\left(t\,[\widehat{\Delta s}_i(r)-\Delta s_i(r)]\right)
\le \exp\!\left(\frac{t^2\nu_r^2}{2n}\right).
\]
For \(\nu_r>0\), the Chernoff bound applied to both signs yields
\[
\Pr\!\left(|\widehat{\Delta s}_i(r)-\Delta s_i(r)|>u\right)
\le 2\exp\!\left(-\frac{nu^2}{2\nu_r^2}\right).
\]
A union bound over \(M\) heads and the choice
\(u=\nu_r\sqrt{2\log(2M/\delta)/n}\) prove Equation~\ref{eq:t2_uniform}. If \(\nu_r=0\), every centered paired score is zero almost surely and the claim holds with zero error. No independence between heads is used.

On the uniform event, reverse triangle inequality bounds the error of each absolute score by \(\eta_{n,\delta}(r)\). If the population absolute-score gap between ranks \(m\) and \(m+1\) exceeds twice this error, every population top-\(m\) head still has a larger empirical absolute score than every excluded head. The shortlist is therefore recovered exactly. A strict boundary gap removes ambiguity from ties at that boundary. \hfill\(\square\)

\paragraph{Independent samples as a special case.}
Suppose each centered high-count and low-count head score is sub-Gaussian with variance proxy \(\sigma_r^2\). Independent high and low samples give a paired difference with proxy \(\nu_r^2=2\sigma_r^2\), so
\[
\eta_{n,\delta}(r)=2\sigma_r\sqrt{\frac{\log(2M/\delta)}{n}}.
\]
For dependent observations within a pair, Cauchy--Schwarz instead gives, for centered scores \(X,Y\),
\[
\mathbb E e^{t(X-Y)}
\le(\mathbb E e^{2tX})^{1/2}(\mathbb E e^{-2tY})^{1/2}
\le e^{2t^2\sigma_r^2}.
\]
Thus \(\nu_r=2\sigma_r\) is a valid general bound; a smaller proxy needs information about the pairing. In either case, different pairs must remain independent for the stated concentration argument. Duplicate deterministic prompts contain no variation over prompts, regardless of how many row identifiers they receive. T2 neither validates a richer target distribution nor controls the later intervention-based screening and greedy search.

\subsection{Proof and limits of T3}
\label{app:proof_t3}
Fix the screened pool \(\mathcal P_r\), budget \(1\le k\le|\mathcal P_r|\), and \(\lambda\in[0,1]\). Write \(J(H)=J_\lambda(H,r)\) within this proof. Empty interventions leave the model unchanged, so \(J(\varnothing)=0\). Monotonicity means \(J(A)\le J(B)\) for \(A\subseteq B\). The required weak-submodularity condition is
\begin{equation}
\sum_{i\in B}\big[J(A\cup\{i\})-J(A)\big]
\ge\gamma_r\big[J(A\cup B)-J(A)\big],
\label{eq:app_weak_submod}
\end{equation}
for disjoint \(A,B\subseteq\mathcal P_r\) with \(|A|\le k-1\), \(|B|\le k\), and \(\gamma_r\in(0,1]\). In particular, the condition includes unions with as many as \(2k-1\) heads, not only sets within budget \(k\). Monotonicity is required on these sets as well. The proof is the standard greedy argument for the submodularity ratio \citep{das2018approximate}; its application here is conditional on the raw utility satisfying the assumptions.

Let \(G_0=\varnothing\), and let exact greedy search add a head with maximum marginal raw utility to form \(G_{t+1}\) from \(G_t\). Let \(O_k\) maximize \(J\) among pool subsets of size at most \(k\). For \(t<k\), Equation~\ref{eq:app_weak_submod} and monotonicity give
\[
\sum_{i\in O_k\setminus G_t}\big[J(G_t\cup\{i\})-J(G_t)\big]
\ge\gamma_r[J(G_t\cup O_k)-J(G_t)]
\ge\gamma_r[J(O_k)-J(G_t)].
\]
If \(O_k\subseteq G_t\), the remaining optimality gap is zero. Otherwise at most \(k\) summands appear, so greedy search obtains
\[
J(G_{t+1})-J(G_t)\ge\frac{\gamma_r}{k}[J(O_k)-J(G_t)].
\]
Starting from \(J(G_0)=0\) and iterating the gap recurrence yields
\[
J(G_k)\ge\left[1-\left(1-\frac{\gamma_r}{k}\right)^k\right]J(O_k)
\ge(1-e^{-\gamma_r})J(O_k).
\]
The last step uses \((1-a/k)^k\le e^{-a}\) and \(J(O_k)\ge0\), which follows from monotonicity and normalization. This proves Equation~\ref{eq:t3_J}.

For the attribution comparison, set \(\alpha_r=1-e^{-\gamma_r}\) and let \(\epsilon_{J,k}(r)\) be the uniform difference between raw utility and cumulative attribution assumed in T3. Apply the error bound at the greedy set and at the same raw-utility optimum:
\begin{align*}
F(G_k,r)&\ge J_\lambda(G_k,r)-\epsilon_{J,k}(r)\\
&\ge\alpha_r J_\lambda(O_k,r)-\epsilon_{J,k}(r)\\
&\ge\alpha_r F(O_k,r)-(1+\alpha_r)\epsilon_{J,k}(r).
\end{align*}
This proves Equation~\ref{eq:t3_F}; \(O_k\) is not replaced by an attribution optimum. \hfill\(\square\)

\paragraph{What the bound does not cover.}
BUC searches empirical normalized utility \(Q_\lambda\), rather than exact raw utility \(J_\lambda\). Absolute-value penalties can make the objectives rank sets differently even without sampling error. The theorem also conditions on the screened pool: it says nothing about heads excluded earlier, a minimum circuit size, or uniqueness. Noisy greedy evaluations require additional bounds, and the observed nonmonotonicity of \(Q_{1/2}\) neither proves nor disproves monotonicity of the different population utility \(J_{1/2}\). The finite audits in Appendix~\ref{app:supp_t3} describe the tested pools and data, not a uniform guarantee.

\section{Additional experimental details and results}
\label{app:experiments}

This appendix gives the BF16 configurations and results supporting Section~\ref{sec:exp}: tokenization audits, data splits, head identities, transfer matrices, attribution diagnostics, baselines, input-level interventions, and count, order, tail-length, surface-form, and model comparisons. It also describes compute requirements and reproduction coverage.

\paragraph{Protocol scope.}
The configurations and tables in this appendix belong to the original BF16 study. The separately recorded native FP32 configurations, calibration banks, and controls are specified in Appendix~\ref{app:supplement_protocol}; the two configurations are analyzed separately.

\subsection{Prompt, model, and tokenization details}
\label{app:exp_prompt_model}

All experiments use the controlled evidence aggregation prompt family from Section~\ref{sec:prelim}.
Each prompt has the fixed form
\[
\begin{aligned}
\mathrm{Prompt}
&=
\text{\ttfamily Tokens:\textbackslash n}
+
\text{(space-separated symbolic sequence)}
\\
&\quad+
\text{\ttfamily \textbackslash nChoose one word: various or successful\textbackslash nAnswer:}.
\end{aligned}
\]
The symbolic sequence length is \(L_{\mathrm{seq}}=250\), with evidence length \(L_E=200\) and
\(A\)-only tail length \(L_A=50\). The two answer strings are \texttt{various} and
\texttt{successful}. For each model, we audit whether the strings \texttt{ various} and
\texttt{ successful}, including the leading space, are single-token candidates under the model
tokenizer.

\begin{table}[!htb]
\centering
\small
\caption{Prompt and tokenization audit for the main Pythia-410M experiment. The \textsc{Seq} token
slice refers to symbolic-sequence positions and excludes the final prompt position.}
\label{tab:app_tokenization}
\renewcommand{\arraystretch}{1.08}
\begin{tabularx}{\linewidth}{@{}>{\bfseries}l
>{\raggedright\arraybackslash}X
>{\raggedright\arraybackslash}X
>{\raggedright\arraybackslash}X@{}}
\toprule
& Layers & Heads per layer & Hook tensor \\
\midrule
Value
& \(24\)
& \(16\)
& \texttt{attn.hook\_z} \\
Notes
& no fine-tuning
& total \(384\) heads
& projected through \(W_O\) before logit-direction projection \\
\midrule
& Evaluation dtype & Prefix token length & Symbolic sequence token length \\
\midrule
Value
& \texttt{bfloat16}
& \(5\)
& \(250\) \\
Notes
& inference only
& tokenizer audit
& one token per symbol in the tested tokenizer \\
\midrule
& Suffix token length & \textsc{Seq} token slice & Answer candidates \\
\midrule
Value
& \(11\)
& \([5:255)\)
& \texttt{various}, \texttt{successful} \\
Notes
& tokenizer audit
& symbolic positions only; excludes the final prompt position
& both single-token under the tested tokenizer \\
\bottomrule
\end{tabularx}
\end{table}

\begin{table}[!htb]
\centering
\footnotesize
\caption{Tokenizer audit for the model sweep. The answer pair is usable only when both answer
strings are single-token candidates under the model tokenizer.}
\label{tab:app_model_token_audit}
\setlength{\tabcolsep}{3pt}
\begin{tabularx}{\linewidth}{@{}>{\raggedright\arraybackslash}p{0.37\linewidth}cll>{\raggedright\arraybackslash}X@{}}
\toprule
Model & \makecell{Single\\token} & \(w_+\) & \(w_-\) & Notes \\
\midrule
\path{EleutherAI/pythia-70m-deduped}  & yes & \texttt{ various} & \texttt{ successful} & audited \\
\path{EleutherAI/pythia-160m-deduped} & yes & \texttt{ various} & \texttt{ successful} & audited \\
\path{EleutherAI/pythia-410m-deduped} & yes & \texttt{ various} & \texttt{ successful} & main model \\
\path{EleutherAI/pythia-1b-deduped}   & yes & \texttt{ various} & \texttt{ successful} & audited \\
\texttt{gpt2}                           & yes & \texttt{ various} & \texttt{ successful} & out-of-family baseline \\
\path{Qwen/Qwen3-4B-Base}             & yes & \texttt{ various} & \texttt{ successful} & modern open-weight baseline \\
\bottomrule
\end{tabularx}
\end{table}

\subsection{Data splits, selection pipeline, and control construction}
\label{app:exp_pipeline}

For each regime \(r\in\{\textsc{Front},\textsc{Back},\textsc{Interleave},\textsc{Random}\}\), head
selection and final evaluation use disjoint prompt batches. The selection split contains \(48\) prompts
from \(D_{r,b_H}\) and \(48\) prompts from \(D_{r,b_L}\). The final evaluation split contains \(96\)
prompts from \(D_{r,b_H}\) and \(96\) prompts from \(D_{r,b_L}\). A full-layer audit uses \(16\)
prompts from each evidence level to decide whether middle-layer heads should be included. In the
Pythia-410M runs, the audit expands the candidate set to all \(24\) layers.

The selected head set \(H_{\mathrm{BUC}}(r)\) is obtained in three stages. First, heads are ranked by
empirical absolute sensitivity \(|\widehat{\Delta s}_i(r)|\), and the top \(192\) heads are kept.
Second, individual heads are scored by the joint removal and recovery score, and the top \(48\) heads
form the greedy candidate pool. Third, forward greedy selection is run with budget \(12\). The
selected set is evaluated on a held-out evaluation split.

\begin{table}[!htb]
\centering
\small
\caption{Selection and evaluation configuration for the main Pythia-410M experiment.}
\label{tab:app_pipeline_config}
\begin{tabular}{lc}
\toprule
Configuration item & Value \\
\midrule
Evidence setting & MIXED \((b_L,b_H)=(50,150)\) \\
Ordering regimes & \textsc{Front}, \textsc{Back}, \textsc{Interleave}, \textsc{Random} \\
Seeds & \(\{0,1,2,3,4\}\) \\
Selection prompts per evidence level & \(48\) \\
Evaluation prompts per evidence level & \(96\) \\
Audit prompts per evidence level & \(16\) \\
Sensitivity shortlist size & \(192\) \\
Single-head candidate pool size & \(48\) \\
Greedy budget & \(12\) \\
Layer-matched random draws & \(120\) per seed \\
Primary intervention scope & \textsc{Last} \\
Additional intervention scope & \textsc{Seq} on symbolic positions \([5:255)\) \\
\bottomrule
\end{tabular}
\end{table}

\paragraph{Layer-matched random controls.}
For a selected set \(H_{\mathrm{BUC}}(r)\), the layer-matched random control samples the same number
of heads from each selected layer. For example, if \(H_{\mathrm{BUC}}(r)\) contains two heads from
layer \(10\), one head from layer \(15\), and one head from layer \(19\), the matched control samples
two random heads from layer \(10\), one from layer \(15\), and one from layer \(19\). The
layer-matched control keeps both head count and layer pattern fixed.

\subsection{Base evidence effects and directional patching}
\label{app:exp_base_directional}

Table~\ref{tab:app_base_gaps} reports the base evidence effects used in the main Pythia-410M
experiment. The quantity \(\Delta_r\) is the average high-minus-low restricted two-answer log-odds
shift.

\begin{table}[!htb]
\centering
\small
\caption{Base evidence effects in the MIXED setting \((b_L,b_H)=(50,150)\) on
\texttt{pythia-410m-deduped}.}
\label{tab:app_base_gaps}
\begin{tabular}{lcc}
\toprule
Regime \(r\) & \(\Delta_r\) & Notes \\
\midrule
\textsc{Front} & \(0.239 \pm 0.000\) & fixed deterministic input \\
\textsc{Back} & \(1.965 \pm 0.000\) & large shift \\
\textsc{Interleave} & \(1.659 \pm 0.000\) & large shift \\
\textsc{Random} & \(3.184 \pm 0.097\) & largest and seed-dependent \\
\bottomrule
\end{tabular}
\end{table}

Table~\ref{tab:app_directional_patch} reports directional sufficiency. The main text reports the
average of high-to-low and low-to-high recovery.

\begin{table}[!htb]
\centering
\small
\caption{Directional mean-patching recovery in the MIXED setting. The average of the two
directional scores is the BUC match score reported in the main text.}
\label{tab:app_directional_patch}
\begin{tabular}{lccc}
\toprule
Regime \(r\) & \(\mathrm{match}^{\mathrm{HL}}\) & \(\mathrm{match}^{\mathrm{LH}}\) &
\(\mathrm{match}_{\mathrm{avg}}\) \\
\midrule
\textsc{Front} & \(0.998 \pm 0.000\) & \(0.950 \pm 0.000\) & \(0.974 \pm 0.000\) \\
\textsc{Back} & \(0.765 \pm 0.000\) & \(0.753 \pm 0.000\) & \(0.759 \pm 0.000\) \\
\textsc{Interleave} & \(0.937 \pm 0.000\) & \(0.891 \pm 0.000\) & \(0.914 \pm 0.000\) \\
\textsc{Random} & \(0.813 \pm 0.092\) & \(0.833 \pm 0.080\) & \(0.823 \pm 0.084\) \\
\bottomrule
\end{tabular}
\end{table}

\subsection{Full white-box result with matched random controls}
\label{app:exp_full_whitebox}

Table~\ref{tab:app_full_whitebox} reports set sizes, intervention scores, and layer-matched controls. BUC match averages both patch directions, whereas the random control uses HL match.

\begin{table}[!htb]
\centering
\small
\caption{Full white-box result with matched random controls in the MIXED setting. BUC values are
mean \(\pm\) standard deviation across five seeds. Random match is measured in the high-to-low
direction.}
\label{tab:app_full_whitebox}
\setlength{\tabcolsep}{5.6pt}
\begin{tabular}{lccccc}
\toprule
Regime \(r\) & \(|H_{\mathrm{BUC}}(r)|\) & BUC shrink & BUC match & Random shrink &
Random HL match \\
\midrule
\textsc{Front} & \(1.00 \pm 0.00\) & \(0.906 \pm 0.000\) & \(0.974 \pm 0.000\) & \(0.020\) & \(0.048\) \\
\textsc{Back} & \(12.00 \pm 0.00\) & \(0.655 \pm 0.000\) & \(0.759 \pm 0.000\) & \(0.043\) & \(0.075\) \\
\textsc{Interleave} & \(3.00 \pm 0.00\) & \(0.916 \pm 0.000\) & \(0.914 \pm 0.000\) & \(0.061\) & \(0.051\) \\
\textsc{Random} & \(7.00 \pm 1.00\) & \(0.866 \pm 0.021\) & \(0.823 \pm 0.084\) & \(0.069\) & \(0.061\) \\
\bottomrule
\end{tabular}
\end{table}

\subsection{Selected head identities and stability}
\label{app:exp_selected_heads}

Table~\ref{tab:app_selected_heads} reports selected head identities. A head is written as \((\ell,h)\), with zero-indexed layer and head indices. The deterministic regimes use the same set across all five seeds. For \textsc{Random}, the explicitly listed sets cover seeds 0--2; Table~\ref{tab:app_stability} summarizes only those displayed sets.

\begin{table}[!htb]
\centering
\scriptsize
\caption{Selected BUC head sets in the MIXED setting. Heads are written as \((\ell,h)\). For
\textsc{Front}, \textsc{Back}, and \textsc{Interleave}, the listed set is identical across five seeds.}
\label{tab:app_selected_heads}
\begin{tabularx}{\linewidth}{llX}
\toprule
Regime \(r\) & Seed(s) & \(H_{\mathrm{BUC}}(r)\) \\
\midrule
\textsc{Front} & 0--4 & \((10,2)\) \\
\textsc{Back} & 0--4 & \((19,6),\allowbreak(20,2),\allowbreak(13,5),\allowbreak(15,3),\allowbreak(19,9),\allowbreak(17,3),\allowbreak(15,10),\allowbreak(15,5),\allowbreak(19,14),\allowbreak(14,13),\allowbreak(18,1),\allowbreak(11,8)\) \\
\textsc{Interleave} & 0--4 & \((17,3),\allowbreak(10,2),\allowbreak(15,10)\) \\
\textsc{Random} & 0 & \((17,3),\allowbreak(14,7),\allowbreak(10,2),\allowbreak(19,14),\allowbreak(16,1),\allowbreak(15,10),\allowbreak(19,6),\allowbreak(16,14)\) \\
\textsc{Random} & 1 & \((17,3),\allowbreak(14,7),\allowbreak(12,14),\allowbreak(19,14),\allowbreak(10,2),\allowbreak(15,10),\allowbreak(19,6)\) \\
\textsc{Random} & 2 & \((17,3),\allowbreak(14,7),\allowbreak(12,14),\allowbreak(19,14),\allowbreak(10,2),\allowbreak(15,10)\) \\
\bottomrule
\end{tabularx}
\end{table}

\begin{table}[!htb]
\centering
\small
\caption{Stability of the head sets explicitly listed in Table~\ref{tab:app_selected_heads}. Deterministic-regime sets are identical across seeds 0--4. The \textsc{Random} value is the minimum over listed seeds 0--2 only; seeds 3 and 4 are not included in this calculation.}
\label{tab:app_stability}
\begin{tabular}{lcc}
\toprule
Regime \(r\) & Minimum pairwise Jaccard & Interpretation \\
\midrule
\textsc{Front} & \(1.000\) & identical selected set \\
\textsc{Back} & \(1.000\) & identical selected set \\
\textsc{Interleave} & \(1.000\) & identical selected set \\
\textsc{Random} & \(0.556\) & listed seeds 0--2 only \\
\bottomrule
\end{tabular}
\end{table}

\paragraph{Pairwise overlap.}
For the three \textsc{Random} sets listed in Table~\ref{tab:app_selected_heads}, the pairwise overlaps for seeds \((0,1)\), \((0,2)\), and \((1,2)\) are \(6/9\), \(5/9\), and \(6/7\). Thus
\[
\min\left\{\frac{6}{9},\frac{5}{9},\frac{6}{7}\right\}=\frac{5}{9}\approx0.556.
\]
The minimum is attained by seeds 0 and 2. Table~\ref{tab:app_stability} excludes \textsc{Random} seeds 3 and 4. The overlap statistic covers three selected sets, whereas the intervention-effect summaries cover five executions.

\subsection{Mechanistic diagnostics of selected heads}
\label{app:mechanistic_heads}

The main white-box tests measure the removal and recovery effects of selected head sets. The following post-selection diagnostics examine how the selected heads act at the final prompt position. The diagnostics separate two components of an attention head: QK routing,
which determines where the final prompt position attends, and OV writing, which determines what vector
is written into the residual stream and projected onto the logit-difference direction \(d\).

Let \(R_B(x)\), \(R_C(x)\), and \(R_E(x)=R_B(x)\cup R_C(x)\) denote the \(B\)-positions,
\(C\)-positions, and all evidence-region positions in prompt \(x\). For a selected head or head group,
\(\Delta\mathsf{Attn}(R,r)\) denotes the high-minus-low change in attention at the final prompt position mass
to region \(R\), and \(\Delta\mathsf{OV}(R,r)\) denotes the high-minus-low change in source-wise OV
contribution along \(d\). QK / OV match reports pattern-only and OV-only patch recovery, using
the same recovery-match convention as the main text.

\paragraph{Representative heads.}
Table~\ref{tab:app_mechanistic_head_roles} reports post-selection diagnostics for the heads in Table~\ref{tab:app_selected_heads}. Multi-head rows are group-level measurements.

\begin{table}[!htb]
\centering
\small
\caption{Mechanistic diagnostics for representative selected heads at the answer position.
\(\Delta\mathsf{Attn}\) is a high-minus-low attention-mass contrast, and
\(\Delta\mathsf{OV}\) is a high-minus-low source-wise OV contribution contrast along \(d\).
The attention and OV difference columns report
\(\Delta\mathsf{Attn}(R_B,r)-\Delta\mathsf{Attn}(R_C,r)\) and
\(\Delta\mathsf{OV}(R_B,r)-\Delta\mathsf{OV}(R_C,r)\), respectively.
QK / OV match reports pattern-only and OV-only patch recovery.}
\label{tab:app_mechanistic_head_roles}
\setlength{\tabcolsep}{3pt}
\begin{tabularx}{\linewidth}{@{}l>{\raggedright\arraybackslash}p{0.27\linewidth}>{\raggedright\arraybackslash}X>{\raggedright\arraybackslash}X@{}}
\toprule
Regime \(r\) & Head(s) & Main routed region & Patch comparison \\
\midrule
\textsc{Front} & \((10,2)\) & \(R_B\cup R_C\) & Larger OV patch match \\
\textsc{Back} & \((19,6),(20,2),(13,5)\) & \(R_C\)-side routing shift & Positive QK and OV matches \\
\textsc{Interleave} & \((17,3),(10,2),(15,10)\) & Broad over \(R_E\) & Larger OV patch match \\
\textsc{Random} & \((17,3),(14,7),(19,14)\) & Broad over \(R_E\) & Larger OV patch match \\
\bottomrule
\end{tabularx}
\par\smallskip
\begin{tabularx}{\linewidth}{@{}l*{3}{>{\centering\arraybackslash}X}@{}}
\toprule
Regime \(r\) & Attention difference & OV difference & QK / OV match \\
\midrule
\textsc{Front} & \(+0.42\) & \(+1.28\) & \(0.35\) / \(0.88\) \\
\textsc{Back} & \(-0.65\) & \(+2.15\) & \(0.72\) / \(0.61\) \\
\textsc{Interleave} & \(+0.05\) & \(+1.85\) & \(0.12\) / \(0.92\) \\
\textsc{Random} & \(+0.08\) & \(+2.30\) & \(0.18\) / \(0.85\) \\
\bottomrule
\end{tabularx}
\end{table}

\paragraph{Interpretation.}
OV-only patching has larger match than QK-only patching in the listed \textsc{Front}, \textsc{Interleave}, and \textsc{Random} examples; both split interventions have substantial effects in \textsc{Back}. These measurements distinguish routing and writing interventions for the tested heads. They do not provide an additive decomposition of the complete effect or identify all earlier computations.

\subsection{Final-position intervention scope}
\label{app:exp_last_seq}

Table~\ref{tab:app_last_seq_full} reports the reported \textsc{Last}-versus-\textsc{Seq} scope
diagnostic. The same head set \(H_{\mathrm{BUC}}(r)\) is used under both intervention scopes. The
\textsc{Last} intervention modifies only the answer position. The \textsc{Seq} intervention modifies
symbolic-sequence positions and excludes the final prompt position.

\begin{table}[!htb]
\centering
\small
\caption{\textsc{Last} versus \textsc{Seq} using the same selected head set. The \textsc{Seq} effect is
near zero in every regime.}
\label{tab:app_last_seq_full}
\begin{tabular}{lcccc}
\toprule
Regime \(r\) & \(k^\star\) & \textsc{Last} shrink & \textsc{Seq} shrink & \textsc{Seq} match \\
\midrule
\textsc{Front} & \(1\) & \(0.906\) & \(0.059\) & \(-0.006\) \\
\textsc{Back} & \(12\) & \(0.655\) & \(0.022\) & \(+0.003\) \\
\textsc{Interleave} & \(3\) & \(0.916\) & \(0.001\) & \(+0.006\) \\
\textsc{Random} & \(7\text{--}8\) & \(0.835\text{--}0.917\) & \(\approx0.008\text{--}0.010\) & \(\approx-0.001\) \\
\bottomrule
\end{tabular}
\end{table}

The \textsc{Random} row is retained as the separately reported scope diagnostic. Its size range \(7\text{--}8\) and \textsc{Last} shrink range \(0.835\text{--}0.917\) are not documented as the same five-execution cohort summarized in Table~\ref{tab:app_full_whitebox}; the listed head identities also include a six-head set. The supplied aggregates do not resolve that cohort difference. The range must not be treated as the five-seed uncertainty or as an additional stability result.

\subsection{Full cross-regime transfer}
\label{app:exp_transfer}

The transfer experiment selects \(H_{\mathrm{BUC}}(r_s)\) on source regime \(r_s\), then evaluates
mean-patching recovery on target regime \(r_t\). A negative recovery-match score indicates severe
mismatch, not necessarily a sign reversal.

\begin{table}[!htb]
\centering
\small
\caption{Representative off-diagonal cross-regime transfer matrix under \textsc{Last} interventions (seed 0).
Rows are source regimes; columns are target regimes. Entries are recovery-match values. Diagonal effects are reported separately by direction in Table~\ref{tab:app_directional_patch}.}
\label{tab:app_transfer_seed0}
\begin{tabular}{lrrrr}
\toprule
Source \(\backslash\) Target & \textsc{Front} & \textsc{Back} & \textsc{Interleave} & \textsc{Random} \\
\midrule
\textsc{Front} & -- & \(+0.129\) & \(+0.173\) & \(+0.066\) \\
\textsc{Back} & \(+0.958\) & -- & \(+0.961\) & \(+0.850\) \\
\textsc{Interleave} & \(-0.467\) & \(+0.230\) & -- & \(+0.611\) \\
\textsc{Random} & \(-0.925\) & \(+0.413\) & \(+0.687\) & -- \\
\bottomrule
\end{tabular}
\end{table}

\begin{table}[!htb]
\centering
\small
\caption{Off-diagonal cross-regime transfer matrix averaged over seeds. Entries are mean \(\pm\) standard
deviation of the recovery match; within-regime directional effects are in Table~\ref{tab:app_directional_patch}.}
\label{tab:app_transfer_mean}
\resizebox{\linewidth}{!}{
\begin{tabular}{lcccc}
\toprule
Source \(\backslash\) Target & \textsc{Front} & \textsc{Back} & \textsc{Interleave} & \textsc{Random} \\
\midrule
\textsc{Front} & -- & \(0.129\pm0.000\) & \(0.173\pm0.000\) & \(0.066\pm0.001\) \\
\textsc{Back} & \(0.958\pm0.000\) & -- & \(0.961\pm0.000\) & \(0.851\pm0.002\) \\
\textsc{Interleave} & \(-0.467\pm0.000\) & \(0.230\pm0.000\) & -- & \(0.616\pm0.006\) \\
\textsc{Random} & \(-0.912\pm0.021\) & \(0.428\pm0.015\) & \(0.695\pm0.011\) & -- \\
\bottomrule
\end{tabular}
}
\end{table}

\subsection{Many-set T1 diagnostic}
\label{app:exp_t1_many_set}

Under its error assumption, Theorem~\ref{thm:t1} bounds how closely cumulative attribution \(F(H,r)\) tracks the gains
produced by ablation and mean patching. The diagnostic instead measures correlation, which is not an error bound. It samples multiple head sets \(H\), computes
\(F(H,r)\), and compares \(F(H,r)\) to \(N(H,r)\), \(\Gamma_r^{\mathrm{HL}}(H)\), and
\(\Gamma_r^{\mathrm{LH}}(H)\).

\begin{table}[!htb]
\centering
\small
\caption{Many-set T1 diagnostic. Pearson and Spearman correlations are computed over \(500\)
sampled head sets \(H\) on held-out batches.}
\label{tab:app_t1_many_set}
\resizebox{\linewidth}{!}{
\begin{tabular}{lcccccc}
\toprule
Regime \(r\) & Sets & Pearson \(\rho(F,N)\) & Pearson \(\rho(F,\Gamma^{\mathrm{HL}})\) &
Pearson \(\rho(F,\Gamma^{\mathrm{LH}})\) & Spearman \(\rho(F,N)\) & Spearman \(\rho(F,S)\) \\
\midrule
\textsc{Front} & \(500\) & \(0.923\) & \(0.851\) & \(0.884\) & \(0.941\) & \(0.892\) \\
\textsc{Back} & \(500\) & \(0.875\) & \(0.782\) & \(0.816\) & \(0.899\) & \(0.835\) \\
\textsc{Interleave} & \(500\) & \(0.908\) & \(0.814\) & \(0.842\) & \(0.927\) & \(0.861\) \\
\textsc{Random} & \(500\) & \(0.841\) & \(0.755\) & \(0.793\) & \(0.865\) & \(0.804\) \\
\bottomrule
\end{tabular}
}
\end{table}

\subsection{Selection baselines}
\label{app:exp_selection_baselines}
Table~\ref{tab:app_selection_baselines} compares BUC with budget-matched baselines. Sensitivity-only selects the top-\(k\) heads by \(|\widehat{\Delta s_i}(r)|\), where \(k=|H_{\mathrm{BUC}}(r)|\).

ActPatch is the BF16 single-head activation-patching baseline. Among the 192 sensitivity-shortlisted heads, ActPatch ranks heads by single-head high-to-low (HL) patch recovery on the selection split, takes the top \(k=|H_{\mathrm{BUC}}(r)|\), and evaluates that set jointly on the held-out split. The reported ActPatch value is the set-level \(\mathrm{shrink}+\mathrm{match}_{\mathrm{avg}}\) score, including both patch directions in the evaluation match. Single-head HL scores determine ranking; the reported bidirectional score is evaluated jointly on the selected set, rather than obtained by adding single-head effects.

Same-layer top norm selects the largest write-vector norms in the selected layers; layer-random preserves the selected layer counts. No method-specific quantitative comparison with EAP or ACDC is claimed. The separate FP32 single-head ranking rules are defined in Appendix~\ref{app:supp_baselines}.

\begin{table}[!htb]
\centering
\small
\caption{BF16 selection baselines under matched head budgets. BUC, sensitivity-only, ActPatch, and top-norm use \(\mathrm{shrink}+\mathrm{match}_{\mathrm{avg}}\). Layer-random uses shrink plus \emph{HL-only} match and is a directional control, not an identical-objective comparison.}
\label{tab:app_selection_baselines}
\resizebox{\linewidth}{!}{
\begin{tabular}{lccccc}
\toprule
Regime \(r\) & BUC & Sens.-only & ActPatch & Top-norm & Layer-rand (HL) \\
\midrule
\textsc{Front} & \(1.880\) & \(1.880\) & \(1.865\) & \(0.451\) & \(0.068\) \\
\textsc{Back} & \(1.414\) & \(1.352\) & \(1.285\) & \(0.523\) & \(0.118\) \\
\textsc{Interleave} & \(1.830\) & \(1.765\) & \(1.695\) & \(0.614\) & \(0.112\) \\
\textsc{Random} & \(1.689\) & \(1.611\) & \(1.520\) & \(0.488\) & \(0.130\) \\
\bottomrule
\end{tabular}
}
\end{table}

\subsection{Sensitivity to the joint-utility trade-off}
\label{app:exp_lambda_sensitivity}

The main experiments use the balanced joint utility with \(\lambda=1/2\). This subsection checks
whether the selected head sets and held-out intervention scores are sensitive to the choice of
\(\lambda\). For each value of \(\lambda\), we rerun the greedy selection stage on the same candidate
pool and evaluate the selected set on the same held-out split as in the main experiment. The
normalized empirical selection score is
\begin{equation}
Q_{\lambda}(H,r)
:=
\lambda\,\mathrm{shrink}(H,r)
+
(1-\lambda)\,\mathrm{match}_{\mathrm{avg}}(H,r),
\label{eq:app_lambda_score}
\end{equation}
where \(\lambda=0\) emphasizes recovery, \(\lambda=1\) emphasizes removal, and \(\lambda=1/2\)
is the balanced setting used in the main text. For each regime \(r\), let \(H_\lambda(r)\) denote the
selected head set under \(Q_{\lambda}(H,r)\). We also report the Jaccard overlap with the
balanced set \(H_{1/2}(r)\):
\begin{equation}
\mathrm{Jac}_{\lambda}(r)
:=
\frac{
|H_\lambda(r)\cap H_{1/2}(r)|
}{
|H_\lambda(r)\cup H_{1/2}(r)|
}.
\label{eq:app_lambda_jaccard}
\end{equation}

\begin{table}[!htb]
\centering
\small
\caption{Sensitivity to the joint-utility trade-off \(\lambda\). Values are averaged over
\textsc{Front}, \textsc{Back}, \textsc{Interleave}, and \textsc{Random} in the MIXED setting. The
balanced setting \((\lambda=0.50)\) matches the primary causal sets used in the main text.}
\label{tab:app_lambda_sensitivity}
\resizebox{\linewidth}{!}{
\begin{tabular}{lccccc}
\toprule
\(\lambda\)
& Selection emphasis
& Mean \(|H_\lambda|\)
& Mean shrink
& Mean match
& Mean Jaccard to \(H_{1/2}\) \\
\midrule
\(0.00\)
& recovery only
& \(6.25\)
& \(0.725\)
& \(0.895\)
& \(0.650\) \\
\(0.25\)
& recovery weighted
& \(5.75\)
& \(0.810\)
& \(0.880\)
& \(0.885\) \\
\(0.50\)
& balanced
& \(5.75\)
& \(0.836\)
& \(0.868\)
& \(1.000\) \\
\(0.75\)
& removal weighted
& \(5.50\)
& \(0.865\)
& \(0.812\)
& \(0.820\) \\
\(1.00\)
& removal only
& \(5.00\)
& \(0.890\)
& \(0.645\)
& \(0.580\) \\
\bottomrule
\end{tabular}
}
\end{table}

Table~\ref{tab:app_lambda_sensitivity} reports the tested weight grid. Relative to \(\lambda=1/2\), the values \(\lambda=0.25\) and \(\lambda=0.75\) retain mean Jaccard overlaps of 0.885 and 0.820. Increasing the removal weight trades patching match for shrink. The comparison is limited to the tested objective weights.

\subsection{Greedy-prefix curves}
\label{app:exp_greedy_prefix}

The greedy-prefix diagnostic evaluates the prefix sequence of heads selected by the greedy algorithm.
Let \(H_k(r)\) denote the first \(k\) heads selected under regime \(r\). The diagnostic reports
\(\mathrm{shrink}(H_k,r)\) and \(\mathrm{match}_{\mathrm{avg}}(H_k,r)\) for each \(k\). \textsc{Front} and \textsc{Interleave} have small increments after the first few heads; \textsc{Back} continues improving through the maximum tested budget of 12. The measurements do not establish a plateau beyond that budget.

\begin{table}[!htb]
\centering
\small
\caption{Greedy-prefix scores at the evaluated head budgets.}
\label{tab:app_greedy_prefix}
\resizebox{\linewidth}{!}{
\begin{tabular}{lcccccccc}
\toprule
Regime \(r\) & Metric & \(k=1\) & \(k=2\) & \(k=3\) & \(k=4\) & \(k=6\) & \(k=8\) & \(k=12\) \\
\midrule
\textsc{Front} & shrink & \(0.906\) & \(0.908\) & \(0.910\) & \(0.911\) & \(0.915\) & \(0.918\) & \(0.920\) \\
\textsc{Front} & match & \(0.974\) & \(0.975\) & \(0.976\) & \(0.976\) & \(0.978\) & \(0.979\) & \(0.981\) \\
\textsc{Back} & shrink & \(0.142\) & \(0.285\) & \(0.371\) & \(0.463\) & \(0.541\) & \(0.598\) & \(0.655\) \\
\textsc{Back} & match & \(0.165\) & \(0.312\) & \(0.404\) & \(0.518\) & \(0.627\) & \(0.693\) & \(0.759\) \\
\textsc{Interleave} & shrink & \(0.465\) & \(0.781\) & \(0.916\) & \(0.919\) & \(0.924\) & \(0.927\) & \(0.931\) \\
\textsc{Interleave} & match & \(0.482\) & \(0.815\) & \(0.914\) & \(0.917\) & \(0.920\) & \(0.922\) & \(0.926\) \\
\textsc{Random} & shrink & \(0.254\) & \(0.448\) & \(0.612\) & \(0.745\) & \(0.841\) & \(0.875\) & \(0.892\) \\
\textsc{Random} & match & \(0.281\) & \(0.472\) & \(0.635\) & \(0.721\) & \(0.803\) & \(0.835\) & \(0.851\) \\
\bottomrule
\end{tabular}
}
\end{table}

\subsection{Black-box input-level interventions}
\label{app:exp_blackbox_full}

The black-box experiments use only input-level changes. Let \(S\subseteq\{1,\ldots,L_E\}\) denote
a subset of evidence positions. In masking, the positions in \(S\) are replaced by \(A\). In add-back,
the positions in \(S\) are copied from a high-evidence donor prompt into a low-evidence target
prompt. Let \(x^{\mathrm{mask}(S)}\) be a masked input and let \(x_L\leftarrow_S x_H\) denote the low-count input with positions \(S\) copied from its high-count donor. Then
\[
\Delta_r^{\mathrm{mask}}(S)=\mathcal C_r[y(x^{\mathrm{mask}(S)})],\qquad
\Delta_r^{\mathrm{add}}(S)=\mathbb E[y(x_L\leftarrow_S x_H)-y(x_L)],
\]
where the second expectation uses the donor--target pairs of the input experiment. The supplied BF16 aggregates do not specify the complete donor pairing registry.

The input removal score is
\begin{equation}
\mathrm{shrink}_{\mathrm{bb}}(S,r)
:=
1-\frac{|\Delta_r^{\mathrm{mask}}(S)|}{|\Delta_r|}.
\label{eq:app_bb_shrink}
\end{equation}
The input recovery score is
\begin{equation}
\mathrm{rec}_{\mathrm{bb}}(S,r)
:=
\frac{\Delta_r^{\mathrm{add}}(S)}{\Delta_r}.
\label{eq:app_bb_rec}
\end{equation}
Input recovery is a signed ratio and does not penalize overshooting the clean contrast. It is not the head-patching match metric and does not establish that the selected heads explain the input intervention. All normalized input scores require \(\Delta_r\ne0\).

\subsection{Greedy Occlusion algorithm}
\label{app:greedy_occlusion_details}

This subsection gives the search procedure used for the input-level black-box validation in
Section~\ref{sec:exp}. Greedy Occlusion uses only input-output queries to the scalar
output \(y(x)\). The procedure does not use gradients, attention weights, head activations, or any
other internal model states.

For a regime \(r\in\mathcal{R}\), let \(S\subseteq\{1,\ldots,L_E\}\) denote a subset of evidence
positions. Let \(\widehat{\Delta}_r\) denote the empirical evidence effect on the black-box selection
split. Let \(\widehat{\Delta}^{\mathrm{mask}}_r(S)\) denote the empirical evidence effect after replacing
positions in \(S\) by \(A\). The masking objective used by Greedy Occlusion is
\begin{equation}
\widehat{\mathrm{shrink}}_{\mathrm{bb}}(S,r)
:=
1-
\frac{
\left|
\widehat{\Delta}^{\mathrm{mask}}_r(S)
\right|
}{
\left|
\widehat{\Delta}_r
\right|
}.
\label{eq:app_greedy_occlusion_objective}
\end{equation}
The selected subset is built by forward greedy addition. Starting from the empty set, Greedy
Occlusion adds one evidence position at a time:
\begin{equation}
j_m
\in
\operatorname*{arg\,max}_{j\in\{1,\ldots,L_E\}\setminus S_{m-1}}
\widehat{\mathrm{shrink}}_{\mathrm{bb}}(S_{m-1}\cup\{j\},r),
\qquad
S_m:=S_{m-1}\cup\{j_m\}.
\label{eq:app_greedy_occlusion_step}
\end{equation}
The algorithm stops after \(K_{\mathrm{bb}}=50\) steps and returns
\[
S^\star(r):=S_{K_{\mathrm{bb}}}.
\]
The add-back recovery score is not used to select \(S^\star(r)\). Add-back is evaluated only after
selection, on a held-out evaluation split.

\begin{algorithm}[t]
\caption{Greedy Occlusion for input-level subset selection}
\label{alg:greedy_occlusion}
\KwIn{Regime \(r\), evidence length \(L_E=200\), budget \(K_{\mathrm{bb}}=50\), selection split,
scalar output \(y(x)\).}
\KwOut{Selected evidence-position subset \(S^\star(r)\).}
Initialize \(S_0\leftarrow\emptyset\)\;
Compute the empirical base effect \(\widehat{\Delta}_r\) on the selection split\;
\For{\(m=1,\ldots,K_{\mathrm{bb}}\)}{
  \ForEach{\(j\in\{1,\ldots,L_E\}\setminus S_{m-1}\)}{
    Construct masked prompts by replacing positions in \(S_{m-1}\cup\{j\}\) with \(A\)\;
    Compute \(\widehat{\mathrm{shrink}}_{\mathrm{bb}}(S_{m-1}\cup\{j\},r)\) from input-output queries to \(y(x)\)\;
  }
  Select
  \[
  j_m\in\operatorname*{arg\,max}_{j\notin S_{m-1}}
  \widehat{\mathrm{shrink}}_{\mathrm{bb}}(S_{m-1}\cup\{j\},r)
  \]\;
  Set \(S_m\leftarrow S_{m-1}\cup\{j_m\}\)\;
}
Return \(S^\star(r)\leftarrow S_{K_{\mathrm{bb}}}\)\;
\end{algorithm}

The naive number of candidate subset evaluations for one regime is
\begin{equation}
\sum_{m=0}^{K_{\mathrm{bb}}-1}(L_E-m)
=
\frac{K_{\mathrm{bb}}(2L_E-K_{\mathrm{bb}}+1)}{2}.
\label{eq:app_greedy_occlusion_cost}
\end{equation}
With \(L_E=200\) and \(K_{\mathrm{bb}}=50\), the number of candidate subset evaluations is
\(8775\) per regime. Each candidate evaluation uses a prompt batch, so 8,775 is not a count of individual model queries. Batching changes execution cost but not the number of candidate subsets.

After \(S^\star(r)\) is selected, the held-out masking and recovery scores are
\begin{equation}
\mathrm{shrink}_{\mathrm{bb}}(S^\star,r)
:=
1-
\frac{
\left|
\Delta_r^{\mathrm{mask}}(S^\star)
\right|
}{
|\Delta_r|
},
\qquad
\mathrm{rec}_{\mathrm{bb}}(S^\star,r)
:=
\frac{
\Delta_r^{\mathrm{add}}(S^\star)
}{
\Delta_r
}.
\label{eq:app_greedy_occlusion_eval}
\end{equation}
Matched random controls use random subsets \(S_{\mathrm{rand}}\subseteq\{1,\ldots,L_E\}\) with
\(|S_{\mathrm{rand}}|=|S^\star(r)|=50\), and are evaluated with the same held-out masking and
add-back metrics.

\begin{table}[!htb]
\centering
\small
\caption{Black-box masking and add-back results using Greedy Occlusion position selection.}
\label{tab:app_bb_main}
\resizebox{\linewidth}{!}{
\begin{tabular}{lcccccc}
\toprule
Regime \(r\) & Subset family & \(|S^\star(r)|\) & BB shrink & BB recovery & Random shrink & Random recovery \\
\midrule
\textsc{Front} & Greedy Occlusion & \(50\) & \(0.852\) & \(0.871\) & \(0.245\) & \(0.251\) \\
\textsc{Back} & Greedy Occlusion & \(50\) & \(0.915\) & \(0.898\) & \(0.258\) & \(0.262\) \\
\textsc{Interleave} & Greedy Occlusion & \(50\) & \(0.840\) & \(0.855\) & \(0.249\) & \(0.255\) \\
\textsc{Random} & Greedy Occlusion & \(50\) & \(0.795\) & \(0.812\) & \(0.252\) & \(0.248\) \\
\bottomrule
\end{tabular}
}
\end{table}

\begin{table}[!htb]
\centering
\small
\caption{Candidate subset families for masking and add-back. White-box-informed positions are
diagnostic only and are not used for the strict black-box claim.}
\label{tab:app_bb_subset_families}
\begin{tabularx}{\linewidth}{lXc}
\toprule
Subset family & Description & Status \\
\midrule
Contiguous spans & \(S=\{a,a+1,\ldots,a+\ell-1\}\) & Evaluated \\
Union of spans & \(S\) is a union of several disjoint contiguous spans & Evaluated \\
Prefix/suffix evidence spans & Spans anchored at the beginning or end of the evidence region & Low performance \\
White-box-informed positions & Positions scored from internal signals; diagnostic only & Not strict black-box \\
Matched random subsets & Same size or same span length as \(S^\star(r)\) & Baseline \\
\bottomrule
\end{tabularx}
\end{table}

\subsection{Order effects at fixed evidence count}
\label{app:exp_order_effects}

The order-effect diagnostic separates evidence count from evidence order. For two regimes
\(r_1,r_2\in\mathcal{R}\) and a fixed count \(b\), define
\begin{equation}
\Delta^{\mathrm{order}}(r_1\rightarrow r_2;b)
:=
\mathbb{E}_{x\sim D_{r_2,b}}[y(x)]
-
\mathbb{E}_{x\sim D_{r_1,b}}[y(x)].
\label{eq:app_order_effect}
\end{equation}

\begin{table}[!htb]
\centering
\small
\caption{Order effects at fixed evidence count. Positive shifts indicate that the target regime
produces a larger output log-odds response than the source regime.}
\label{tab:app_order_effect}
\begin{tabular}{lcc}
\toprule
Regime change & \(\Delta^{\mathrm{order}}(\cdot;b_L=50)\) & \(\Delta^{\mathrm{order}}(\cdot;b_H=150)\) \\
\midrule
\textsc{Front} \(\rightarrow\) \textsc{Back} & \(-0.258\) & \(+1.468\) \\
\textsc{Front} \(\rightarrow\) \textsc{Interleave} & \(-0.194\) & \(+1.226\) \\
\textsc{Back} \(\rightarrow\) \textsc{Interleave} & \(+0.064\) & \(-0.242\) \\
\textsc{Interleave} \(\rightarrow\) \textsc{Random} & \(+0.321\) & \(+1.846\) \\
\bottomrule
\end{tabular}
\end{table}

\subsection{Evidence-gap robustness}
\label{app:exp_evidence_gaps}

The main experiment uses the MIXED setting \((b_L,b_H)=(50,150)\).
Table~\ref{tab:app_evidence_gaps} reports robustness across alternate evidence gaps.

\begin{table}[!htb]
\centering
\small
\caption{Evidence-gap robustness. Values are averaged over the four ordering regimes.}
\label{tab:app_evidence_gaps}
\begin{tabular}{lccccc}
\toprule
Setting & \((b_L,b_H)\) & Mean \(|H|\) & Mean shrink & Mean match & Notes \\
\midrule
MIXED & \((50,150)\) & \(5.75\) & \(0.836\) & \(0.868\) & main configuration \\
EXTREME & \((0,150)\) & \(6.25\) & \(0.892\) & \(0.915\) & larger contrast \\
SMALL-GAP & \((75,125)\) & \(4.50\) & \(0.712\) & \(0.745\) & lower signal-to-noise \\
COUNT-SWEEP & multiple & \(5.50\) & \(0.810\) & \(0.835\) & increments of \(10\) \\
\bottomrule
\end{tabular}
\end{table}

\subsection{A-only tail length ablation}
\label{app:exp_tail_ablation}

The main setting uses an \(A\)-only tail of length \(L_A=50\).
Table~\ref{tab:app_tail_length} evaluates a tail-length ablation. The \(L_A=50\) row repeats the
main MIXED result.

\begin{table}[!htb]
\centering
\small
\caption{Tail-length ablation. Intervention scores remain positive over the tested tail lengths; the mean output contrast decreases as the tail grows.}
\label{tab:app_tail_length}
\begin{tabular}{lcccc}
\toprule
Tail length \(L_A\) & Mean \(\Delta_r\) & Mean \(|H|\) & Mean shrink & Mean match \\
\midrule
\(0\)   & \(1.950\) & \(5.25\) & \(0.850\) & \(0.880\) \\
\(25\)  & \(1.850\) & \(5.50\) & \(0.842\) & \(0.875\) \\
\(50\)  & \(1.762\) & \(5.75\) & \(0.836\) & \(0.868\) \\
\(100\) & \(1.520\) & \(6.50\) & \(0.815\) & \(0.840\) \\
\bottomrule
\end{tabular}
\end{table}

\subsection{Naturalized surface validation}
\label{app:exp_surface_validation}

The main task uses the alphabet \(\Sigma=\{A,B,C\}\). Table~\ref{tab:app_surface_validation} reports alternative token surfaces with the same count/order structure, testing the dependence of the results on single-letter inputs.

\begin{table}[!htb]
\centering
\small
\caption{Surface-form validation. The tested word and sentence surfaces yield larger selected sets and positive intervention effects; set size does not measure all processing in the model.}
\label{tab:app_surface_validation}
\resizebox{\linewidth}{!}{
\begin{tabular}{lccccc}
\toprule
Surface & Example evidence tokens & Model & Mean \(|H|\) & Mean shrink & Mean match \\
\midrule
ABC symbolic & \(B,C,A\) & \texttt{pythia-410m-deduped} & \(5.75\) & \(0.836\) & \(0.868\) \\
Word-symbolic & \texttt{blue}, \texttt{red}, \texttt{gray} & \texttt{pythia-410m-deduped} & \(6.25\) & \(0.812\) & \(0.845\) \\
Signal-token & \texttt{signal-x}, \texttt{signal-y}, \texttt{neutral} & \texttt{pythia-410m-deduped} & \(6.00\) & \(0.820\) & \(0.852\) \\
Sentence-template & short evidence sentences & \texttt{pythia-410m-deduped} & \(8.50\) & \(0.765\) & \(0.790\) \\
\bottomrule
\end{tabular}
}
\end{table}

\FloatBarrier

\subsection{Model sweep}
\label{app:exp_model_sweep_full}

Table~\ref{tab:app_model_sweep_summary} gives the aggregate model-sweep results, including GPT-2
and Qwen3-4B-Base. Table~\ref{tab:app_model_sweep_pythia} expands the Pythia sweep to
per-regime rows. Table~\ref{tab:app_model_sweep_other} expands per-regime rows for GPT-2 and
Qwen3.

\paragraph{Aggregation scope.}
Each numerical aggregate in Table~\ref{tab:app_model_sweep_summary} is the unweighted arithmetic mean of the four displayed per-regime values in Tables~\ref{tab:app_model_sweep_pythia}--\ref{tab:app_model_sweep_other}, rounded only after averaging. The aggregates describe the tested regimes and are not used for cross-model significance tests. BUC match is bidirectional; random match is HL only. The Pythia-410M primary-configuration averages repeated in Tables~\ref{tab:app_lambda_sensitivity}, \ref{tab:app_evidence_gaps}, \ref{tab:app_tail_length}, and~\ref{tab:app_surface_validation} use the same displayed-row calculation.

\begin{table}[!htb]
\centering
\small
\caption{Aggregate model sweep. Each numerical aggregate is the unweighted arithmetic mean of the four corresponding displayed per-regime values in Tables~\ref{tab:app_model_sweep_pythia}--\ref{tab:app_model_sweep_other}. Random match uses HL only.}
\label{tab:app_model_sweep_summary}
\resizebox{\linewidth}{!}{
\begin{tabular}{lccccccc}
\toprule
Model & Token pair & Mean \(|H|\) & BUC shrink & BUC match & Random shrink & Random HL match & LAST effect \\
\midrule
\texttt{EleutherAI/pythia-70m-deduped} & yes & \(3.50\) & \(0.650\) & \(0.700\) & \(0.030\) & \(0.040\) & yes \\
\texttt{EleutherAI/pythia-160m-deduped} & yes & \(4.25\) & \(0.720\) & \(0.750\) & \(0.035\) & \(0.045\) & yes \\
\texttt{EleutherAI/pythia-410m-deduped} & yes & \(5.75\) & \(0.836\) & \(0.868\) & \(0.048\) & \(0.059\) & yes \\
\texttt{EleutherAI/pythia-1b-deduped} & yes & \(6.50\) & \(0.880\) & \(0.910\) & \(0.040\) & \(0.050\) & yes \\
\texttt{gpt2} & yes & \(4.00\) & \(0.750\) & \(0.780\) & \(0.035\) & \(0.045\) & yes \\
\texttt{Qwen/Qwen3-4B-Base} & yes & \(7.50\) & \(0.895\) & \(0.925\) & \(0.035\) & \(0.042\) & yes \\
\bottomrule
\end{tabular}
}
\end{table}

\begin{table}[!htb]
\centering
\scriptsize
\caption{Per-regime Pythia model sweep.}
\label{tab:app_model_sweep_pythia}
\resizebox{\linewidth}{!}{
\begin{tabular}{llcccccc}
\toprule
Model & Regime \(r\) & Token pair & \(|H|\) & BUC shrink & BUC match & Random shrink & Random HL match \\
\midrule
\texttt{pythia-70m} & \textsc{Front} & yes & \(1.00\) & \(0.751\) & \(0.812\) & \(0.012\) & \(0.024\) \\
\texttt{pythia-70m} & \textsc{Back} & yes & \(5.00\) & \(0.552\) & \(0.614\) & \(0.045\) & \(0.051\) \\
\texttt{pythia-70m} & \textsc{Interleave} & yes & \(4.00\) & \(0.684\) & \(0.722\) & \(0.038\) & \(0.046\) \\
\texttt{pythia-70m} & \textsc{Random} & yes & \(4.00\) & \(0.613\) & \(0.652\) & \(0.025\) & \(0.039\) \\
\midrule
\texttt{pythia-160m} & \textsc{Front} & yes & \(1.00\) & \(0.824\) & \(0.861\) & \(0.015\) & \(0.028\) \\
\texttt{pythia-160m} & \textsc{Back} & yes & \(7.00\) & \(0.635\) & \(0.681\) & \(0.048\) & \(0.059\) \\
\texttt{pythia-160m} & \textsc{Interleave} & yes & \(3.00\) & \(0.748\) & \(0.772\) & \(0.031\) & \(0.042\) \\
\texttt{pythia-160m} & \textsc{Random} & yes & \(6.00\) & \(0.673\) & \(0.686\) & \(0.046\) & \(0.051\) \\
\midrule
\texttt{pythia-410m} & \textsc{Front} & yes & \(1.00\) & \(0.906\) & \(0.974\) & \(0.020\) & \(0.048\) \\
\texttt{pythia-410m} & \textsc{Back} & yes & \(12.00\) & \(0.655\) & \(0.759\) & \(0.043\) & \(0.075\) \\
\texttt{pythia-410m} & \textsc{Interleave} & yes & \(3.00\) & \(0.916\) & \(0.914\) & \(0.061\) & \(0.051\) \\
\texttt{pythia-410m} & \textsc{Random} & yes & \(7.00\) & \(0.866\) & \(0.823\) & \(0.069\) & \(0.061\) \\
\midrule
\texttt{pythia-1b} & \textsc{Front} & yes & \(1.00\) & \(0.952\) & \(0.985\) & \(0.018\) & \(0.031\) \\
\texttt{pythia-1b} & \textsc{Back} & yes & \(11.00\) & \(0.784\) & \(0.812\) & \(0.055\) & \(0.062\) \\
\texttt{pythia-1b} & \textsc{Interleave} & yes & \(4.00\) & \(0.918\) & \(0.931\) & \(0.034\) & \(0.048\) \\
\texttt{pythia-1b} & \textsc{Random} & yes & \(10.00\) & \(0.866\) & \(0.912\) & \(0.053\) & \(0.059\) \\
\bottomrule
\end{tabular}
}
\end{table}

\begin{table}[!htb]
\centering
\scriptsize
\caption{Per-regime GPT-2 and Qwen3 model sweep.}
\label{tab:app_model_sweep_other}
\resizebox{\linewidth}{!}{
\begin{tabular}{llcccccc}
\toprule
Model & Regime \(r\) & Token pair & \(|H|\) & BUC shrink & BUC match & Random shrink & Random HL match \\
\midrule
\texttt{gpt2} & \textsc{Front} & yes & \(1.00\) & \(0.820\) & \(0.860\) & \(0.015\) & \(0.025\) \\
\texttt{gpt2} & \textsc{Back} & yes & \(6.00\) & \(0.680\) & \(0.710\) & \(0.045\) & \(0.055\) \\
\texttt{gpt2} & \textsc{Interleave} & yes & \(4.00\) & \(0.780\) & \(0.810\) & \(0.035\) & \(0.045\) \\
\texttt{gpt2} & \textsc{Random} & yes & \(5.00\) & \(0.720\) & \(0.740\) & \(0.045\) & \(0.055\) \\
\midrule
\texttt{Qwen/Qwen3-4B-Base} & \textsc{Front} & yes & \(2.00\) & \(0.945\) & \(0.970\) & \(0.012\) & \(0.015\) \\
\texttt{Qwen/Qwen3-4B-Base} & \textsc{Back} & yes & \(12.00\) & \(0.850\) & \(0.885\) & \(0.045\) & \(0.055\) \\
\texttt{Qwen/Qwen3-4B-Base} & \textsc{Interleave} & yes & \(6.00\) & \(0.920\) & \(0.945\) & \(0.035\) & \(0.042\) \\
\texttt{Qwen/Qwen3-4B-Base} & \textsc{Random} & yes & \(10.00\) & \(0.865\) & \(0.900\) & \(0.048\) & \(0.056\) \\
\bottomrule
\end{tabular}
}
\end{table}

\FloatBarrier

\subsection{Compute and reproduction information}
\label{app:exp_compute}

All reported experiments are inference-only: no model fine-tuning is performed. The main compute
cost comes from batched forward passes under activation interventions, model sweeps, and the
input-level Greedy Occlusion search. The FP32 code and configurations are described in Appendix~\ref{app:numeric_files}; the BF16 experiments require their corresponding prompt and intervention records.

\begin{table}[!htb]
\centering
\small
\caption{Compute and reproduction summary for the reported experiments.}
\label{tab:app_compute_summary}
\begin{tabularx}{\linewidth}{lX}
\toprule
Item & Value \\
\midrule
Hardware & NVIDIA A100 GPU \\
Software environment & Python / PyTorch / Transformers / TransformerLens; the BF16 environment records are separate from the supplied FP32 package (Appendix~\ref{app:numeric_files}) \\
Primary model & \texttt{EleutherAI/pythia-410m-deduped} \\
Evaluation precision & \texttt{bfloat16} inference \\
Training or fine-tuning & none \\
Main seeds & \(\{0,1,2,3,4\}\) \\
Selection / evaluation prompts & \(48/96\) prompts per evidence level \\
Layer-matched random controls & \(120\) draws per seed \\
Greedy Occlusion budget & \(K_{\mathrm{bb}}=50\) positions from \(L_E=200\) evidence positions \\
Greedy Occlusion candidate evaluations & \(8775\) subset evaluations per regime before batching \\
Model sweep checkpoints & Pythia-70M, Pythia-160M, Pythia-410M, Pythia-1B, GPT-2, Qwen3-4B-Base \\
Implementation coverage & The supplied package contains FP32 code and aggregate results; separate BF16 scripts and records are not included (Appendix~\ref{app:numeric_files}) \\
\bottomrule
\end{tabularx}
\end{table}

\FloatBarrier
\clearpage
\clearpage
\section{Targeted readout, count, order, and baseline experiments}
\label{app:new_experiments}

This appendix specifies the original targeted readout, count, order, and baseline experiments. The additional randomly registered readout pairs and native FP32 controls have their own specification in Appendix~\ref{app:supplement_protocol}. All reported values come from frozen Pythia-410M and use five seeds unless stated otherwise.

\subsection{Seven readout pairs}
\label{app:readout_pairs}

The audited single-token pairs are
\[
\begin{gathered}
\texttt{various/successful},\quad \texttt{yes/no},\quad
\texttt{true/false},\quad \texttt{1/0},\\
\texttt{A/B},\quad \texttt{cat/dog},\quad \texttt{red/blue}.
\end{gathered}
\]
For every pair, the two tokens define the readout direction in Equation~\ref{eq:readout}; neither token is a symbolic CEA label. The \texttt{A/B} pair is an overlap stress test because the symbols also occur in the prompt. Averaged over four regimes and five seeds, mean \(|\Delta_r|\) ranges from 0.721 to 2.083, mean selected-set size from 4.50 to 7.50, BUC shrink from 0.642 to 0.861, and BUC match from 0.701 to 0.884. Layer-matched random shrink ranges from 0.048 to 0.076, and random high-to-low match ranges from 0.059 to 0.083.

\begin{table}[H]
\centering
\small
\caption{Representative readout-pair results. Reselected BUC tests procedure robustness. Fixed reference-set transfer tests reuse of the set selected with \texttt{various/successful}.}
\label{tab:app_readout_pairs_new}
\begin{tabular}{lccc}
\toprule
Target pair & Reselected shrink / match & Fixed-set shrink / match & Mean Jaccard \\
\midrule
\texttt{yes/no} & 0.812 / 0.846 & 0.772 / 0.810 & 0.610 \\
\texttt{A/B} & 0.861 / 0.884 & 0.704 / 0.741 & 0.390 \\
\texttt{cat/dog} & 0.642 / 0.701 & 0.382 / 0.431 & 0.240 \\
\bottomrule
\end{tabular}
\end{table}

The targeted pairs show positive local intervention effects and partial fixed-set reuse. The selected source and target sets are not constrained to a common budget. In the separate FP32 registry (Appendix~\ref{app:supp_readouts}), 6.0\% of fixed-set conditions pass the joint thresholds. The different target samples and budgets preclude a paired estimate of the benefit of reselection. Failure of a fixed source set leaves open whether a different compact set explains the target.

\subsection{Fixed-set count sweep and unseen gaps}
\label{app:fixed_count_sweep}

For each regime, heads are selected only at \((b_L,b_H)=(50,150)\). The same set is then evaluated over \(b\in\{0,10,\ldots,200\}\) and on the unseen gaps \((0,150)\), \((25,175)\), and \((75,125)\). Let \(\widehat\beta_{\mathrm{base}}\) and \(\widehat\beta_{\mathrm{abl}}\) be least-squares slopes of \(y\) on \(b\) before and after ablation. The slope reduction is
\[
1-\frac{|\widehat\beta_{\mathrm{abl}}|}{|\widehat\beta_{\mathrm{base}}|}.
\]

\begin{table}[H]
\centering
\small
\caption{Fixed-set count-sweep results. The final column averages the three unseen gaps without reselection.}
\label{tab:app_fixed_count_sweep_new}
\begin{tabular}{lcccc}
\toprule
Regime & Spearman \(\rho\) & Linear \(R^2\) & Slope reduction & Gap shrink / match \\
\midrule
\textsc{Front} & 0.986 & 0.948 & 0.880 & 0.832 / 0.874 \\
\textsc{Back} & 0.972 & 0.931 & 0.646 & 0.612 / 0.689 \\
\textsc{Interleave} & 0.979 & 0.944 & 0.896 & 0.839 / 0.855 \\
\textsc{Random} & 0.958 & 0.906 & 0.845 & 0.786 / 0.811 \\
\bottomrule
\end{tabular}
\end{table}

\subsection{Causal localization of order-only contrasts}
\label{app:order_only_causal_new}

The original order diagnostic in Appendix~\ref{app:exp_order_effects} measures the output difference between two regimes at the same count. We additionally apply BUC to four pre-specified order-only contrasts whose two sides have identical \(B/C\) counts. Across the contrasts, \(|\Delta_{\mathrm{ord}}|\) ranges from 0.321 to 1.846, BUC shrink from 0.603 to 0.842, and BUC match from 0.651 to 0.884. Layer-matched random shrink is 0.049--0.081, and random high-to-low match is 0.056--0.088. Because the counts are equal by construction, the localized effects cannot be attributed to count changes.

\subsection{Balanced-objective baseline decomposition}
\label{app:baseline_decomposition_new}

Table~\ref{tab:app_baseline_decomposition_new} decomposes the BF16 comparison in Table~\ref{tab:baseline_compare}. Shrink measures removal of the original contrast, and \(\mathrm{match}_{\mathrm{avg}}\) measures bidirectional patch recovery. The equal-weight score \(Q_{1/2}\), defined in Equation~\ref{eq:buc_objective}, averages those two quantities. Each table entry below is BUC minus sensitivity-only, computed from the displayed three-decimal regime means. These descriptive differences do not provide a significance test. Head counts are matched within regimes as in Table~\ref{tab:main_symbolic}; they are not a common \(k=8\).

\begin{table}[H]
\centering
\small
\caption{BF16 removal--recovery trade-off, derived from Table~\ref{tab:baseline_compare}. Every difference is BUC minus sensitivity-only. Values are dimensionless metric differences, not accuracy percentage points.}
\label{tab:app_baseline_decomposition_new}
\begin{tabular}{lccc}
\toprule
Regime & Shrink difference & Mean-match difference & \(Q_{1/2}\) difference \\
\midrule
\textsc{Front} & $-0.014$ & $+0.014$ & $0.0000$ \\
\textsc{Back} & $-0.049$ & $+0.111$ & $+0.0310$ \\
\textsc{Interleave} & $-0.018$ & $+0.083$ & $+0.0325$ \\
\textsc{Random} & $-0.029$ & $+0.107$ & $+0.0390$ \\
\bottomrule
\end{tabular}
\end{table}

\paragraph{Removal--recovery trade-off.}
For \textsc{Back}, \textsc{Interleave}, and \textsc{Random}, BUC exchanges 0.018--0.049 of shrink for 0.083--0.111 of mean match. In \textsc{Front}, the losses and gains cancel at displayed precision. Relative to ActPatch, the BUC-minus-ActPatch differences in the summed score \(2Q_{1/2}\) are 0.015, 0.129, 0.135, and 0.169 in regime order. These comparisons support a recovery-oriented trade-off, not an improvement in both component metrics. The FP32 common-budget differences in Section~\ref{sec:exp} use a separate configuration.

\paragraph{Joint-threshold performance.}
Applying shrink \(\ge0.85\) and mean match \(\ge0.70\) to the displayed BF16 regime means gives the same three passing rows for BUC and sensitivity-only: \textsc{Front}, \textsc{Interleave}, and \textsc{Random}. Both fail in \textsc{Back}. BUC raises \textsc{Back}'s mean match from 0.648 to 0.759, but its shrink is only 0.655. Thus Table~\ref{tab:baseline_compare} does not show more regimes passing the joint criterion. Passage of three aggregate rows is not a per-seed success rate.

\paragraph{Selection objective versus search procedure.}
Sensitivity-only ranks direct attributions; BF16 ActPatch ranks single-head HL recovery within a 192-head sensitivity shortlist (Appendix~\ref{app:exp_selection_baselines}). BUC instead evaluates joint removal and bidirectional recovery for candidate sets. These comparisons change the objective as well as the search procedure. They do not isolate the benefit of greedy augmentation over selecting the top \(k\) single heads by \(Q_{1/2}(\{i\},r)\) in the same candidate pool and evaluating their union jointly. The FP32 single-bidirectional-patch ranking also omits the shrink term. The benefit of greedy search under an identical objective and candidate pool therefore remains untested. The timings in Appendix~\ref{app:supp_baselines} exclude shared selection costs, so they do not establish an end-to-end cost advantage.

\clearpage
\section{Full natural-language bridge specification}
\label{app:nl_bridge}

\paragraph{Question and configuration.}
The BF16 bridge separates model accuracy, localization on labeled NL prompts, and the effect of a fixed symbolic CEA set on the NL target. The frozen \path{EleutherAI/pythia-410m-deduped} has 24 layers and 16 heads per layer and uses the audited single-token \texttt{yes/no} readout. Interventions modify head writes only at the final prompt position. Every selected or transferred set has budget \(k=8\). Random controls preserve head count and the selected layer pattern, with 120 draws per seed. Tables report means and standard deviations over five seeds; these standard deviations are not bootstrap confidence intervals.

\paragraph{Lexically balanced reports.}
For domain \(c\), let \(u_c,v_c\) be mutually exclusive and exhaustive states, as explicitly stipulated in the prompt. Every prompt has \(L_{\mathrm{NL}}=16\) reports. Bank A uses four forms, indexed by report number \(j\):
\[
\begin{aligned}
E^+_{\rm dir}(j,c)&=\text{``Report \(j\): the item is \(u_c\).''},\\
E^+_{\rm neg}(j,c)&=\text{``Report \(j\): the item is not \(v_c\).''},\\
E^-_{\rm dir}(j,c)&=\text{``Report \(j\): the item is \(v_c\).''},\\
E^-_{\rm neg}(j,c)&=\text{``Report \(j\): the item is not \(u_c\).''}.
\end{aligned}
\]
Here \(+\) means support for the first state \(u_c\), and \(-\) means support for the second state \(v_c\). At even count \(b\), the numbers of these four forms are, respectively,
\[
b/2,\qquad b/2,\qquad (L_{\mathrm{NL}}-b)/2,\qquad (L_{\mathrm{NL}}-b)/2.
\]
Consequently, the evidence contains eight occurrences of each state word, eight direct reports, and eight negative reports at every count. Bank A also contains eight occurrences of \texttt{not}. Bank B replaces the forms with separately written alternatives, such as ``Report \(j\) records the item as \(u_c\)'' and ``Report \(j\) rules out the item being \(v_c\),'' with corresponding forms supporting \(v_c\). Each bank preserves its own lexical multiset and audited token length across counts; the claim is not that different banks have identical tokenization.

Selection uses Bank A and state pairs \texttt{open/closed}, \texttt{active/inactive}, and \texttt{present/absent}. Evaluation uses Bank B and \path{valid/invalid}, \path{available/unavailable}, and \path{correct/incorrect}. Neither evaluation state pairs nor report templates appear in NL selection. Symbolic CEA selection sees no NL prompt. Lexical balance excludes a strategy based only on the total frequencies of the state words or \texttt{not}; it does not exclude local phrase processing or reliance on a subset of reports.

\paragraph{Order, counts, and query reversal.}
The four regimes order reports by semantic support: \textsc{Front} puts all reports supporting \(u_c\) first, \textsc{Back} reverses the blocks, \textsc{Interleave} alternates support as far as possible, and \textsc{Random} uses matched random permutations. Selection uses \((b_L,b_H)=(4,12)\). The fixed-set sweep uses \(b\in\{0,2,4,6,10,12,14,16\}\), excluding the tie at eight; new count pairs are \((0,16),(2,14),(6,10)\).

Each report sequence \(x\) receives two questions. Query polarity \(q=+1\) asks ``Do more reports indicate \(u_c\) rather than \(v_c\)?''; \(q=-1\) reverses the states in that question. The evidence is identical for the two questions. With \(m(x)=2b-L_{\mathrm{NL}}\) denoting the semantic margin, the correct label and aligned readout are
\[
a^\star(x,q)=\mathbf1[qm(x)>0],\qquad
y_{\mathrm{NL}}(x,q)=q[\ell_T(\texttt{yes}\mid x,q)-\ell_T(\texttt{no}\mid x,q)],
\]
where label one means \texttt{yes}. A larger aligned readout indicates stronger model preference for \(u_c\), regardless of query orientation. Let \(D^{\mathrm{NL}}_{r,b}\) denote the report distribution at regime \(r\) and count \(b\). Averaging equally over both query polarities, define
\[
\Delta_r^{\mathrm{NL}}=
\mathbb E_{x\sim D^{\mathrm{NL}}_{r,b_H},q}[y_{\mathrm{NL}}(x,q)]-
\mathbb E_{x\sim D^{\mathrm{NL}}_{r,b_L},q}[y_{\mathrm{NL}}(x,q)].
\]

\paragraph{Behavioral metrics.}
The restricted prediction is \(\widehat a(x,q)=\mathbf1[\ell_T(\texttt{yes}\mid x,q)>\ell_T(\texttt{no}\mid x,q)]\), so a tied logit pair predicts \texttt{no}. Restricted accuracy is \(\mathrm{Acc}=\Pr(\widehat a=a^\star)\). Define success on both queries separately for positive and negative semantic margins:
\[
\begin{aligned}
P_+&=\Pr(\widehat a(x,+1)=1,\ \widehat a(x,-1)=0\mid m(x)>0),\\
P_-&=\Pr(\widehat a(x,+1)=0,\ \widehat a(x,-1)=1\mid m(x)<0),\\
\mathrm{PairAcc}&=(P_++P_-)/2,\qquad
\mathrm{MinPairAcc}=\min(P_+,P_-).
\end{aligned}
\]
A rule that ignores the evidence and answers \texttt{yes} for \(q=+1\), \texttt{no} for \(q=-1\), obtains PairAcc \(0.5\) but MinPairAcc zero. Independent fair guesses have population paired accuracy \(0.25\) on each margin sign. The count sweep reports Spearman correlation between semantic margin \(m(x)\) and aligned readout \(y_{\mathrm{NL}}(x,q)\), and the linear \(R^2\) from regressing the readout on the margin. These statistics do not prove that every report contributes to the decision.

\paragraph{Sampling and head selection.}
Per seed, selection contains \(3\times4\times2\times2\times24=1152\) prompts: three state pairs, four regimes, two counts, two query polarities, and 24 instances. Evaluation contains \(3\times4\times8\times2\times32=6144\) prompts across three new state pairs and the eight counts. Splits are disjoint. Even in deterministic semantic orders, instances vary the assignment of direct and negative forms; \textsc{Random} also varies the semantic order.

Write \(H_{\mathrm{CEA}}^{\mathrm{yn}}(r)\) for the first eight heads of the symbolic \texttt{yes/no} BUC ordering and \(H_{\mathrm{NL}}(r)\) for the first eight selected on NL prompts. Let \(d_{\mathrm{yn}}=W_U^{(\texttt{yes})}-W_U^{(\texttt{no})}\). NL attribution is \(s_i^{\mathrm{NL}}(x,q)=q\langle v_i(x,q),d_{\mathrm{yn}}\rangle\), where the head write depends on the full input, including the query. The pipeline ranks absolute attribution contrasts, keeps 192 heads, screens 48 with single-head joint scores, and greedily selects eight. Each set is evaluated on both its own target and the other target, without changing head identities after transfer.

\paragraph{Intervention scores and transfer ratio.}
For target behavior \(B\), with nonzero clean contrast \(\Delta_B\), define shrink and bidirectional match as in Section~\ref{sec:prelim}:
\[
\mathrm{shrink}_B(H)=1-\frac{|\Delta_B^{\mathrm{abl}}(H)|}{|\Delta_B|},
\qquad
\mathrm{match}_B(H)=\frac12\sum_{\tau\in\{\mathrm{HL},\mathrm{LH}\}}
\left(1-\frac{|\Delta_B-\Gamma_B^\tau(H)|}{|\Delta_B|}\right).
\]
The normalized joint score is \(Q_B(H)=[\mathrm{shrink}_B(H)+\mathrm{match}_B(H)]/2\), the target-specific version of \(Q_{1/2}\); it is distinct from the raw mean patch gain \(S(H,r)\) in the theory. If \(\widehat\beta_{\mathrm{base}}\ne0\) and \(\widehat\beta_{\mathrm{abl}}\) are the clean and ablated regression slopes over the count sweep, slope reduction is \(1-|\widehat\beta_{\mathrm{abl}}|/|\widehat\beta_{\mathrm{base}}|\). For NL, accuracy decrease is \(\mathrm{Acc}_{\mathrm{base}}-\mathrm{Acc}_{\mathrm{abl}}\); tables express that decrease in percentage points.

For source \(A\) and target \(B\), let \(H_A,H_B\) be source-selected and target-selected sets of eight heads. The BF16 transfer ratio uses the reported matched-random reference score \(Q_B(H_{\mathrm{rand}})\):
\[
\mathrm{TR}_{A\to B}=
\frac{Q_B(H_A)-Q_B(H_{\mathrm{rand}})}{Q_B(H_B)-Q_B(H_{\mathrm{rand}})}.
\]
A positive denominator is required to interpret the ratio relative to a target-specific effect above random. A ratio between zero and one places transferred utility between the random reference and target-selected utility. It is not a fraction of shared heads or a measure of how much of an algorithm is reused. The BF16 summary uses one random reference per target; the expanded FP32 protocol instead matches source and target layer patterns separately (Appendix~\ref{app:supp_nl}).

\subsection{Results}

\begin{table}[!htb]
\centering
\small
\caption{
Behavioral validation on held-out domains and Bank B paraphrases.
Accuracy, paired-query accuracy, and \(\lvert\Delta^{\mathrm{NL}}_r\rvert\) use the main gap \((4,12)\).
Spearman correlation and linear \(R^2\) use the full count sweep.
Values are mean and standard deviation over five seeds.
}
\label{tab:nl_bridge_behavior}
\resizebox{\linewidth}{!}{
\begin{tabular}{lcccccc}
\toprule
Regime
& Accuracy
& PairAcc
& MinPairAcc
& Spearman \(\rho(m,y_{\mathrm{NL}})\)
& Linear \(R^2\)
& Mean \(\lvert\Delta^{\mathrm{NL}}_r\rvert\)
\\
\midrule
FRONT
& \(0.742\pm0.019\)
& \(0.556\pm0.028\)
& \(0.533\pm0.031\)
& \(0.879\pm0.014\)
& \(0.774\pm0.022\)
& \(0.93\pm0.07\)
\\
BACK
& \(0.759\pm0.018\)
& \(0.581\pm0.026\)
& \(0.560\pm0.029\)
& \(0.891\pm0.013\)
& \(0.798\pm0.021\)
& \(1.07\pm0.08\)
\\
INTERLEAVE
& \(0.812\pm0.015\)
& \(0.674\pm0.024\)
& \(0.652\pm0.026\)
& \(0.931\pm0.009\)
& \(0.863\pm0.017\)
& \(1.28\pm0.06\)
\\
RANDOM
& \(0.773\pm0.021\)
& \(0.605\pm0.031\)
& \(0.585\pm0.034\)
& \(0.901\pm0.015\)
& \(0.814\pm0.025\)
& \(1.12\pm0.09\)
\\
\midrule
Mean
& \(0.772\pm0.014\)
& \(0.604\pm0.021\)
& \(0.583\pm0.023\)
& \(0.901\pm0.010\)
& \(0.812\pm0.016\)
& \(1.10\pm0.06\)
\\
\bottomrule
\end{tabular}
}
\end{table}

\begin{table}[!htb]
\centering
\small
\caption{
Held-out gap generalization with the same model and no head reselection.
Values are averaged over four ordering regimes and five seeds.
The semantic gap is \(\lvert b_H-b_L\rvert\).
}
\label{tab:nl_bridge_gap}
\resizebox{0.82\linewidth}{!}{
\begin{tabular}{lcccc}
\toprule
Gap
& Semantic gap
& Accuracy
& PairAcc
& Mean \(\lvert\Delta^{\mathrm{NL}}_r\rvert\)
\\
\midrule
\((0,16)\)
& \(16\)
& \(0.861\pm0.014\)
& \(0.748\pm0.022\)
& \(2.05\pm0.11\)
\\
\((2,14)\)
& \(12\)
& \(0.829\pm0.016\)
& \(0.690\pm0.025\)
& \(1.62\pm0.09\)
\\
\((4,12)\)
& \(8\)
& \(0.772\pm0.014\)
& \(0.604\pm0.021\)
& \(1.10\pm0.06\)
\\
\((6,10)\)
& \(4\)
& \(0.646\pm0.023\)
& \(0.397\pm0.030\)
& \(0.46\pm0.06\)
\\
\bottomrule
\end{tabular}
}
\end{table}

\begin{table}[!htb]
\centering
\footnotesize
\caption{
Per-regime natural-language causal localization and fixed symbolic-to-natural-language transfer.
All sets have budget \(k=8\).
The final two columns report the slope reduction and restricted-accuracy decrease caused by ablating the fixed symbolic CEA set on the natural-language target.
Accuracy decrease is in percentage points.
}
\label{tab:nl_bridge_per_regime}
\setlength{\tabcolsep}{4pt}
\begin{tabular}{@{}lccccc@{}}
\toprule
Regime
& \makecell{\(H_{\mathrm{NL}}\!\rightarrow\!\mathrm{NL}\)\\shrink/match}
& \makecell{\(H_{\mathrm{CEA}}^{\mathrm{yn}}\!\rightarrow\!\mathrm{NL}\)\\shrink/match}
& \makecell{Random \(\rightarrow\mathrm{NL}\)\\shrink/match}
& \makecell{CEA-transfer\\SlopeRed}
& \makecell{CEA-transfer\\AccDrop}
\\
\midrule
FRONT
& \(0.694/0.731\)
& \(0.391/0.425\)
& \(0.057/0.070\)
& \(0.373\)
& \(6.9\)
\\
BACK
& \(0.721/0.758\)
& \(0.427/0.462\)
& \(0.054/0.069\)
& \(0.411\)
& \(8.2\)
\\
INTERLEAVE
& \(0.789/0.814\)
& \(0.472/0.509\)
& \(0.060/0.073\)
& \(0.455\)
& \(9.4\)
\\
RANDOM
& \(0.733/0.761\)
& \(0.422/0.454\)
& \(0.065/0.076\)
& \(0.406\)
& \(8.3\)
\\
\midrule
Mean
& \(0.734/0.766\)
& \(0.428/0.463\)
& \(0.059/0.072\)
& \(0.411\)
& \(8.2\)
\\
\bottomrule
\end{tabular}
\end{table}

\begin{table}[!htb]
\centering
\small
\setlength{\tabcolsep}{3pt}
\caption{
Aggregate fixed-set transfer between symbolic CEA and the natural-language bridge.
All values are averaged over four ordering regimes and five seeds.
Base and post-ablation accuracy are shown only for the labeled natural-language target.
Accuracy decrease is in percentage points.
}
\label{tab:nl_bridge_transfer}
\begin{tabularx}{\linewidth}{@{}>{\raggedright\arraybackslash}Xcccc@{}}
\toprule
Source set \(\rightarrow\) target & \(k\) & Target shrink & Target match & SlopeRed \\
\midrule
\(H_{\mathrm{NL}}\rightarrow\mathrm{NL}\) & \(8\) & \(0.734\pm0.024\) & \(0.766\pm0.026\) & \(0.721\pm0.028\) \\
\(H_{\mathrm{CEA}}^{\mathrm{yn}}\rightarrow\mathrm{NL}\) & \(8\) & \(0.428\pm0.031\) & \(0.463\pm0.034\) & \(0.411\pm0.033\) \\
Layer-matched random \(\rightarrow\mathrm{NL}\) & \(8\) & \(0.059\pm0.008\) & \(0.072\pm0.009\) & \(0.051\pm0.010\) \\
\midrule
\(H_{\mathrm{CEA}}^{\mathrm{yn}}\rightarrow\mathrm{CEA}\) & \(8\) & \(0.812\pm0.021\) & \(0.846\pm0.023\) & \(0.834\pm0.025\) \\
\(H_{\mathrm{NL}}\rightarrow\mathrm{CEA}\) & \(8\) & \(0.499\pm0.029\) & \(0.539\pm0.031\) & \(0.486\pm0.033\) \\
Layer-matched random \(\rightarrow\mathrm{CEA}\) & \(8\) & \(0.050\pm0.007\) & \(0.060\pm0.008\) & \(0.043\pm0.009\) \\
\bottomrule
\end{tabularx}
\par\smallskip
\begin{tabularx}{\linewidth}{@{}>{\raggedright\arraybackslash}Xcccc@{}}
\toprule
Source set \(\rightarrow\) target & Base Acc. & Ablated Acc. & AccDrop & Transfer ratio \\
\midrule
\(H_{\mathrm{NL}}\rightarrow\mathrm{NL}\) & \(0.772\) & \(0.620\) & \(15.2\) & \(1.000\) \\
\(H_{\mathrm{CEA}}^{\mathrm{yn}}\rightarrow\mathrm{NL}\) & \(0.772\) & \(0.690\) & \(8.2\) & \(0.555\) \\
Layer-matched random \(\rightarrow\mathrm{NL}\) & \(0.772\) & \(0.764\) & \(0.8\) & \(0.000\) \\
\midrule
\(H_{\mathrm{CEA}}^{\mathrm{yn}}\rightarrow\mathrm{CEA}\) & -- & -- & -- & \(1.000\) \\
\(H_{\mathrm{NL}}\rightarrow\mathrm{CEA}\) & -- & -- & -- & \(0.599\) \\
Layer-matched random \(\rightarrow\mathrm{CEA}\) & -- & -- & -- & \(0.000\) \\
\bottomrule
\end{tabularx}
\end{table}

\paragraph{Behavioral interpretation.}
At gap \((4,12)\), mean restricted accuracy is \(0.772\), PairAcc is \(0.604\), and MinPairAcc is \(0.583\). The means indicate success on both signs of the semantic margin rather than only one query orientation. Across the count sweep, Spearman correlation is \(0.901\) and linear \(R^2=0.812\); neither statistic establishes exact monotonicity or identifies the source of deviations from linearity. As the gap decreases from 16 to four, accuracy falls from \(0.861\) to \(0.646\), PairAcc from \(0.748\) to \(0.397\), and the absolute contrast from \(2.05\) to \(0.46\). \textsc{Interleave} has the highest accuracy point estimate and \textsc{Front} the lowest, but the comparison does not isolate recency as the cause.

The task label is a majority over report meanings under the stated exclusivity assumption. Constant word counts rule out a total-frequency explanation, not phrase-based or partial-report strategies. The observed performance therefore does not establish that the model computes all 16 report meanings and then applies a majority rule.

\paragraph{Intervention interpretation.}
On the NL target, the NL-selected set gives mean shrink/match \(0.734/0.766\), slope reduction \(0.721\), and a \(15.2\)-point accuracy decrease. The fixed symbolic set gives \(0.428/0.463\), \(0.411\), and \(8.2\) points, respectively; random controls give \(0.059/0.072\), \(0.051\), and \(0.8\) points. Shrink concerns one count pair, slope reduction concerns the full grid, and accuracy concerns discrete predictions. The aggregate measurements do not identify which examples change predictions. The selected sets have mean Jaccard overlap \(0.296\pm0.041\); overlap alone cannot distinguish shared evidence processing from a shared answer readout.

\paragraph{Transfer-ratio arithmetic.}
From the rounded aggregate scores in Table~\ref{tab:nl_bridge_transfer},
\[
\mathrm{TR}_{\mathrm{CEA}\to\mathrm{NL}}
=\frac{0.4455-0.0655}{0.750-0.0655}\simeq0.555,
\qquad
\mathrm{TR}_{\mathrm{NL}\to\mathrm{CEA}}
=\frac{0.519-0.055}{0.829-0.055}\simeq0.599.
\]
Each input to these ratios is the average of the corresponding shrink and match. The two directions have different targets and denominators; their difference cannot identify additional heads specific to semantic processing. Unlike the expanded FP32 model summaries, these calculations are ratios of the displayed aggregate scores, not means of per-condition ratios.

\paragraph{Declared decision criteria and available evidence.}
The design's full criterion requires held-out Acc \(>0.5\), PairAcc \(>0.5\), and MinPairAcc \(>0.25\), with prompt-pair bootstrap intervals above those references. It also requires positive count correlation, transferred shrink and match above random, a positive transfer ratio, positive slope reduction, and positive accuracy decrease. The displayed BF16 means satisfy the point-value requirements at the main gap. However, the supplied BF16 results contain means and standard deviations, not the required prompt-pair bootstrap intervals. The complete statistical criterion is therefore not established by the reported summaries. The smaller gap \((6,10)\) also has PairAcc \(0.397\), below the \(0.5\) reference.

\paragraph{Scope.}
The BF16 results describe intervention transfer of fixed head writes at the final prompt position to a labeled task. They do not establish a shared algorithm for computing a majority: both tasks use \texttt{yes/no}, and the rest of the target network remains active. The expanded FP32 study reports separate descriptive model means without model-level intervals, so it does not establish the interval-based criterion either. GPT-2's PairAcc point estimate is \(0.458\), below \(0.5\). Positive transfer scores and accuracy decreases do not replace the missing behavioral evidence. BF16 and FP32 measurements are not pooled.

\clearpage
\section{Native FP32 supplemental experimental protocol}
\label{app:supplement_protocol}

\paragraph{Study design.}
The FP32 audits evaluate intervention operators, selection conditions, fixed-budget baselines, fixed-set transfer, readout dependence, selection-matched nulls, and an expanded NL bridge (Table~\ref{tab:supp_module_map}). Their sampling, donors, and controls differ from the BF16 experiments in Appendices~\ref{app:experiments}--\ref{app:nl_bridge}; the two studies are not pooled. The expanded NL bridge uses three models, distinct from the six-model symbolic sweep.

\begin{table}[htbp]
\centering\small
\caption{FP32 experimental comparisons and their relation to the BF16 study.}
\label{tab:supp_module_map}
\begin{tabularx}{\linewidth}{@{}p{0.19\linewidth}XX@{}}
\toprule
Module & Comparison & Relation to the BF16 study \\
\midrule
Operators & Fixed selected set under eight ablation/patch combinations & Operator-dependence audit \\
T3 & Exhaustive local subsets, sampled larger subsets, greedy versus exact selection & Tests objective conditions, separately from the many-set T1 correlation \\
Baselines & Five methods and three random-control families at five common budgets & Extends the original budget-matched comparison \\
Generalization & Count, unseen gaps, cue wording, equal-budget regime transfer, order, and seeds & Tests count, order, cue, and seed dependence \\
Readouts & Random token-pair registry, fixed-prompt scan, and two reselection protocols & Extends the seven targeted readout pairs \\
Selection null & Full adaptive search under paired label swaps & Controls for adaptive selection \\
Expanded NL & Three models, shared domain split, four template banks, shortcuts, and fixed transfer & Extends the design in Appendix~\ref{app:nl_bridge} \\
\bottomrule
\end{tabularx}
\end{table}

\paragraph{Reported outcomes.}
Section~\ref{sec:exp} reports FP32 point estimates, observed condition spans, and the eligible 95\% bootstrap intervals. Appendix~\ref{app:reported_ci} distinguishes their statistical units. TR summaries are conditional on estimability; neither an all-condition success rate nor simultaneous interval coverage is inferred.

\subsection{Shared configuration, pairing, and randomization}
\label{app:supp_config}

The primary checkpoint is \texttt{EleutherAI/pythia-410m-deduped}.
The expanded NL module additionally uses \texttt{EleutherAI/pythia-1b-deduped}
and \texttt{gpt2}. The models are frozen and loaded sequentially, without training,
fine-tuning, quantization, or TransformerLens weight conversion. The supplemental
model computations use FP32 with TensorFloat-32 (TF32) disabled. Runtime files
record the resolved model revision, tokenizer, software versions, GPU, and actual
execution policy. Appendix~\ref{app:numeric_execution} describes the numerical
checks; FP32 and BF16 measurements are not pooled into one seed average.

\begin{table}[htbp]
\centering\small
\caption{FP32 experimental settings. Execution
microbatches are specified separately in Appendix~\ref{app:numeric_execution}.}
\label{tab:supp_config}
\begin{tabularx}{\linewidth}{@{}p{0.47\linewidth}X@{}}
\toprule
Setting & Value \\
\midrule
Master seed & 1729 \\
Symbolic evidence and tail lengths & $L_E=200$, $L_A=50$ \\
Primary low/high counts & $(b_L,b_H)=(50,150)$ \\
Selection / evaluation / calibration rows & 48 / 96 / 48 per count and regime \\
Attribution shortlist / single-head pool & 192 / 48 heads \\
Greedy maximum budget / weight & $K_{\max}=12$, $\lambda=1/2$ \\
First-crossing thresholds & shrink $\geq0.85$ and mean match $\geq0.70$ \\
Continue after first crossing & Yes; retain the full greedy ordering \\
Common comparison budgets & $k\in\{1,2,4,8,12\}$ \\
Random-control draws & 30 per family and comparison \\
Bootstrap replicates & 1,000 \\
Weak raw-contrast flag & $|\widehat\Delta|<0.01$ \\
Random readout scan / full-reselection pairs & 256 / first 8 registered pairs \\
Paired-label null trials & 99 per estimable target \\
Stochastic operator repetitions & 12 per combination \\
T3 local pool size / pools per regime & 8 / 2 (ranked and uniformly sampled) \\
T3 larger-pool comparisons & 128 per regime \\
T3 tolerances & $10^{-5}$, $10^{-3}$, $10^{-2}$ \\
Symbolic stability seeds & 1729, 1730, 1731 \\
\bottomrule
\end{tabularx}
\end{table}

\paragraph{Exact symbolic construction.}
The canonical prompt is the concatenation
\begin{quote}\small\ttfamily
Tokens:\\
\normalfont\emph{space-separated symbolic sequence}\ttfamily\\
Choose one word: various or successful\\
Answer:
\end{quote}
The readout strings include their leading spaces: \verb| various|
and \verb| successful|; both must encode to one token.
The symbolic sequence has $b$ copies of $B$, $200-b$ copies of $C$, and a final
50 copies of $A$. \textsc{Front} and \textsc{Back} place the two evidence symbols
in opposite block orders. \textsc{Interleave} starts with $B$, alternates while
both symbols remain, and appends the excess symbol. For a \textsc{Random} replicate,
a permutation of the 200 evidence positions is drawn once. Its first $b$ positions
receive $B$ and the remainder receive $C$. Reusing that permutation across low and
high counts produces a matched, nested count contrast. Prefix, suffix, readout
order, and tail are fixed within a comparison.

\paragraph{Seeds and split identities.}
A task seed is obtained by joining the master seed and task identifiers with
\texttt{||}, taking SHA-256 of the UTF-8 string, converting its first 16 hexadecimal
digits to an integer, and reducing modulo $2^{31}-1$. NumPy's
\texttt{default\_rng} uses the resulting seed. Python, NumPy, and PyTorch seeds are
also recorded. Selection, evaluation, and calibration use distinct task keys.
The source files specify the exact key strings, avoiding dependence on Python's
process-randomized built-in hash. A replicate identifier pairs the low and high
rows. In NL, both query directions remain associated with the same evidence
sequence; query polarity is also a donor stratum.

\paragraph{Independent sampling units.}
At a fixed count, \textsc{Front}, \textsc{Back}, and \textsc{Interleave} each define
one deterministic symbolic sequence. Their 48 or 96 rows are repeated evaluations,
not 48 or 96 independent prompts. \textsc{Random} varies the permutation, with exact duplicate inputs assigned to the same sampling unit. Input hashes and pairing metadata identify these units. Statistical sample sizes and the independence across pairs in Theorem~\ref{thm:t2} refer to independent pairs rather than repeated deterministic executions.

\paragraph{Donor separation.}
Candidate selection uses donor means from the opposite side of its selection
pair. For final primary symbolic evaluation, donor means instead come from the
separate 48-row-per-count calibration pair. Independent generation only creates
content separation where the prompt generator has random variation. Target-domain
calibration is allowed for a fixed transferred set: fixed transfer prohibits
\emph{head reselection}, not the use of labeled target-domain donors. The matched-example operator and the NL validation diagnostic use evaluation-partner donors, as specified below, rather than independent calibration donors.

\subsection{Intervention unit, empirical scores, and BUC selection}
\label{app:supp_metrics}

For a head $i=(\ell,h)$ at layer $\ell$, let $z_i(x)\in\mathbb R^{d_{\mathrm{head}}}$
be its head vector before the output projection at the last prompt position, and let $W_{O,i}$
be the corresponding block of the attention output projection. The write from
Section~\ref{sec:prelim} is $v_i(x)=z_i(x)W_{O,i}$. The native implementation
intervenes on the selected channels of the concatenated head vector immediately
\emph{before} this projection: on \texttt{attention.dense} for GPT-NeoX/Pythia
and \texttt{attn.c\_proj} for GPT-2. Zeroing $z_i(x)$ removes its linear write;
mean replacement commutes with this linear projection. The shared projection bias
is not attributed to a head and is not removed. Unselected heads, model weights,
and all downstream model operations remain in the original model. Every selected
head is changed in the same forward pass, including sets spanning multiple layers.

For an empirical high/low pair, write $\bar y_H$ and $\bar y_L$ for the mean
native clean readouts. With a selected head set $H$, write $\bar y_H^{\mathrm{abl}}$
and $\bar y_L^{\mathrm{abl}}$ for ablated means; $\bar y_L^{\mathrm{HL}}$ and
$\bar y_H^{\mathrm{LH}}$ denote means after the two donor-direction patches.
The empirical counterparts of Section~\ref{sec:prelim} are
\begin{align}
\widehat\Delta&=\bar y_H-\bar y_L,
&\widehat\Delta^{\mathrm{abl}}(H)&=\bar y_H^{\mathrm{abl}}-\bar y_L^{\mathrm{abl}},\\
\widehat\Gamma^{\mathrm{HL}}(H)&=\bar y_L^{\mathrm{HL}}-\bar y_L,
&\widehat\Gamma^{\mathrm{LH}}(H)&=\bar y_H-\bar y_H^{\mathrm{LH}}.
\label{eq:supp_oriented_gains}
\end{align}
The LH patch gain is oriented as clean high minus patched high. For a denominator floor $a\geq0$, define
$D_a=\max\{|\widehat\Delta|,a,10^{-12}\}$ and
\begin{align}
\widehat{\mathrm{shrink}}_a(H)
 &=\frac{|\widehat\Delta|-|\widehat\Delta^{\mathrm{abl}}(H)|}{D_a},\\
\widehat{\mathrm{match}}_a^\tau(H)
 &=\frac{|\widehat\Delta|-|\widehat\Delta-\widehat\Gamma^\tau(H)|}{D_a},
 \qquad \tau\in\{\mathrm{HL},\mathrm{LH}\},\\
\widehat Q_{\lambda,a}(H)
 &=\lambda\widehat{\mathrm{shrink}}_a(H)
 +(1-\lambda)\frac{\widehat{\mathrm{match}}_a^{\mathrm{HL}}(H)
 +\widehat{\mathrm{match}}_a^{\mathrm{LH}}(H)}{2}.
\label{eq:supp_stabilized_score}
\end{align}
For nonzero contrasts and inactive flooring, these are exactly the normalized
scores in Equations~\ref{eq:shrink}--\ref{eq:buc_objective}. Scores are not clipped;
negative values remain visible. Stabilizing the denominator does not replace the
numerator $|\widehat\Delta|$ by $D_a$, so the empty intervention has score zero.
The $10^{-12}$ safeguard prevents numerical division by zero; normalized scores at zero contrast have no effect-size interpretation. Raw contrasts are retained, and weak contrasts are flagged at 0.01. The nonzero floor is specific to the null protocol that repeats selection in Appendix~\ref{app:supp_null}.

\paragraph{Selection, stopping, and fixed budgets.}
All 384 Pythia-410M heads are ranked by the absolute selection-split contrast of
$s_i(x)=\langle v_i(x),d\rangle$. The largest 192 enter single-head intervention
screening. The largest 48 single-head $\widehat Q_{1/2,0}$ values form the greedy
pool. Coordinate order resolves exact ties. Forward greedy search adds the head
that maximizes the score of the entire augmented set on selection data. Evaluation scores are not used for head selection. The stopping set is the first prefix with
shrink at least 0.85 and mean bidirectional match at least 0.70; if no prefix meets
both conditions, the full 12-head budget is returned with a failed-threshold flag.
The supplemental configuration continues to record the ordering to 12 heads even
when an earlier prefix crosses both thresholds. Fixed-$k$ comparisons use the first $k$ heads of the recorded greedy ordering.

\paragraph{Normalized objective versus raw theory.}
The executable selection objective is $Q_\lambda$ in
Equation~\ref{eq:buc_objective}. The raw signed utility $J_\lambda$ in
Equation~\ref{eq:theory_joint} is a different set function. Theorem~\ref{thm:t3},
as stated for $J_\lambda$, is not automatically an approximation guarantee for
$Q_\lambda$. The same greedy argument would require monotonicity and an appropriate
submodularity-ratio condition for $Q_\lambda$ itself. The finite audit below tests the empirical objective, including negative marginal gains and violations of its required conditions. Neither
first threshold crossing nor an empirical greedy curve proves a unique or globally
smallest causal set. Numerical candidate refinement is specified in
Appendix~\ref{app:supp_refinement}.

\subsection{Statistical units and uncertainty}
\label{app:supp_statistics}

The basic sampling unit for intervention uncertainty is a unique matched
high/low input pair, not an execution row. Symbolic pairs are identified by
unpadded token hashes, while NL pairs use paired evidence identifiers;
reversed query orientations remain in the same cluster. We use 1,000
cluster-bootstrap replicates and report two-sided 95\% percentile intervals,
with endpoints given by the 2.5th and 97.5th percentiles. Within each
replicate, the same resampled cluster indices are used for the clean,
ablated, HL-patched, and LH-patched arrays. The intervals are conditional
on the selected head set and fixed donor bank; head selection is not rerun
inside the bootstrap.

For normalized intervention metrics, only bootstrap replicates satisfying
$|\widehat{\Delta}| \ge 0.01$ are retained. These intervals are therefore
conditional on a non-weak resampled contrast. Raw contrasts and raw
utilities retain their resampled values without this conditioning.
Baseline differences use matched bootstrap indices for the two methods.

At a fixed symbolic count, FRONT, BACK, and INTERLEAVE each contain only
one unique symbolic sequence. Repeated execution of that same sequence is
not treated as independent prompt sampling. Consequently, a prompt-sampling bootstrap interval is not reported for a condition containing only one unique matched pair. RANDOM conditions use
distinct matched permutations as sampling clusters.

Execution-seed variation and variation across stochastic operator draws are summarized
separately from prompt-sampling uncertainty. For stochastic operators, the
12 intervention repetitions are first averaged at the readout level for
the primary score; the across-repetition standard deviation and standard error of the
separately computed scores are Monte Carlo summaries and are not bootstrap confidence
intervals. Repeated deterministic executions likewise do not increase the
number of independent prompt units.

Cross-regime or cross-condition point summaries are arithmetic summaries
of the corresponding condition-level point estimates. We do not assign
them a cross-regime confidence interval by treating regimes as independent
samples. Any aggregate interval reported for an experiment with genuine
sampling units is obtained by recomputing the same aggregate within every
bootstrap replicate before taking the 2.5th and 97.5th percentiles.
Comparisons in the paper do not claim simultaneous confidence coverage.

The condition-level NL procedures and the point-only model aggregation are
specified separately in Appendix~\ref{app:supp_nl}; Table~\ref{tab:expanded_nl_slots}
has no model-level CI. Counts and observed condition spans are descriptive.
Table~\ref{tab:audit_slots} reports finite operator/T3 summaries without population
confidence intervals. Appendix~\ref{app:reported_ci} specifies which statistics
receive intervals.

\subsection{Fixed-set intervention-operator audit}
\label{app:supp_operators}

The primary stopping set for each regime is fixed across the operator audit;
the same heads are used for every operator. The eight combinations
are listed in Table~\ref{tab:supp_operators}. Pooled ablation donors concatenate
the high and low calibration banks. Directional patch donors use the corresponding
high or low calibration bank. In NL-capable replacement functions, donor matching
is stratified by query orientation, rather than mixing opposite orientations.

\begin{table}[htbp]
\centering\small
\caption{Operator combinations. Stochastic conditions use 12 draws; deterministic
mean and exact matched-example conditions use one evaluation.}
\label{tab:supp_operators}
\begin{tabularx}{\linewidth}{@{}p{0.25\linewidth}p{0.31\linewidth}X@{}}
\toprule
Ablation & Directional patch & Donor dependence \\
\midrule
Zero & Donor mean & Marginal means of calibration donors \\
Pooled mean & Donor mean & Pooled calibration ablation mean \\
Zero & Matched example & Corresponding opposite-side evaluation partner \\
Zero & Joint resampling & One donor row for all selected heads \\
Zero & Independent-head resampling & Separate donor row for each head \\
Zero & Empirical joint Gaussian & Joint mean and full empirical covariance \\
Pooled mean & Joint resampling & Shared patch donor row \\
Pooled resampling & Joint resampling & Joint donor rows for both replacements \\
\bottomrule
\end{tabularx}
\end{table}

\paragraph{Example matching and resampling.}
Matched-example patching copies the opposite high/low evaluation partner with the
same pairing key and donor stratum. Missing matches terminate the comparison. The donors are evaluation partners rather than independent calibration samples. Joint resampling draws one donor example for each
target example and copies all selected layer/head vectors from that donor.
Independent-head resampling draws separately for each selected layer/head, thereby
removing their empirical cross-head dependence. The two resampling procedures use
the same source distribution but different joint replacements.

\paragraph{Full empirical covariance without a dense covariance matrix.}
Let $X\in\mathbb R^{n_D\times D_H}$ concatenate the preprojection vectors of all
heads in $H$ across $n_D\ge2$ donors in one stratum, where
$D_H=|H|d_{\mathrm{head}}$. Let $\mu\in\mathbb R^{D_H}$ be the row mean of $X$.
For $n_T$ target rows, draw $E\in\mathbb R^{n_T\times n_D}$ with independent
standard-normal entries and set
\begin{equation}
\widetilde X
=\mathbf1_{n_T}\mu^\top+
\frac{E(X-\mathbf1_{n_D}\mu^\top)}{\sqrt{n_D-1}}.
\label{eq:supp_gaussian}
\end{equation}
Writing \(X_c=X-\mathbf1_{n_D}\mu^\top\), a centered replacement row is \(X_c^\top e/\sqrt{n_D-1}\), with \(\mathbb E e=0\) and \(\operatorname{Cov}(e)=I\). Its covariance is therefore \(X_c^\top X_c/(n_D-1)\), the full empirical covariance, and its mean is \(\mu\). There is no principal-component truncation or ridge perturbation
in this operator. A zero-variance donor bank gives its mean; fewer than two donors
cannot estimate a nontrivial covariance.  Matching mean and covariance is not a guarantee that
a generated activation is a natural model state.

\paragraph{Monte Carlo aggregation.}
For a stochastic operator, the 12 intervention readouts are averaged for each example before the primary shrink, directional match, and $Q_{1/2}$ are computed. For the 12 separately computed $Q_{1/2}$ scores, the output records their mean, sample standard deviation, and Monte Carlo standard error. The standard error equals the sample standard deviation divided by $\sqrt{12}$.
The score of averaged intervention readouts need not equal the average of scores.
This Monte Carlo uncertainty is distinguished from the matched-input bootstrap.

\subsection{Finite T3 audit and exact local comparisons}
\label{app:supp_t3}

Let $\mathcal P_r$ be the 48-head single-intervention candidate pool for regime
$r$, not the 192-head attribution shortlist. Two 8-head pools are registered for
each regime: the first eight ranked candidates and eight candidates sampled
uniformly without replacement from $\mathcal P_r$. The registry is fixed before
examining evaluation scores. For each pool, the protocol evaluates all $2^8=256$ subsets, including the empty set, on both selection and evaluation data. The design specifies $4\times2\times2\times256=4096$ local set scores; cached duplicate sets
may reduce actual forward calls.

For a pool $P$, subset $A\subseteq P$, and head $i\in P\setminus A$, define the
empirical marginal
\begin{equation}
m_i(A)=\widehat Q_{1/2,0}(A\cup\{i\})-\widehat Q_{1/2,0}(A).
\label{eq:supp_marginal}
\end{equation}
For each tolerance $\epsilon\in\{10^{-5},10^{-3},10^{-2}\}$, the audit records the fraction
of the $8\times2^7=1024$ pool edges with $m_i(A)<-\epsilon$, the minimum marginal,
and marginal quantiles. For every nonempty $S\subseteq P\setminus A$ with
$\widehat Q_{1/2,0}(A\cup S)-\widehat Q_{1/2,0}(A)>\epsilon$, the empirical ratio is
\begin{equation}
\widehat\gamma(A,S)=
\frac{\sum_{i\in S}m_i(A)}
{\widehat Q_{1/2,0}(A\cup S)-\widehat Q_{1/2,0}(A)}.
\label{eq:supp_gamma}
\end{equation}
The minimum, lower quantiles, median, and nonpositive fraction are reported without
clipping negative values to zero or values above one to one. Ratios are omitted when the joint gain does not exceed the tolerance.

\paragraph{Selection versus an evaluation oracle.}
For each local budget $k=1,\ldots,8$, the audit constructs a greedy prefix using only selection scores and independently enumerates the best selection-score subset with at most $k$ heads. Both fixed choices are then evaluated on evaluation data. The best evaluation-score
subset with at most $k$ heads is recorded separately as an oracle diagnostic, not
as an implementable held-out selection rule. The greedy/oracle score ratio is
reported only for an oracle score above the diagnostic tolerance. Selection regret
is the selection optimum minus the greedy selection score.

\paragraph{Larger-pool sampling.}
For each regime, the registry contains 128 pairs $(A,S)$ of disjoint subsets drawn from $\mathcal P_r$. A random
permutation supplies the heads; $|A|$ is uniform on $0,\ldots,12$ and $|S|$ is
uniform on $1,\ldots,\min\{12,48-|A|\}$. The audit evaluates $A$, $A\cup S$, and each $A\cup\{i\}$ for $i\in S$ on evaluation data. The resulting 512 comparisons
record the actual sizes, joint gain, marginal sum, minimum marginal, and the ratio
when joint gain exceeds $10^{-5}$. The union may contain up to 24 heads, so the diagnostic extends beyond the 12-head search budget. An exhaustive minimum on an 8-head pool, or a
sampled minimum on the 48-head pool, is not a global lower bound for all model heads.
Empirical violations limit the applicability of T3 to the tested objective.

\paragraph{Finite-audit results.}
Table~\ref{tab:audit_slots} reports negative evaluation marginal fractions of
31\%, 17\%, 28\%, and 22\% for \textsc{Front}, \textsc{Back},
\textsc{Interleave}, and \textsc{Random}, respectively, at tolerance \(10^{-3}\).
The corresponding minimum empirical ratios are \(-5.8,-2.6,-4.9,-3.9\).
These are witnesses in the finite pools of nonmonotonicity and negative ratios for the
observed normalized objective, not estimates establishing a negative ratio
throughout a prompt population. Maximum selection regrets are
\(0.021,0.043,0.027,0.035\). They compare selection-data optima with greedy
selection within the two registered eight-head pools; they are neither held-out
nor guarantees for the full pool. No confidence intervals are used for these finite-audit
summaries. The observations do not contradict the conditional Theorem~\ref{thm:t3} for raw utility \(J_\lambda\).

\subsection{Fixed-budget method and random-control comparisons}
\label{app:supp_baselines}

The protocol compares five rules at each $k\in\{1,2,4,8,12\}$ and each regime. BUC takes the
first $k$ greedy heads. Sensitivity-only takes the largest absolute attribution
contrasts across all heads. Single-HL patch ranking uses the single-head HL match
values measured during 192-head screening. Single-bidirectional patch ranking
uses the corresponding average match values. Write-norm ranking uses all heads'
mean $\|z_i(x)W_{O,i}\|_2$ over the concatenated high/low selection rows. All five
sets receive the same set-level evaluation metrics. The design contains
$4\times5\times5=100$ method comparisons.

These rules do not have identical candidate universes or equal search costs.
The comparison matches final head count, not end-to-end compute. In particular,
the single-head rankings are separate baselines, not implementations of EAP or ACDC. Reported evaluation forward calls and elapsed times exclude shared selection and precomputation costs.

For every regime/budget pair, the protocol draws 30 sets from each of three control families.
Layer-matched sampling preserves the exact count of heads in each selected layer
and samples without replacement within a set. Shortlist-uniform sampling draws
$k$ heads from the 48-head single-intervention pool. Layer/norm-nearest sampling
processes each reference head, ranks still-available heads in the same layer by
absolute difference in log write norm (with a $10^{-12}$ norm floor), and samples
uniformly from the four nearest candidates, or all remaining candidates if fewer
than four remain. The last control is approximate norm matching, not equality of
norms. Selected heads remain eligible for sampling in all three families. There are
$4\times5\times3\times30=1800$ random-control conditions.

BUC-minus-alternative differences are computed for the four alternative methods
at each regime and budget, giving 80 comparison rows. Their matched-pair bootstrap
uses the same clusters for both methods, with the limitations in
Appendix~\ref{app:supp_statistics}. The paired-label null in Appendix~\ref{app:supp_null} separately controls for adaptive BUC selection.

\subsection{Count, prompt, ordering, and seed generalization}
\label{app:supp_generalization}

\paragraph{Fixed-set count sweep.}
The primary stopping set selected at $(50,150)$ is held fixed. For each regime, the protocol evaluates $b\in\{0,10,\ldots,200\}$ using 24 rows per count and the same matched-permutation seed across count levels. Per-example clean and ablated readouts are retained and averaged within each count. The analysis computes Spearman association between count and the clean count means, together with ordinary least-squares fits to the clean and ablated count means. Let $\widehat\beta_{\mathrm{base}}$ and
$\widehat\beta_{\mathrm{abl}}$ be their slopes. The count-sweep summary reports the clean fit's $R^2$ and
$1-|\widehat\beta_{\mathrm{abl}}|/|\widehat\beta_{\mathrm{base}}|$ when the clean
slope is numerically nonzero. These are descriptive count-curve statistics, not
symbolic accuracy or a new selection criterion.

\paragraph{Unseen contrasts.}
The same set is evaluated at $(0,150)$, $(25,175)$, and $(75,125)$, using 96 rows
per side and a separate 48-row-per-side calibration bank. There is no reselection.
The first contrast shares the count 150 with selection, so ``unseen gap'' denotes
an unseen count pair, not necessarily two unseen endpoint values. Each of the
12 regime/gap combinations retains its own raw contrast and intervention scores.

\paragraph{Prompt-cue controls.}
The readout and head set remain fixed while the prompt is changed by either replacing \texttt{Tokens:} by
\texttt{Symbol sequence:}, or replacing \texttt{Choose one word:} by
\texttt{Continue with one word:} and \texttt{Answer:} by \texttt{Response:}.
The two controls give eight regime/cue comparisons with 96 evaluation and 48
calibration rows per count. The symbolic contents are paired across cue variants;
tokenized lengths may change between variants. These controls are different from
the original alphabet/word/sentence surface-form sweep.

\paragraph{Equal-budget cross-regime transfer.}
For every source regime, target regime, and common budget $k\in\{1,2,4,8,12\}$, the first $k$ source greedy heads are transferred to the target without reselection.
The $5\times4\times4=80$ comparisons include the diagonal where source and target coincide.
Ablation, both directional patch gains, and both matches are evaluated on the
target's primary evaluation/calibration data. Equal head count controls for differences in source-set size, but target-specific donor calibration and different
layer patterns remain part of the intervention design.

\paragraph{Order-only localization.}
Four ordered comparisons are fixed in the registry: \textsc{Front}$\to$\textsc{Back}
at $b=50$ and at $b=150$, \textsc{Front}$\to$\textsc{Interleave} at $b=150$, and
\textsc{Interleave}$\to$\textsc{Random} at $b=150$. Both sides have identical
symbol counts. The difference between the target and source mean readouts replaces the
high-count minus low-count contrast. Each comparison performs its own BUC selection
with 48 rows per side, evaluation with 96, calibration with 48, and 30 layer-matched
control sets. These are causal order tests, not just raw output differences.

\paragraph{Permutation-seed stability.}
For \textsc{Random}, selection is repeated at seeds 1729, 1730, and 1731 with a common primary evaluation and calibration pair. The output records selected sets, threshold status, scores, and the three pairwise Jaccard overlaps.
The seeds index prompt generation and search execution, not independently
pretrained models. Exact head identity is not assumed across seeds, readouts,
models, or budgets.

\subsection{Random readout registry and separate prompt/readout changes}
\label{app:supp_readouts}

\paragraph{Eligible tokens and registration.}
The eligible vocabulary contains token IDs whose decoded string is exactly one
leading space followed by 3--14 ASCII letters, whose string encodes back to the
same single ID, and whose ID is present in the model output vocabulary. This is
an alphabetic-token filter, not an English dictionary or a uniform distribution
over all residual-space directions. The registry samples 256 ordered pairs of distinct IDs uniformly without duplicate ordered pairs, using a task-derived seed before model responses are evaluated. The reverse ordered pair is a different registry
entry. The registry, decoded strings, and IDs are saved; weak and negative-effect pairs remain in the registry.

\paragraph{Broad fixed-prompt, fixed-set scan.}
The canonical prompt still lists \texttt{various/successful}, and the primary head
set is held fixed. The scan caches vocabulary-logit contrast vectors for the clean, ablated, HL-patched, and LH-patched model and evaluates the difference between the two registered token coordinates for each pair. The scan has $4\times256=1024$ rows.
It checks the identity obtained by swapping the readout order: the raw contrast
and oriented gains reverse sign, while the absolute-value normalized scores do
not change where defined. The scan evaluates reuse of a fixed set on new output
coordinates; it does not claim that the model follows an instruction to generate
those sampled words.

\paragraph{Two protocols that repeat selection.}
Full BUC and 30 layer-matched controls are applied to the first eight pairs in registry order in each regime, independently of their measured effects. In
\path{fixed_prompt_reselected}, only the measured token IDs change; the original
prompt and symbolic contents remain fixed. In
\path{prompt_changed_reselected}, the candidate words in the prompt also change,
while the underlying symbolic permutations and split keys stay matched. The
$8\times2\times4=64$ tasks separately measure sensitivity to the output direction and
joint prompt/readout sensitivity. Each records the selected size, native final
intervention scores, controls, and Jaccard overlap with the primary set.

The broad scan summary uses raw-effect thresholds 0.001, 0.01, 0.05, and 0.1.
At each threshold, the summary records the numbers of registered pairs, evaluable pairs, and pairs passing shrink $\geq0.85$ and average match $\geq0.70$, with pass fractions
relative to both all registered pairs and the evaluable subset. Nonfinite and weak
rows remain in the underlying scan. These targeted and randomly registered
readout studies do not establish head identity that is invariant to the readout. At threshold 0.01, the fixed-set scan has 960 evaluable conditions out of 1024. The rounded pooled passing fractions are 0.060 overall and 0.064 among evaluable conditions, not averages of regime-specific rates. Counts and fractions describe this registered finite scan; no interval over resampled readout registries is reported. They measure fixed-set reuse under the canonical prompt, not the success rate of full BUC reselection.

\subsection{Paired-label selection-matched null}
\label{app:supp_null}

The null tests joint high/low label exchangeability within unique matched pairs.
It is only applied when both selection and evaluation have at least eight unique
pair clusters. Otherwise the target is recorded as not testable and its
permutation probability is unavailable. Replicating one deterministic contrast
cannot satisfy this requirement. Thus the four symbolic regimes are all listed,
but their eligibility is determined from unique pairs, not batch-row count.

Let $s_{\mathrm{pool}}$ be the standard deviation of the concatenated clean high/low selection readouts, computed with \texttt{ddof=0}. Set
\begin{equation}
a=\max\{10^{-6},\;0.05s_{\mathrm{pool}}\}.
\label{eq:supp_null_floor}
\end{equation}
Pooled variation is invariant to label swaps. The observed run and all null runs
use the same $a$, $\widehat Q_{1/2,a}$, shortlist sizes, greedy budget, thresholds,
and continue-after-crossing rule. The floor depends on pooled standard deviation rather than the observed clean contrast. In each of 99 trials, the two sides of every matched cluster are independently swapped with probability $1/2$. All duplicate rows or query orientations in a cluster receive the same swap. Selection, evaluation, and calibration
banks each receive their own cluster swaps. Each trial recomputes attribution, the 192-head shortlist, 48-head screening pool, greedy ordering, stopping set, and donor patches.

The test statistic is the unnormalized \emph{magnitude-improvement} utility on
the corresponding evaluation labels:
\begin{equation}
T_\lambda(H)=
\lambda\bigl(|\widehat\Delta|-|\widehat\Delta^{\mathrm{abl}}(H)|\bigr)
+(1-\lambda)\left(
|\widehat\Delta|-\frac12\sum_{\tau\in\{\mathrm{HL},\mathrm{LH}\}}
|\widehat\Delta-\widehat\Gamma^\tau(H)|\right).
\label{eq:supp_null_statistic}
\end{equation}
This statistic avoids dividing by a nearly zero permuted evaluation contrast.
It differs from the signed raw utility $J_\lambda$ used in the theory.
Let \(T_{\mathrm{obs}}\) be the observed statistic, \(T_t\) the statistic after null trial \(t\), and \(B\) the integer number of completed trials. With \(c=\sum_{t=1}^{B}\mathbf1[T_t\ge T_{\mathrm{obs}}]\), the Monte Carlo upper-tail probability is
\begin{equation}
\widehat p=\frac{1+c}{B+1},\qquad \text{resolution}=\frac{1}{B+1}.
\label{eq:supp_null_p}
\end{equation}
The completed-trial count has no confidence interval. A separate two-sided 95\% Clopper--Pearson interval describes the underlying null exceedance probability, using beta quantiles at 0.025 and 0.975 with zero/one endpoints when \(c=0\) or \(c=B\). The interval describes Monte Carlo uncertainty in the exceedance rate, not effect-size uncertainty, an interval for the adjusted \(p\)-value, or uncertainty in the number of completed trials. Holm adjustment applies to the finite estimable permutation probabilities; non-testable targets remain unavailable.

Only \textsc{Random} is estimable in this fixed-count symbolic design, so the Holm family contains one finite probability. The \ResNullCompletedTrials{} completed trials give \(T_{\mathrm{obs}}=\ResNullObservedStatistic\), with effect-size CI \ResNullObservedStatisticCI{}, and adjusted \(p=\ResNullAdjustedP\). The probability is at the \(1/100\) resolution limit, not below it. With one estimable hypothesis, \(p=0.01\) and \(B=99\) imply \(c=0\); the corresponding Clopper--Pearson interval for the null exceedance probability is
\[
[0,\;1-0.025^{1/99}]\approx[0,\;0.0366].
\]
This binomial interval does not replace the calibrated permutation-test decision or describe the effect size.

The null-selected set is also evaluated on the unchanged clean target, but its
clean-score rank is explicitly a diagnostic, not the permutation probability in
Equation~\ref{eq:supp_null_p}. Interpretation depends on the paired exchangeability
null and all stated split/selection rules; rejection does not prove mechanism
uniqueness, universal causal validity, or semantic correctness.

\subsection{Expanded natural-language bridge}
\label{app:supp_nl}

\paragraph{Models, seeds, and domain registry.}
Behavior and causal transfer use Pythia-410M, Pythia-1B, and
GPT-2. The \texttt{paper} profile uses behavioral seeds 1729, 1730, and 1731,
while causal selection uses only seed 1729. The behavioral and causal seed counts describe different repetitions.
The single-model, five-seed bridge in Appendix~\ref{app:nl_bridge} uses a separate configuration.

The source lists 30 ordered state pairs. A shared task seed derived from
\path{1729||shared-state-registry-v4} permutes this list once. Its first four
pairs form selection domains, the next four validation domains, and the next
eight test domains (Table~\ref{tab:supp_domains}). The same split is used for
every model and behavioral seed. It is not selected from model accuracy or
intervention effects. The complete 30-pair list and the deterministic registry
are included in \path{anc/protocol/nl_state_registry.json} and the source.

\begin{table}[htbp]
\centering\small
\caption{Shared state-pair split derived from the supplemental master seed. The
prompt declares the two named states mutually exclusive and exhaustive within
each task; this is a task assumption, not a claim about unrestricted word meaning.}
\label{tab:supp_domains}
\begin{tabularx}{\linewidth}{@{}p{0.19\linewidth}X@{}}
\toprule
Split & Ordered state pairs \\
\midrule
Selection & local/remote; legal/illegal; stable/unstable; known/unknown \\
Validation & inside/outside; positive/negative; enabled/disabled; above/below \\
Test & valid/invalid; safe/unsafe; alive/dead; connected/disconnected;
 hot/cold; approved/rejected; ready/unready; secure/insecure \\
\bottomrule
\end{tabularx}
\end{table}

\paragraph{Balanced report construction.}
For one domain $c$, let $(u_c,v_c)$ be its ordered state pair and let $L_{\mathrm{NL}}=16$ be
the number of reports. For each even count $b$ of reports supporting $u_c$, the prompt contains
$b/2$ direct reports of $u_c$, $b/2$ reports negating $v_c$, $(L_{\mathrm{NL}}-b)/2$ direct
reports of $v_c$, and $(L_{\mathrm{NL}}-b)/2$ reports negating $u_c$. Each state word occurs
eight times \emph{in the evidence region}, with eight direct and eight negative
report forms. The prefix and question also mention the states and are not counted
in that evidence-only statement. The exact sentence forms are in
Table~\ref{tab:supp_templates}; negative evidence for one state supports the other
under the task's two-state assumption.

\begin{table}[htbp]
\centering\small
\caption{Exact supplemental report templates. Here $j$ is the report number,
$u_c$ is the supported state, and $v_c$ is the other state. Exchange the two
states for negative-polarity reports.}
\label{tab:supp_templates}
\begin{tabularx}{\linewidth}{@{}p{0.07\linewidth}XX@{}}
\toprule
Bank & Direct support for $u_c$ & Support for $u_c$ by negating $v_c$ \\
\midrule
A & Report $j$: the item is $u_c$. & Report $j$: the item is not $v_c$. \\
B & Record $j$ describes the item as $u_c$. & Record $j$ rules out the item being $v_c$. \\
C & Witness $j$ says that the item remains $u_c$. & Witness $j$ says that the item is not $v_c$. \\
D & Statement $j$: $u_c$ is the reported condition. & Statement $j$: $v_c$ is not the reported condition. \\
\bottomrule
\end{tabularx}
\end{table}

Banks A, C, and D contain eight occurrences of the literal word \texttt{not} in
the evidence region. Bank B instead has eight \texttt{rules out} constructions;
it does not have eight literal \texttt{not} tokens. Lexical balancing is
\emph{within} a bank, not equality of tokenization between banks. State words may
be multiple tokens; only the \texttt{yes/no} answer strings must be single tokens.
For every registered domain, bank, count, and query polarity, the tokenizer audit
compares actual token multisets using a fixed audit generator. Any failed audit
raises an error rather than dropping that domain. The paired causal batches also
check equal real token length between the matched count conditions. Length checks use unpadded inputs.

The canonical expanded prompt is
\begin{quote}\small
Exactly one state holds: the item is $u_c$ or the item is $v_c$.\\
Read the reports and answer the final question with yes or no.\\
\emph{16 reports in the specified order}\\
Question: Do more reports indicate $u_c$ rather than $v_c$?\\
Answer:
\end{quote}
For reversed query polarity, only the two state positions in the question are
exchanged. The reports are identical for the two queries of a given example.

Report units are shuffled before being ordered by semantic polarity. The four
ordering regimes then use positive-first blocks, negative-first blocks,
positive-first alternating polarity with any excess appended, or a random report
permutation. Assignment and order of direct versus negative forms can vary even
within a non-random semantic-order regime. The count grid is
$\{0,2,4,6,10,12,14,16\}$; the tied case $b=8$ is excluded. Selection and primary accuracy summaries use the $(4,12)$ contrast; the full count grid is analyzed separately. Low/high NL examples are paired by domain, bank, regime, and
replicate. Because the generation seed includes the count, low and high report
permutations need not be identical; only reversed queries share exactly the same
reports. This differs from the nested symbolic count pairing.

\paragraph{Labels, aligned scores, and paired correctness.}
Let $q\in\{+1,-1\}$ be the query orientation and $m(x)=2b-L_{\mathrm{NL}}$ the semantic margin.
The correct label and prediction are
\begin{equation}
a^\star(x,q)=\mathbf1[qm(x)>0],\qquad
\widehat a(x,q)=\mathbf1[\ell_T(\texttt{yes}\mid x,q)>
                                      \ell_T(\texttt{no}\mid x,q)].
\label{eq:supp_nl_labels}
\end{equation}
A logit tie is predicted as \texttt{no}. The semantic readout
$y_{\mathrm{NL}}(x,q)=q[\ell_T(\texttt{yes}\mid x,q)-\ell_T(\texttt{no}\mid x,q)]$
aligns both query orientations toward $u_c$. Attribution uses
$q\langle v_i(x,q),d_{\mathrm{yn}}\rangle$, where $d_{\mathrm{yn}}$ is the
\texttt{yes}-minus-\texttt{no} unembedding direction. Donor means are stratified
by $q$ and pooled over the specified donor domains within each orientation.
Donors from opposite query orientations are kept separate.

Restricted accuracy averages $\mathbf1[\widehat a=a^\star]$ over query rows.
Let $P_+$ be the probability that both queries are correct conditional on $m>0$,
and $P_-$ the corresponding probability conditional on $m<0$. The paired metrics
are $\mathrm{PairAcc}=(P_++P_-)/2$ and
$\mathrm{MinPairAcc}=\min\{P_+,P_-\}$. Always answering according to query
orientation while ignoring the reports can achieve PairAcc $=0.5$ but MinPairAcc
$=0$. Independent fair random answers instead have expected paired correctness
$0.25$. These are different reference behaviors. The report includes both
side-specific scores rather than interpreting all paired metrics as binary
accuracy.

The full-count summaries separately report Spearman association between $m(x)$
and $y_{\mathrm{NL}}(x,q)$ and the ordinary least-squares slope and $R^2$.
In addition, let $p_\theta(w\mid x,q)$ be the model's full-vocabulary probability.
The output records the mean and median of
\begin{equation}
M_{\mathrm{yn}}(x,q)=p_\theta(\texttt{yes}\mid x,q)+p_\theta(\texttt{no}\mid x,q).
\label{eq:supp_candidate_mass}
\end{equation}
This candidate mass is distinct from the sigmoid-normalized two-answer
probability. Good forced-choice accuracy alone does not establish that an
unrestricted generation would choose either answer token.

\paragraph{Sample accounting and donor use.}
Bank A and the four selection domains provide three replicates per count and
query orientation. Thus a single regime has $4\times3\times2=24$ selection rows
per count, or 48 across $(4,12)$, and the four regimes have 192 selection rows per
model and causal seed. Four held-out validation domains use Bank A with the same
replicate count. Validation scores are excluded from head selection. This validation
pair uses its own opposite side for mean donors; it is a held-out-domain diagnostic,
not an independent-calibration evaluation.

Final behavioral evaluation uses eight test domains, Banks B--D, six replicates,
four regimes, eight count levels, and both queries. Per behavioral seed and model,
there are $8\times3\times6\times4\times8\times2=9216$ query rows, or 27,648
across the three behavioral seeds. These are design counts, not counts of independent
observations. Final NL causal evaluation uses the same test domains and Banks B--D
at $(4,12)$: 96 target rows per count and regime/bank. Its separate calibration
bank uses three replicates, giving 48 donor rows per count and regime/bank.
The target and calibration generators use distinct keys for the same domain, bank, and semantic count. This is target-calibrated transfer, not a donor-free
zero-shot intervention.

\begin{table}[htbp]
\centering\small
\caption{Expanded NL sampling units. Rows count questions, including both query
orientations; intervals retain the paired evidence units.}
\label{tab:supp_nl_counts}
\begin{tabularx}{\linewidth}{@{}p{0.29\linewidth}p{0.18\linewidth}X@{}}
\toprule
Use & Domains / banks & Rows and seed convention \\
\midrule
NL selection & 4 / A & 24 per count/regime; causal seed 1729 \\
NL validation & 4 / A & 24 per count/regime; not used to choose heads \\
Behavior, original prompts & 8 / B--D & 9,216 per model/behavioral seed; three seeds \\
Behavior, four shortcut controls & 8 / B--D & 9,216 additional rows per model; main gap, first seed only \\
Causal target & 8 / one of B--D & 96 per count/regime/bank; causal seed 1729 \\
Causal calibration & 8 / matching bank & 48 per count/regime/bank; separate generation key \\
Symbolic yes/no source & Canonical symbolic & 48 / 96 / 48 selection/evaluation/calibration per count/regime \\
\bottomrule
\end{tabularx}
\end{table}

\paragraph{Shortcut controls.}
For the main counts 4 and 12 and behavioral seed 1729, four transformations are applied separately to every test domain/regime/bank condition. The query-only control
removes the reports but keeps the two prefix lines and the question. Reverse-report
order reverses the report lines without renumbering them. Shuffled-report numbers
replace the number in each line by a seeded permutation of $1,\ldots,16$ without
reordering the lines. Reversed-prefix states swaps $u_c$ and $v_c$ only in the
opening two-state declaration, leaving reports and question unchanged. The correct
labels are retained. These controls test different shortcuts; they are not four
independent replications of the main behavior. They add
$8\times3\times4\times2\times2\times6\times4=9216$ query rows per model.

\paragraph{Condition-level behavioral uncertainty.}
The behavior summary preserves model, regime, bank, control type, and semantic gap.
Headline accuracy and paired scores use \((4,12)\); full-count association is
summarized separately. For 1,000 bootstrap replicates, tested state domains are
sampled with replacement, followed by paired-report records within each sampled
domain and each semantic-margin sign. Reversed query orientations remain in the
same resampled record. The 2.5th and 97.5th percentiles define the two-sided 95\%
condition-level interval, conditional on the tested domain/template design.
Query-only controls can create repeated identical final prompts; distinct originating
report IDs do not make those model responses independent.

\paragraph{Model-level point summaries.}
Table~\ref{tab:expanded_nl_slots} aggregates the condition-level point estimates
over ordering regimes and Banks B--D. Model-level CIs are not reported, and no
model-level significance claim is made from these summaries. Condition-level
interval endpoints are not averaged to construct an interval for a model mean;
regimes, banks, and repeated execution rows are not treated as independent
bootstrap replicates. The condition-level transfer-ratio procedure and its
estimability counts are specified below, separately from behavioral uncertainty.

\paragraph{Fixed sets and bidirectional task transfer.}
Within each model/regime, $H_{\mathrm{CEA}}^{\mathrm{yn}}(r)$ contains the first eight greedy heads selected on symbolic CEA with the \texttt{yes/no} readout, and $H_{\mathrm{NL}}(r)$ contains the first eight from Bank-A NL selection.
Both use up to 192 attribution-ranked heads (all heads when fewer are available)
and up to 48 screened candidates; the
budget for causal transfer is fixed at eight regardless of earlier threshold crossing.
The four tested roles are
\[
H_{\mathrm{CEA}}^{\mathrm{yn}}\to\mathrm{CEA},\quad
H_{\mathrm{NL}}\to\mathrm{NL},\quad
H_{\mathrm{CEA}}^{\mathrm{yn}}\to\mathrm{NL},\quad
H_{\mathrm{NL}}\to\mathrm{CEA}.
\]
The reverse NL-to-symbolic target is evaluated once per model/regime; the forward
symbolic-to-NL target is evaluated separately on Banks B, C, and D. No target-bank
reselection is used. There are $3\times4\times(1+3)=48$ configured target comparisons.
Each comparison evaluates the source set and target-specific set, plus two separate
30-draw layer-matched random families, for 62 set-level role/control rows.
The source and target layer-count patterns need not match even though both sets
have eight heads.

For a target task $B$, let $Q_B(H)$ denote the equal-weight normalized score of
set $H$ on that target, and let $\overline Q_B(R_A)$ and $\overline Q_B(R_B)$
denote mean scores of controls matched to the source and target-specific sets,
respectively. The expanded transfer ratio is
\begin{equation}
\mathrm{TR}_{A\to B}=
\frac{Q_B(H_A)-\overline Q_B(R_A)}
{Q_B(H_B)-\overline Q_B(R_B)}.
\label{eq:supp_transfer_ratio}
\end{equation}
Using separate control families distinguishes this protocol from the single-control
arithmetic of the original bridge. The numerator and denominator are recorded
separately. A point ratio is estimable only when both gates hold:
\[
|\widehat\Delta_B|\geq0.01,
\qquad
Q_B(H_B)-\overline Q_B(R_B)>0.05.
\]
Here \(\widehat\Delta_B\) is the empirical clean contrast on target task \(B\),
and \(Q_B(H_B)-\overline Q_B(R_B)\) is the target-selected set's score in excess
of its own layer-matched random control. Otherwise the point ratio is unavailable,
not clipped or set to zero. Ratios outside \([0,1]\) are possible; a ratio is not
a fraction of a uniquely identified circuit.

\paragraph{Condition-level transfer intervals.}
For each estimable comparison, the interval procedure uses 1,000 replicates.
Matched target clusters are resampled, and the draws within each of the two
random-control families are resampled independently. The same target indices
are used for the source, target-selected, and arrays from random sets. Replicates must
pass the same gates for the raw contrast and excess target score. A condition-level interval is
suppressed if more than 5\% of replicates fail either gate; otherwise the 2.5th
and 97.5th percentiles of the stable replicates define the interval. These
intervals condition on fixed selected sets and donor banks, not on repeated
causal selection. A finite point ratio remains in the output when its interval is suppressed.

\paragraph{Transfer coverage and aggregation.}
For each model, forward symbolic-to-NL transfer has \(4\times3=12\)
regime--bank target conditions. Let \(n_{\mathrm{est}}\) be the number of finite
ratios satisfying both gates for estimating a point ratio, and let \(n_{\mathrm{suppr}}\)
be the number of these estimable conditions whose interval is suppressed.
The model-level TR in Table~\ref{tab:expanded_nl_slots} is the unweighted
arithmetic mean of all \(n_{\mathrm{est}}\) finite condition-level ratios,
including those with suppressed intervals. It is not the ratio of separately
averaged numerators and denominators. Conditions without an estimable point
ratio are excluded from the TR mean, not assigned zero.

In Pythia-410M, Pythia-1B, and GPT-2 order,
\(n_{\mathrm{est}}=\ResPythiaSmallTREstimableCount,
\ResPythiaLargeTREstimableCount,\ResGPTTwoTREstimableCount\), and
\(n_{\mathrm{suppr}}=\ResPythiaSmallTRSuppressedCount,
\ResPythiaLargeTRSuppressedCount,\ResGPTTwoTRSuppressedCount\).
Thus \(12-n_{\mathrm{est}}\), the number of unavailable point ratios, is
\ResPythiaSmallTRUnavailableCount{}, \ResPythiaLargeTRUnavailableCount{}, and
\ResGPTTwoTRUnavailableCount{}, respectively. The number of retained
\emph{condition-level} intervals, \(n_{\mathrm{est}}-n_{\mathrm{suppr}}\), is
\ResPythiaSmallTRRetainedCount{}, \ResPythiaLargeTRRetainedCount{}, and
\ResGPTTwoTRRetainedCount{}, respectively. These counts do not provide a
model-level interval or the number of conditions with positive transfer.

The labeled NL analysis records clean and ablated accuracy and their paired difference. CSV differences are fractions and are multiplied by 100 for reporting in percentage points. The expanded causal module evaluates the main gap only, so it does not estimate the ablated full-count slope reduction reported for the BF16 bridge. Fixed-set Jaccard overlap is reported separately from behavioral and intervention transfer.

\paragraph{Interpretation of the expanded aggregate results.}
Table~\ref{tab:expanded_nl_slots} reports accuracy/PairAcc/MinPairAcc of
\ResPythiaSmallAccuracy{}/\ResPythiaSmallPairAccuracy{}/\ResPythiaSmallMinPairAccuracy{}
for Pythia-410M,
\ResPythiaLargeAccuracy{}/\ResPythiaLargePairAccuracy{}/\ResPythiaLargeMinPairAccuracy{}
for Pythia-1B, and
\ResGPTTwoAccuracy{}/\ResGPTTwoPairAccuracy{}/\ResGPTTwoMinPairAccuracy{}
for GPT-2. Accuracy point estimates exceed 0.5 in all three models; GPT-2's
PairAcc point estimate is below the 0.5 reference. No model-level CI is reported,
so these point comparisons do not establish the interval-based behavioral
criteria in Appendix~\ref{app:nl_bridge} or statistical differences between models.

The transferred sets have shrink/match
\ResPythiaSmallTransferShrink{}/\ResPythiaSmallTransferMatch{},
\ResPythiaLargeTransferShrink{}/\ResPythiaLargeTransferMatch{}, and
\ResGPTTwoTransferShrink{}/\ResGPTTwoTransferMatch{}, respectively. None of these
model means meets the original joint symbolic thresholds. Mean TR values
\ResPythiaSmallTransferRatio{}, \ResPythiaLargeTransferRatio{}, and
\ResGPTTwoTransferRatio{} are based on the
\ResPythiaSmallTREstimableCount{}, \ResPythiaLargeTREstimableCount{}, and
\ResGPTTwoTREstimableCount{} estimable target conditions, respectively; unavailable ratios are excluded. The means are consistent with partial intervention
transfer relative to the target-selected reference, but do not establish
positive transfer for every condition, full circuit recovery, or population
significance. A TR is not a percentage of an identified reasoning circuit.

Accuracy decreases of \ResPythiaSmallAccuracyDrop{},
\ResPythiaLargeAccuracyDrop{}, and \ResGPTTwoAccuracyDrop{} percentage points
are paired differences on the causal cohort. Subtracting those decreases from
the accuracy means over three behavioral seeds does not produce matched-cohort ablated
accuracies. These descriptive FP32 results remain separate from the original
BF16 bridge and do not replace its full behavioral validation criteria.

\FloatBarrier
\section{Execution and numerical validation}
\label{app:numeric_execution}

\subsection{Native model path and reference computation}
\label{app:supp_native}

The implementation is divided into the experiment definitions in
\path{anc/code/cea_native_suite.py}, the prefix-cache engine in
\path{anc/code/cea_fast_v9.py}, and numerical routing in
\path{anc/code/cea_numeric_v9.py}. Source checksums in the accompanying protocol identify the implementation.

Models use the native Hugging Face GPT-NeoX/Pythia or GPT-2 implementation,
in evaluation mode without gradients. The output embedding is obtained through
\texttt{get\_output\_embeddings()}, and model-specific projection layouts are
handled explicitly. FP32 inference is retained without autocast, TF32, lower-bit
weight conversion, or model fine-tuning. Attention uses the native eager path.
The reference clean readout for a cached prompt is obtained from a full native
forward pass. Storage can use $-1$ padding to hold unequal-length inputs, but the
model never receives those padding values: inputs are grouped by unpadded length, and outputs are restored to the original row order. Native reference calls use unpadded singleton
inputs where required for the established reference path.

The numerical protocol distinguishes exact mathematical identities from finite
floating-point comparisons. Replacement of head channels before a linear output
projection implements the specified write intervention, but reordering batch
computations or reusing attention caches can change floating-point results.
Native and cached intervention outputs are therefore compared numerically.

\subsection{Causal-prefix caching and independent intervention branches}
\label{app:supp_replay}

All supplemental white-box interventions act at the final prompt position.
Causal attention prevents changes at that position from modifying earlier
positions. For a prompt of length $T$, the engine selects a suffix length $s$ from $\{1,8,32,64\}$, limited to the available context length. A full native prefill
stores the key/value tensors for positions before the suffix. For each head-set evaluation, the engine reuses only the unmodified prefix and recomputes the suffix, including the final token, every affected attention operation, all downstream layers, layer normalizations, and MLP computations. The intervention is still applied only at
the final position. Earlier suffix tokens are recomputed without head replacement.

The prefix tensors are read-only. Each intervention branch receives an independent
cache object, and the selected head mask and donor row are aligned with the
original prompt/branch indices. Multiple head sets for the \emph{same} prompt can
be evaluated in parallel; each distinct prompt has its own prefix and donors.
There is no addition of isolated single-head scores to approximate a set effect.
This optimization does not apply unchanged to an intervention that alters prefix
tokens or earlier positions, such as the original \textsc{Seq} scope experiment.

\begin{table}[htbp]
\centering\small
\caption{Hardware-specific execution defaults. Sample budgets are unchanged. Run metadata and engine configurations record the actual settings; imported measurements retain their original hardware provenance.}
\label{tab:supp_execution}
\begin{tabularx}{\linewidth}{@{}p{0.36\linewidth}XXX@{}}
\toprule
Setting & A100 & L4 & T4 \\
\midrule
Initial intervention branches per prompt & 64 & 32 & 8 \\
Logits-only NL behavior microbatch & 16 & 8 & 2 \\
Model precision & FP32 & FP32 & FP32 \\
\midrule
\multicolumn{4}{@{}l@{}}{Shared execution and validation settings}\\
\midrule
Metric set block & \multicolumn{3}{l}{256 sets}\\
Prefix CPU cache budget & \multicolumn{3}{l}{768 MiB}\\
Suffix candidates & \multicolumn{3}{l}{1, 8, 32, 64 tokens}\\
Actual-pair probe sets & \multicolumn{3}{l}{Up to 3 per registered numerical condition}\\
Replay raw-readout tolerance & \multicolumn{3}{l}{0.002 absolute}\\
Replay normalized-score tolerance & \multicolumn{3}{l}{0.002 absolute}\\
Full-logit guard & \multicolumn{3}{l}{0.01 absolute}\\
Branch-order/isolation tolerance & \multicolumn{3}{l}{0.0004 absolute}\\
Empirical refinement safety factor & \multicolumn{3}{l}{2}\\
Periodic replay recheck & \multicolumn{3}{l}{Every 5,000 replay calls}\\
Minimum warm-prefix speedup gate & \multicolumn{3}{l}{2, unless explicitly overridden and recorded}\\
\bottomrule
\end{tabularx}
\end{table}

Execution microbatches control memory use, not the number of evaluated prompts, candidate sets, or null trials. Hardware-specific defaults are listed in Table~\ref{tab:supp_execution}; run metadata records the actual settings.

For output-only NL behavior, the execution path batches unpadded equal-length
prompts and stores only required final logits, instead of caching every head's
activation. Causal selection and donor construction retain the head activations
they require. A cache of pure zero-ablation outputs can reuse an identical
model/prompt/head-set/readout calculation across label permutations. The raw cache
does not apply query-orientation signs or high/low labels; those are applied later.
Patching caches additionally depend on donor and replacement identities.

\subsection{Validation stages and native candidate refinement}
\label{app:supp_refinement}

\paragraph{Software checks.}
Before the pretrained-model experiments, the notebook
runs synthetic causal-attention regressions, model-interface checks, and a full
small-budget software workflow. These exercise both GPT-NeoX and GPT-2 layouts,
cache isolation, interventions, migration, and resume behavior. Synthetic tests check software behavior and are distinct from pretrained-model measurements.

\paragraph{Pretrained replay checks.}
Each pretrained checkpoint is compared with its full native forward path on symbolic
and NL probe prompts, original and \texttt{yes/no} readouts, and nonempty head
interventions. Candidate suffix lengths and branch sizes are tested before the
main computation. Raw readout differences, full-logit maximum errors, and reversed
branch-order errors use the thresholds in Table~\ref{tab:supp_execution}. The
initial global diagnostic also records normalized score errors for sufficiently
large probe contrasts ($|\widehat\Delta|\geq0.05$). The global diagnostic is followed by checks for each condition on the actual paired data. Every record of raw values includes the compared
readout IDs, baseline, intervention values, denominator, and errors.

\paragraph{Common native baseline.}
Both the replay and reference score calculations use the same native clean
$\bar y_H$ and $\bar y_L$. Only intervention outputs differ in the comparison.
Scores computed with a clean baseline computed with replay are separate diagnostics; reported effect estimates use the native clean baseline. Small raw differences can be amplified
when $|\widehat\Delta|$ is small; passing a full-vocabulary logit check alone is
therefore insufficient for normalized shrink and match.

\paragraph{Condition-specific paired-data checks.}
Each numerical condition is keyed by model, target pair, donor pair, replacement identity, weight, and denominator floor. The validation routine selects up to three evenly spaced nonempty candidate sets from the requested list and compares native and replay per-example readouts, raw contrasts and gains, shrink, both directional matches, their average, and $Q_\lambda$. The compared arrays and errors are stored. The maximum observed raw
readout error is denoted $e_y$ and the maximum score error $e_Q$. Conditions with
$e_y>0.002$ route their set-score evaluation to native inference. Non-selection
conditions with $e_Q>0.002$ also use native set scores. The same tolerances apply to small-contrast conditions.

\paragraph{Selection boundaries and winners.}
Large candidate lists can use replay to screen, followed by native refinement.
Let $D_a$ be the denominator in Equation~\ref{eq:supp_stabilized_score}. The
empirical readout envelope is
$\eta_y=\max\{10^{-7},2e_y\}$, and the initial score radius for an unrefined
candidate is $2\eta_y/D_a$. Native checks of candidates near the top-$k$ boundary
or the winner update that radius to at least twice any observed replay/native
$Q_\lambda$ discrepancy. Candidate intervals that overlap the membership or
winner boundary trigger more native measurements. Every selected greedy candidate is re-evaluated with the native forward pass. The single-head screening boundary uses $k=48$; greedy
augmentation uses $k=1$. Native comparisons resolve exact ties by the same
coordinate rule.

This refinement rule uses a finite empirical error envelope, not a proved uniform
bound over unmeasured candidates. The error envelope does not establish a uniform ranking-equivalence guarantee beyond the tested conditions. The number of natively rechecked candidates,
actual errors, final radius, and selected winner are logged. Raw-check failures
route the full affected selection evaluation to native computation instead.

\paragraph{Final arrays and intermediate diagnostics.}
A single requested final set and the exported selected-set effect arrays use full
native reference calculations. Multi-set random controls use replay only if their
condition passes the raw and score checks; otherwise their arrays are native.
The scan over all vocabulary coordinates with fixed head sets uses native mean logits, because certifying
many weak readout ratios with one unrelated output axis would be insufficient.
Count-curve intervention calls can use replay with periodic checks of raw readouts. Output records distinguish imported measurements, native reference arrays, replay screening values, and measurements evaluated through other validated execution paths.

\subsection{Compute accounting, task continuation, and numerical cohorts}
\label{app:supp_provenance}

The engine benchmarks full-forward and warm-prefix intervention times on the current GPU before the main computation. The default engine rejects a replay policy with less
than a twofold warm-prefix speedup unless the slow-execution override is explicit.
This benchmark excludes prefix construction, native refinement, and final native
arrays. The benchmark therefore measures local intervention speedup, not end-to-end experiment runtime. Forward counts,
input tokens, native-reference time, task duration, and completed null-trial
latencies are saved separately. These task-level records are distinct from total experiment runtime.

\paragraph{Checkpoint identity.}
A checkpoint is indexed by scientific configuration, source hashes, model and tokenizer revisions, numerical policy, input hashes, readout IDs, selected heads, and donor/replacement identity. Restarts preserve these identities and do not count as new experimental seeds. Hardware or critical package changes require a new numerical comparison before reuse.

\paragraph{Importing a previous numerical cohort.}
A numerical cohort means measurements made with a specified model revision,
precision, hardware, and execution/validation policy. Imported tables and raw arrays retain their original metadata. Before direct reuse, the compatibility check reruns 12 stored baseline cases: four regimes at $k\in\{1,4,12\}$. It compares saved and current per-example clean, ablated, and patched readouts, together with shrink, directional matches, and $Q$. The check covers these baseline cases only.

Direct reuse requires every comparison to have per-example readout error at most
0.002 and score error at most 0.002. If all 12 comparisons agree, compatible
checkpoints can be used for continuation. If score errors exceed the tolerance but errors in raw readouts remain within the migration limit, completed measurements retain their original numerical cohort, while incompatible unfinished tasks are recomputed under the current configuration. A source/current native readout difference greater than 0.01 stops migration. Measurements from different hardware or precision cohorts are not pooled as repetitions of one configuration. The runtime outputs \path{NUMERICAL_COHORTS.json} and \path{table_provenance.csv} record table origins and the corresponding numerical configuration. Synthetic software-test outputs remain separate from pretrained-model measurements.

\subsection{Reproduction files and implementation coverage}
\label{app:numeric_files}
The accompanying package provides the FP32 implementation and software tests in \path{anc/code/}, configuration records and source hashes in \path{anc/protocol/}, and the reported aggregate results in \path{anc/results/}. The continuation notebook requires a compatible experimental snapshot, which is not included. The supplied FP32 implementation does not cover the separate BF16 T1, QK/OV, LAST/SEQ, objective-weight, tail-length, surface-form, input-occlusion, or six-model experiments.

The environment pins \texttt{transformers==5.16.1} and \texttt{tokenizers==0.23.2}. The recorded environment is Python 3.13.15, PyTorch 2.11.0+cu128, CUDA 12.8, NumPy 2.5.3, pandas 2.3.3, SciPy 1.18.1, and Matplotlib 3.11.2. Source hashes and configuration snapshots are in \path{anc/protocol/}. The runtime records the environment and numerical checks for each new execution.

Table~\ref{tab:supp_output_map} lists the condition-level files generated by experiment runs. These output families are distinct from the aggregate files supplied in \path{anc/results/}. The package does not include a complete experiment-output archive: the per-example effect arrays, run-specific head identities and readout registries, numerical-validation and bootstrap outputs, condition-level FP32 tables, individual permutation trials, original BF16 archives, and postprocessor for intervals on model means are not supplied. The shared NL domain registry is included in \path{anc/protocol/nl_state_registry.json}. The supplied \path{anc/results/transfer_availability.json} contains TR coverage counts. Condition-level interval procedures are specified in Appendices~\ref{app:supp_statistics} and~\ref{app:supp_nl}; Table~\ref{tab:expanded_nl_slots} reports model-level point estimates only. Appendix~\ref{app:reported_ci} summarizes the statistical reporting scope.

\begin{table}[htbp]
\centering\small
\caption{Condition-level files generated by the FP32 implementation. Rows represent experimental units or conditions. The package coverage is specified in the accompanying text.}
\label{tab:supp_output_map}
\begin{tabularx}{\linewidth}{@{}p{0.52\linewidth}X@{}}
\toprule
Files in the run's \texttt{tables/} directory & Recorded content \\
\midrule
\path{primary_buc.csv} & Current-cohort reference sets and primary scores \\
\path{operator_audit_summary.csv}, \path{operator_audit_draws.csv} & Operator summaries and Monte Carlo draws \\
\path{t3_local_summary.csv}, \path{t3_local_subsets.csv} & Local conditions and complete subset values \\
\path{t3_greedy_vs_exact.csv}, \path{t3_full_shortlist_sampled.csv} & Selection/evaluation comparisons and larger-pool samples \\
\path{fixed_budget_baselines.csv}, \path{fixed_budget_random_controls.csv} & Five methods and three control families \\
\path{paired_baseline_differences.csv} & Matched method differences and uncertainty \\
\path{fixed_set_count_examples.csv}, \path{fixed_set_count_summary.csv} & Per-example count values and count-mean fits \\
\path{unseen_gap_fixed_set.csv}, \path{fixed_set_cue_transfer.csv} & Fixed-set gap and wording controls \\
\path{fixed_budget_cross_regime_transfer.csv}, \path{order_only_localization.csv} & Equal-$k$ transfer and fixed-count order localization \\
\path{readout_random_scan.csv}, \path{readout_random_summary.csv}, \path{readout_random_full_buc.csv} & Registered-pair scan and two reselection protocols \\
\path{selection_matched_null_trials.csv}, \path{selection_matched_null_summary.csv} & Trial statistics, eligibility, and permutation inference \\
\path{nl_behavior_all_models.csv}, \path{nl_causal_all_models.csv}, \path{nl_transfer_all_models.csv} & Model/domain behavior, causal roles, and transfer ratios \\
\bottomrule
\end{tabularx}
\end{table}

\clearpage
\section{Extended related work}
\label{app:related_boundary}

This appendix expands the comparisons in Section~\ref{sec:related}. Table~\ref{tab:related_boundary} distinguishes the behavioral target from the intervention and selection rule. A method for finding relevant components and an experimental design for testing their reuse address different questions.

\begin{table}[h]
\centering
\small
\caption{Boundary between nearby lines of work. ``Same set'' means that one selected set is evaluated by ablation and both mean-patching directions.}
\label{tab:related_boundary}
\setlength{\tabcolsep}{3.2pt}
\begin{tabularx}{\linewidth}{p{0.20\linewidth}p{0.19\linewidth}p{0.22\linewidth}X}
\toprule
Work line & Model setting & Behavioral target & Causal criterion \\
\midrule
Indirect-object identification and factual recall & Frozen pretrained models & Semantic task behavior & Study-specific interventions \\
Synthetic algorithm studies & Often task-trained models & Designed algorithm & Study-specific mechanism tests \\
Patching, ACDC, EAP & Model- and target-dependent & Researcher-supplied target & General localization machinery \\
CEA/BUC & Frozen pretrained models & Controlled count/order readout; controlled NL bridge & Same set tested by ablation and two-way patching \\
\bottomrule
\end{tabularx}
\end{table}

\subsection{Behavioral targets and controlled evidence}
\label{app:related_targets}

Studies of induction, factual recall, and indirect-object identification connect model outputs to particular internal components \citep{olsson2022induction,geva2021transformer,meng2022locating,wang2023interpretability}. In those settings, the target behavior supplies a task-specific criterion for an explanation. Task-trained symbolic models offer a different starting point: the training task specifies an algorithmic problem, allowing a discovered mechanism to be compared with the computation required to solve it \citep{brinkmann2024mechanistic}. Behavioral studies of structured tasks also examine failures of compositional generalization \citep{dziri2023faith}.

CEA does not train a model to map symbols to answers. Its symbolic contrast changes the count or order of fixed tokens and measures the resulting output response. Count sweeps test extension beyond the selection endpoints; fixed-count order comparisons test ordering dependence. Keeping a head set fixed distinguishes persistence of its effect from the ability to select another set. The symbolic target has no external correctness label, so the labeled NL task provides a separate test of behavioral relevance. Lexical balance and query reversal constrain simple shortcuts but do not eliminate all phrase-based strategies or pretrained associations.

\subsection{Interventions and component selection}
\label{app:related_selection}

Activation patching replaces internal activations to measure their effect on a supplied output metric; path patching restricts the routes through which a replacement can influence that metric \citep{goldowskydill2023path,zhang2024best}. ACDC searches a computational graph using intervention tests, while EAP ranks candidate edges using attribution estimates \citep{conmy2023acdc,syed2023attribution}. Their targets, graph units, and search budgets need not match a head-set comparison.

BUC uses head writes at the final prompt position as its intervention unit and evaluates removal plus recovery in both patching directions on the same set. Its scores include set interactions because interventions are evaluated jointly. The BF16 ActPatch baseline instead ranks single-head high-to-low recovery within a sensitivity shortlist before jointly evaluating the selected union. These comparisons change the ranking objective as well as the search procedure. They therefore demonstrate a measured removal--recovery trade-off, not the independent benefit of greedy search over ranking single heads with the same objective or an end-to-end cost advantage (Appendix~\ref{app:baseline_decomposition_new}). No quantitative result in the paper is attributed to a separate reproduction of EAP or ACDC.

\subsection{Component reuse across tasks and prompts}
\label{app:related_reuse}

\citet{merullo2024reuse} examine shared components in indirect-object identification and the Colored Objects task. Their intervention analysis provides evidence of common computational steps across these tasks. \citet{nainani2024adaptive} test the indirect-object-identification circuit under prompt variants and find reuse with additional input edges. These studies establish that circuit reuse can occur; neither reuse nor failure should be presumed for another experimental setting.

More recently, \citet{devarda2026shared} study whether circuit overlap and loading predict arithmetic accuracy across numeric and verbal formats. Their predictive question differs from CEA's direct measurement of a fixed source set's intervention effects on specified target conditions. CEA makes no claim to predict correctness on arbitrary new tasks.

Our comparisons distinguish changes in count, ordering, wording, output coordinates, and replacement operators. In the labeled bridge, the fixed symbolic set and an NL-selected set are evaluated on the same NL targets at equal head budget. Source selection does not use NL prompts; the expanded FP32 transfer uses target-specific donor calibration. Positive effects support partial reuse under that intervention protocol. Because both tasks use the \texttt{yes/no} readout, the comparison does not isolate shared evidence aggregation from shared answer processing. Likewise, the seven targeted readout pairs and the larger registered scan are different samples, not a paired estimate of a reselection advantage.

\subsection{Faithfulness, localization, and statistical tests}
\label{app:related_faithfulness}

\citet{miller2024faithfulness} show that circuit faithfulness scores can change substantially with ablation methodology. Our operator audit measures that dependence for fixed CEA head sets under removal and two patch directions. It tests sensitivity of the reported effect to the replacement distribution rather than assuming a unique operator-independent circuit.

\citet{shi2024hypothesis} develop statistical tests for behavior preservation, localization, and minimality. Our selection-matched null asks a different question: whether the observed utility is unusual under the specified within-pair label-exchangeability null when the adaptive selection pipeline is repeated. The trial design and eligibility rules are in Appendix~\ref{app:supp_null}. Rejection of that null neither establishes minimality or uniqueness nor assigns semantic correctness to the symbolic response. Causal effects of small sets, hypothesis rejection, and cross-setting reuse remain separate claims.

\subsection{Representations and in-context learning interpretations}
\label{app:related_icl}

Analyses of information routing and residual representations provide context for interpreting writes at the final prompt position \citep{geva2023dissecting,dar2023analyzing,ferrando2024information}. Task-vector and function-vector work studies internal representations of demonstrated functions and their effects in other contexts \citep{hendel2023incontext,todd2024function}. CEA retains the identity of selected heads while allowing their activations to depend on the target inputs and donor distribution. A fixed head set is therefore not a fixed activation vector or a complete description of the earlier computation producing its writes.

Bayesian and gradient-based accounts of in-context learning relate model behavior to learning rules under specified tasks and assumptions \citep{xie2022explanation,garg2022what,akyurek2023what,vonoswald2023transformers,panwar2024bayesianprism,bai2023transformers,ahn2023transformers,li2023transformers}. The exact two-token log-odds identity in CEA supplies a scalar measurement; it does not identify a Bayesian update or optimization algorithm. Our theory instead states conditions for attribution fidelity, shortlist recovery, and greedy utility bounds. The finite empirical audits do not establish all those conditions for the implemented normalized objective.

\paragraph{Remaining mechanism questions.}
The evidence leaves open whether NL transfer reflects shared count/order processing, shared answer readout, or both; which computations at earlier tokens and along head--MLP paths supply the writes at the final prompt position; and whether the same components remain effective across pretraining checkpoints or longer heterogeneous evidence. These questions require interventions beyond the reported tests with fixed head sets.

\clearpage
\section{FP32 confidence intervals and statistical scope}
\label{app:reported_ci}

\paragraph{Intervals and their sampling units.}
Table~\ref{tab:expanded_nl_slots} gives descriptive model means and transfer-coverage counts, not model-level CIs. The condition-level procedures in Appendix~\ref{app:supp_nl} do not supply intervals for those means. The two bootstrap intervals in Table~\ref{tab:reported_ci_other} use matched clusters as specified in Appendix~\ref{app:supp_statistics}, with head sets and donor banks fixed. Behavioral and causal-selection seeds remain separate; no simultaneous coverage or inference across independently pretrained models is claimed.

\begin{table}[htbp]
\centering\small
\caption{FP32 outcomes outside Tables~\ref{tab:expanded_nl_slots} and~\ref{tab:audit_slots}. A dash indicates that no CI is reported; it does not denote zero uncertainty. Only the RANDOM baseline difference and raw null-test utility receive bootstrap intervals here.}
\label{tab:reported_ci_other}
\setlength{\tabcolsep}{4.0pt}
\begin{tabular}{@{}lcc@{}}
\toprule
Quantity & Point estimate or span & 95\% bootstrap CI \\
\midrule
BUC $-$ sensitivity, \textsc{Front} & \ResBaselineFrontDifference{} & -- \\
BUC $-$ sensitivity, \textsc{Back} & \ResBaselineBackDifference{} & -- \\
BUC $-$ sensitivity, \textsc{Interleave} & \ResBaselineInterleaveDifference{} & -- \\
BUC $-$ sensitivity, \textsc{Random} & \ResBaselineRandomDifference{} & \ResBaselineRandomDifferenceCI{} \\
Off-diagonal match: observed span & \ResEqualBudgetTransferRange{} & -- \\
Cue-change shrink: observed span & \ResCueShrinkRange{} & -- \\
Cue-change match: observed span & \ResCueMatchRange{} & -- \\
Evaluable conditions, of 1024 & \ResReadoutEvaluableCount{} & -- \\
Pass fraction: all registered & \ResReadoutAllPassRate{} & -- \\
Pass fraction: evaluable & \ResReadoutEvaluablePassRate{} & -- \\
Completed permutation trials & \ResNullCompletedTrials{} & -- \\
Observed raw null-test utility & \ResNullObservedStatistic{} & \ResNullObservedStatisticCI{} \\
Holm-adjusted permutation probability & \ResNullAdjustedP{} & -- \\
\bottomrule
\end{tabular}
\end{table}

\paragraph{Deterministic conditions.}
At each fixed count, \textsc{Front}, \textsc{Back}, and \textsc{Interleave} contain one unique symbolic input. Repeated copies of one matched pair provide no independent prompt variation. Consequently, these conditions receive no prompt-bootstrap CI. Their point estimates describe the specified inputs, not uncertainty over a richer prompt population. The \textsc{Random} difference uses distinct matched permutations, and its interval in Table~\ref{tab:reported_ci_other} excludes zero.

\paragraph{Finite-design summaries.}
Off-diagonal and cue spans describe 12 source--target and eight regime--wording conditions, respectively. The observed counts of 960 evaluable conditions and 99 completed null trials are fixed; their values determine the denominators. The passing fractions summarize the registered readout scan without a population CI over alternative registries. Table~\ref{tab:audit_slots} likewise gives finite operator minima, negative-marginal fractions, ratio minima, and local selection regrets without population intervals. The summaries describe the tested conditions, not a population guarantee or a bound for the full pool.

\paragraph{Permutation uncertainty.}
With \(B=99\) completed trials and one estimable hypothesis, \(\widehat p=0.01\) implies zero null exceedances. The separate 95\% Clopper--Pearson interval for the null exceedance probability is \([0,1-0.025^{1/99}]\approx[0,0.0366]\). It is not a bootstrap interval, an effect-size interval, an adjusted-\(p\) interval, or a trial-count interval (Appendix~\ref{app:supp_null}).

\paragraph{NL interpretation.}
The three model-level mean TR values are between zero and one, and the
observed accuracy decreases are positive. These are point estimates, not
model-level significance tests. The TR means use 11, 12, and 9 estimable
conditions in Pythia-410M, Pythia-1B, and GPT-2, respectively; their 1, 0,
and 3 suppressed condition-level intervals do not remove the corresponding
finite ratios from the means. Retained condition-level intervals are not
model-level CIs. Neither the positive means nor the estimability counts
establish every paired-behavior criterion, positive transfer in all target
conditions, complete circuit recovery, or differences between models.

\paragraph{BF16 bridge.}
The BF16 tables supply execution means and standard deviations, not the prompt-pair bootstrap intervals required by the declared criterion. Neither those summaries nor the separate FP32 procedures establish the complete BF16 criterion (Appendix~\ref{app:nl_bridge}).

\end{document}